\def\eqref#1{equation~\ref{#1}}
\def\1{\bm{1}}
\DeclareMathAlphabet{\mathsfit}{\encodingdefault}{\sfdefault}{m}{sl}
\SetMathAlphabet{\mathsfit}{bold}{\encodingdefault}{\sfdefault}{bx}{n}
\newcommand{\KL}{D_{\mathrm{KL}}}
\theoremstyle{plain}
\newtheorem{theorem}{Theorem}
\theoremstyle{claim}
\theoremstyle{fact}
\newtheorem{fact}{Fact}
\theoremstyle{definition}
\newtheorem{definition}{Definition}
\theoremstyle{lemma}
\newtheorem{lemma}{Lemma}
\theoremstyle{proper}
\newtheorem{proper}{Property}
\newtheorem{assumption}[theorem]{Assumption}
\theoremstyle{remark}
\newcommand{\m}[1]{\mathbf{#1}}
\renewcommand{\hat}{\widehat}
\def\T{{ \mathrm{\scriptscriptstyle T} }} 
\newcommand{\tr}{\mathop{\mathrm{tr}}}
\renewcommand{\triangleq}{:=}
\def\##1\#{\begin{align}#1\end{align}}
\def\$ #1\${\begin{align*}#1\end{align*}}
\title{Rethinking The Uniformity Metric in Self-Supervised Learning}
\author{Xianghong Fang\\
The Chinese University of Hong Kong, Shenzhen\\
\texttt{fangxianghong2@gmail.com}\\
\And 
Jian Li\\
Tencent AI Lab \\
\texttt{lijianjack@gmail.com}\\
\AND
Qiang Sun\footnotemark[1]\\
University of Toronto \& MBZUAI \\
\texttt{qsunstats@gmail.com} \\
\And
Benyou Wang\thanks{Qiang Sun and Benyou Wang are joint corresponding authors.} \\
The Chinese University of Hong Kong, Shenzhen \& SRIBD \\
\texttt{wangbenyou@cuhk.edu.cn}\\
}
\begin{document}

\maketitle

\begin{abstract}

Uniformity plays an important role in evaluating learned representations, providing insights into self-supervised learning. In our quest for effective uniformity metrics, we pinpoint four principled properties that such metrics should possess. Namely, an effective uniformity metric should remain invariant to instance permutations and sample replications while accurately capturing feature redundancy and dimensional collapse. Surprisingly, we find that the uniformity metric proposed by \citet{Wang2020UnderstandingCR} fails to satisfy the majority of these properties. Specifically, their metric is sensitive to sample replications, and can not account for feature redundancy and  dimensional collapse correctly. To overcome these limitations, we introduce a new uniformity metric based on the Wasserstein distance, which satisfies all the aforementioned properties. Integrating this new metric in existing self-supervised learning methods effectively mitigates dimensional collapse and consistently improves their performance on downstream tasks involving CIFAR-10 and CIFAR-100 datasets. Code is available at \url{https://github.com/statsle/WassersteinSSL}.

\end{abstract}

\doparttoc 
\faketableofcontents 
\part{} 
\vspace{-45pt}

\section{Introduction}
\label{sec:introduction}

Self-supervised learning excels in acquiring invariant representations to various augmentations~\citep{Chen2020ASF,He2020MomentumCF,Caron2020UnsupervisedLO,Grill2020BootstrapYO,Zbontar2021BarlowTS}. It has been outstandingly successful across a wide range of domains, such as multimodality learning, object detection, and segmentation~\citep{Radford2021LearningTV,li2022supervision,Xie2021DetCoUC,Wang2021DenseCL,Yang2021InstanceLF,Zhao2021ContrastiveLF}. To gain a deeper understanding of self-supervised learning, thoroughly evaluating the learned representations is necessary~\citep{Wang2020UnderstandingCR,Gao2021SimCSESC,Tian2021UnderstandingSL,Jing2021UnderstandingDC}.

\begin{wrapfigure}[11]{r}{0.42\textwidth}
\small
\vspace{-6mm}
\begin{center}
\includegraphics[width=0.40\textwidth]{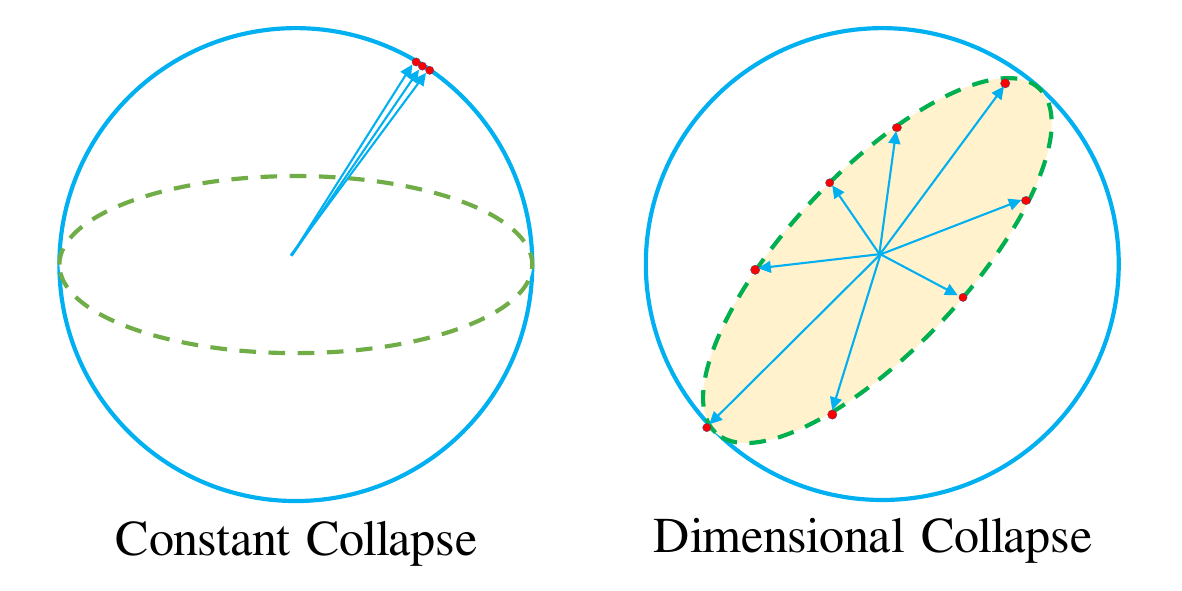}
\vspace{-2mm}
\caption{The left figure presents constant collapse, and the right figure visualizes dimensional collapse.}
\label{fig:two collapse types}
\end{center}
\end{wrapfigure} 

Alignment, a metric quantifying the similarities between positive pairs, holds significant importance in the evaluation of learned representations~\citep{Wang2020UnderstandingCR}. It ensures that positive pairs are mapped to similar features,  making them invariant  to unnecessary details~\citep{Hadsell2006DimensionalityRB,Chen2020ASF}. However, relying solely on alignment proves inadequate for effectively assessing the representations.
This limitation becomes evident in the presence of extremely small alignment values in collapsing solutions, as observed in Siamese networks~\citep{Hadsell2006DimensionalityRB}, where all outputs collapse to a single point~\citep{Chen2021ExploringSS}, as illustrated in Figure~\ref{fig:two collapse types}. In such cases, the learned representations exhibit optimal alignment but fail to provide meaningful information for any downstream tasks. This underscores the necessity of incorporating additional metrics  when evaluating learned representations.

To further evaluate the learned representations, 
\citet{Wang2020UnderstandingCR} formally introduced a \emph{uniformity} metric based on the logarithm of the average pairwise Gaussian potential~\citep{Cohn2006UniversallyOD}. 
Uniformity assesses how feature embeddings are distributed uniformly across the unit hypersphere, and higher uniformity indicates more information from the data is preserved. Since its introduction, uniformity has played a pivotal role in understanding  self-supervised learning and mitigating constant collapse~\citep{Arora2019ATA,Wang2020UnderstandingCR,Gao2021SimCSESC}. Nevertheless, the effectiveness of this particular uniformity metric warrants further examination.

To delve deeper into the existing uniformity metric proposed by \cite{Wang2020UnderstandingCR}, we introduce four principled  properties that an effective uniformity metric should possess. Guided by these properties, we conduct a theoretical analysis, unveiling  key limitations of this metric, particularly its inability to capture feature redundancy and dimensional collapse~\citep{hua2021feature}. Dimensional collapse refers to the scenario where representations occupy a lower-dimensional subspace rather than the entire embedding space \citep{Jing2021UnderstandingDC}; see Figure~\ref{fig:two collapse types}.  We reinforce our theoretical findings with empirical evidence, demonstrating, for instance, the existing metric's inability to differentiate between different degrees of dimensional collapse. Subsequently, we propose a novel uniformity metric based on the quadratic Wasserstein distance that satisfies  all four properties, thereby surpassing the existing one. Finally, integrating  the proposed uniformity metric as an auxiliary loss within existing self-supervised learning methods consistently enhances their performance in downstream tasks.

Our main contributions are summarized as follows. 
(i) We identify four principled properties that an effective uniformity metric should possess, providing new guidelines on designing such metrics. (ii) Surprisingly, we find that the existing uniformity metric \citep{Wang2020UnderstandingCR} fails to meet the majority of these properties. For example, it can not correctly capture dimensional collapse. 
(iii) We propose a new uniformity metric based on the Wasserstein distance that satisfies all four properties, addressing key limitations of the existing metric.
(iv) Our proposed uniformity metric can seamlessly integrate as an auxiliary loss in various self-supervised learning methods, resulting in improved performance in downstream tasks.

\section{Background}
\label{sec:background} 

\subsection{Self-Supervised Representation Learning}\label{sec:ssrl}


Self-supervised learning  leverages the idea  that similar samples should have similar representations that are  invariant to unnecessary details \citep{Wang2020UnderstandingCR}.  For instance, the Siamese network~\citep{Hadsell2006DimensionalityRB} takes as input  positive pairs $(\m x^a, \m x^b)$, often obtained by taking two augmented views of the same sample $\m x$. These positive pairs are then  processed by an encoder network $f$ consisting of a backbone (e.g., ResNet~\citep{He2016DeepRL}) and a projection MLP head~\citep{Chen2020ASF}, yielding representations $(\m z^a = f(\m x^a), \m z^b = f(\m x^b)$\footnote{For simplicity, we also refer to $(\m z^a, \m z^b)$ as positive pairs.}.  To enforce invariance, a natural approach  is to minimize the following alignment loss, defined as the expected distance between positive pairs: 
\begin{equation}\label{eq:align loss}
\mathcal{L}_{\mathcal{A}}
:= \mathbb{E}_{(\m z^a, \m z^b)\sim p_{\textrm{pos}}} \left\Vert {\m z^a_i} - {\m z^b_i}\right\Vert_2^{2},
\end{equation}
where $p_{\textrm{pos}}(\cdot, \cdot)$ is the distribution of positive pairs. 

However, optimizing the above alignment loss alone may lead to an  undesired collapsing solution,  where all representations collapse into a single point, as shown in Figure~\ref{fig:two collapse types}.

\subsection{Existing Solutions to Constant Collapse}
\label{sec:solutions}

To prevent  constant collapse, existing  solutions include contrastive learning, asymmetric model architecture, and redundancy reduction.

\paragraph{Contrastive Learning} 
Contrastive learning offers a potent solution to  mitigate constant collapse. The key idea is to leverage negative pairs.  For example, SimCLR~\citep{Chen2020ASF}  introduced an in-batch negative sampling strategy that utilizes samples within a batch as negative samples. However, its effectiveness is contingent on the use of a large batch size. To address this limitation, MoCo~\citep{He2020MomentumCF} used a memory bank, which stores additional representations as negative samples. 
Recent research endeavors have also explored clustering-based contrastive learning, which combines a clustering objective with contrastive learning techniques~\citep {Li2021PrototypicalCL,Caron2020UnsupervisedLO}.

\paragraph{Asymmetric Model Architecture}
The use of asymmetric model architecture represents another strategy  to combat constant collapse. One plausible explanation for its effectiveness is that such an asymmetric design encourages encoding more information~\citep {Grill2020BootstrapYO}.
To maintain this asymmetry, BYOL~\citep{Grill2020BootstrapYO} introduces the concept of using an additional predictor in one branch of the Siamese network while employing momentum updates and stop-gradient operators in the other branch.  DINO~\citep{Caron2021EmergingPI}, takes this asymmetry a step further by applying it to two encoders, distilling knowledge from the momentum encoder into the other one~\citep{Hinton2015DistillingTK}.  SimSiam~\citep{Chen2021ExploringSS}  removes the momentum update from BYOL, and shows that the momentum update may not be essential in preventing constant collapse. However, Mirror-SimSiam~\citep{Zhang2022HowDS} swaps the stop-gradient operator to the other branch. Its failure challenges the assertion made in SimSiam~\citep{Chen2021ExploringSS} that the stop-gradient operator is the key component for preventing constant collapse.  \citet{Tian2021UnderstandingSL} provides a theoretical examination to elucidate why an asymmetric model architecture can effectively avoid constant collapse.

\paragraph{Redundancy Reduction}
The fundamental principle behind redundancy reduction to mitigate constant collapse is to maximize the information preserved by the representations. The key idea is to decorrelate the learned representations.  
Barlow Twins~\citep{Zbontar2021BarlowTS} aims to achieve decorrelation by focusing on the cross-correlation matrix, while VICReg~\citep{Bardes2021VICRegVR} focuses on the covariance matrix. 
Zero-CL~\citep{zhang2022zerocl} takes a hybrid approach, combining instance-wise and feature-wise whitening techniques.

\subsection{The existing uniformity metric}
\label{sec:collapse analysis}

While the aforementioned solutions effectively prevent constant collapse, they are not as effective in preventing dimensional collapse,  wherein  representations occupy a lower-dimensional subspace instead of the entire  space. 
This phenomenon has been  observed in contrastive learning by visualizing the singular value spectra of representations~\citep{Jing2021UnderstandingDC,Tian2021UnderstandingSL}.

To quantitatively measure the degree of collapse, \cite{Wang2020UnderstandingCR} introduced  a  uniformity loss based on the logarithm of the average pairwise Gaussian potential. Given (normalized) feature representations $\{\m z_1, \m z_2, ..., \m z_n\}$, their proposed empirical uniformity loss is:
\begin{equation}\label{eq:uniform loss}
\mathcal{L_U} \triangleq \log \frac{1}{n(n-1)/2} \sum_{i=2}^{n} \sum_{j=1}^{i-1} e^{-t \left\Vert {\m z_i} - {\m z_j}\right\Vert_2^{2}},
\end{equation}
where $t>0$ is a fixed parameter, often set to $2$. Then $-\mathcal{L}_{\mathcal{U}}$ serves as the corresponding uniformity metric, with a higher value indicating greater uniformity.

We demonstrate in this work that this metric is insensitive to dimensional collapse, both theoretically in Section~\ref{sec:theoretical analysis} and empirically in Section~\ref{sec:empirical analysis}.

\section{What makes an effective uniformity metric?}\label{sec:criteria}

In this section, we begin by presenting four fundamental properties that an effective uniformity metric should possess. Leveraging these properties as a lens, we then scrutinize the existing uniformity metric $-\mathcal{L_U}$, shedding light on its limitations.

\subsection{Four Properties for Uniformity}\label{sec:desiderata}

A uniformity metric $\mathcal{U}: {\mathbb{R}^{m}} ^n \rightarrow \mathbb{R}$ is a function that maps a set of learned representations to a scalar indicator of uniformity. In the following section, we introduce four principled properties that  an effective uniformity metric should possess. Let $\mathcal{D}= {\mathbf{z}_1, \ldots, \mathbf{z}_n} \in {\mathbb{R}^{m}} ^n$ represent the learned representations. To avoid the trivial case, we assume that $\mathbf{z}_1, \ldots, \mathbf{z}_n$ are not all equal, meaning that not all points collapse to a single constant point.

First, an effective uniformity metric should be invariant to the permutation of instances, as the distribution of representations should  not  be affected by permutations.

\begin{proper}[Instance Permutation Constraint (IPC)]\label{pro:ipc}

An effective uniformity metric $\mathcal{U}$ should satisfy 
\begin{equation}\label{eq:ipc}
\mathcal{U}(\pi (\mathcal{D})) = \mathcal{U}(\mathcal{D}),
\end{equation}
where $\pi$ is a permutation over the instances.

\end{proper}

Second, an effective uniformity metric should be invariant to instance clones, as instance cloning does not vary the distribution of representations.

\begin{proper}[Instance Cloning Constraint (ICC)]
\label{pro:icc}
An effective uniformity metric $\mathcal{U}$ should satisfy
\begin{equation}\label{eq:icc}
\mathcal{U}(\mathcal{D} \uplus \mathcal{D}) = \mathcal{U}(\mathcal{D}), 
\end{equation}
{where  $\mathcal{D} \uplus \mathcal{D} := \{\m z_1, \m z_2, ..., \m z_n, \m z_1, \m z_2, ..., \m z_n \}$.}
\end{proper}

Third, an effective  uniformity metric should strictly decrease as feature-level cloning for each instance occurs, as this duplication introduces redundancy, which corresponds to dimensional collapse~\citep{Zbontar2021BarlowTS,Bardes2021VICRegVR}.

\begin{proper}[Feature Cloning Constraint (FCC)]\label{pro:fcc}
An effective uniformity metric $\mathcal{U}$ should satisfy
\begin{equation}\label{eq:fcc}
\mathcal{U}(\mathcal{D} \oplus \mathcal{D})  < \mathcal{U}(\mathcal{D}),
\end{equation}
where $\mathcal{D} \oplus \mathcal{D} := \{\m z_1 \oplus \m z_1, \m z_2 \oplus \m z_2, ..., \m z_n \oplus \m z_n\}$ and  $\m z_i \oplus \m z_i : =(z_{i1}, \cdots, z_{im}, z_{i1}, \cdots, z_{im})^{\T} \in \mathbb{R}^{2m}$.
\end{proper}

Fourth, an effective uniformity metric should strictly decrease with the addition of constant features for each instance, as this introduces uninformative and thus redundant features, which again corresponds to dimensional collapse.

\begin{proper}[Feature Baby Constraint (FBC)]\label{pro:fbc}
An effective uniformity metric $\mathcal{U}$ should satisfy
\begin{equation}
\label{eq:fbc}
\mathcal{U}(\mathcal{D} \oplus \m 0^{k})  < \mathcal{U}(\mathcal{D}), \quad k \in \mathbb{N}^+,
\end{equation}
where $\oplus$ is defined in Property \ref{pro:fcc}, that is, $\mathcal{D} \oplus \m 0^{k} = \{\m z_1 \oplus \m 0^{k}, \m z_2 \oplus \m 0^{k}, ..., \m z_n \oplus \m 0^{k}\}$ and $\m z_i \oplus \m 0^{k}=(z_{i1}, z_{i2}, ..., z_{im}, 0, 0, ..., 0)^{\T} \in \mathbb{R}^{m+k}$.
\end{proper}

Intuitively, Properties \ref{pro:ipc} and \ref{pro:icc}  ensure that the uniformity metric should remain insensitive to instance permutations and sample replications, respectively. Meanwhile, Properties \ref{pro:fcc} and \ref{pro:fbc} ensure that feature redundancy and dimensional collapse reduce the uniformity metric, as they make the distribution of the representations less uniform. These four properties constitute intuitive yet principled characteristics of an effective uniformity metric.

\subsection{Examining the uniformity metric $-\mathcal{L_U}$}\label{sec:theoretical analysis}

We employ the four properties introduced earlier to analyze the uniformity metric $-\mathcal{L_U}$ defined in Eqn.~(\ref{eq:uniform loss}). The following theorem summarizes our findings.

\begin{theorem}\label{theorem:lu}
The uniformity metric  $-\mathcal{L_U}$ satisfies {Property~\ref{pro:ipc}}, but  violates {Properties~\ref{pro:icc}}, {\ref{pro:fcc}}, and {\ref{pro:fbc}}.
\end{theorem}

The proof of the above theorem is provided in Appendix~\ref{appendix: claim}. The violation of Property \ref{pro:icc} indicates that the uniformity metric $-\mathcal{L_U}$ is sensitive to sample replications, while the violations of Properties~\ref{pro:fcc} and {\ref{pro:fbc}} suggest that feature redundancy and dimensional collapse do not reduce the uniformity metric $-\mathcal{L_U}$, making this uniformity metric unable to correctly reflect feature redundancy and dimensional collapse. Therefore, there is a pressing need to develop a new uniformity metric.

\section{A New Uniformity Metric}\label{sec:method}

In this section, we introduce a new uniformity metric to address the limitations of $-\mathcal{L}_{\mathcal{U}}$.

\subsection{The uniform spherical distribution  and an approximation} \label{sec:zero-mean isotropic Gaussian distribution}

As pointed out by \citep{Wang2020UnderstandingCR}, feature vectors should be roughly uniformly distributed  on the unit hypersphere $\mathcal{S}^{m-1}$, preserving  as much information of the data as possible. Therefore, we adopt the uniform spherical distribution as our target distribution.

Our approach utilizes the quadratic Wasserstein distance, a form of statistical distance, between the feature distribution and the target distribution as the new uniformity loss. However, computing any statistical distances involving the uniform spherical distribution can be challenging. To address this, we  first establish an asymptotic equivalence between the uniform spherical distribution and the isotropic Gaussian distribution. By adopting a Gaussian distribution for the representations, we then exploit the fact that the quadratic Wasserstein distance between two Gaussian distributions has a closed form involving only the means and covariance matrices, leading to a new and simple uniformity loss. We need the following fact.

\begin{fact}\label{proof:maximum uniformity}
If $ \m Z \sim \mathcal{N}(\m 0, \sigma^2 \m I_m)$, then $\m Y:=\m Z/\Vert \m Z \Vert_2$ is  uniformly distributed on the unit  hypersphere $\mathcal{S}^{m-1}$.
\end{fact}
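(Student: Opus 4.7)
The plan is to exploit the rotational invariance of the isotropic Gaussian law, together with the uniqueness of the rotation-invariant probability measure on the sphere. Since the covariance is a scalar multiple of the identity, the density of $\mathbf{Z}$ depends on $\mathbf{z}$ only through $\|\mathbf{z}\|_2$. Consequently, for every orthogonal matrix $\mathbf{Q}\in O(m)$, the random vector $\mathbf{Q}\mathbf{Z}$ has the same distribution as $\mathbf{Z}$. I would state this rotational invariance as the first step, noting also that $\mathbf{Z}\neq \mathbf{0}$ almost surely, so $\mathbf{Y}=\mathbf{Z}/\|\mathbf{Z}\|_2$ is well-defined and takes values in $\mathcal{S}^{m-1}$ almost surely.

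Next I would transfer this invariance from $\mathbf{Z}$ to $\mathbf{Y}$. Because $\|\mathbf{Q}\mathbf{Z}\|_2=\|\mathbf{Z}\|_2$ for any $\mathbf{Q}\in O(m)$, the chain of equalities
$$\mathbf{Q}\mathbf{Y} \;=\; \frac{\mathbf{Q}\mathbf{Z}}{\|\mathbf{Q}\mathbf{Z}\|_2} \;\stackrel{d}{=}\; \frac{\mathbf{Z}}{\|\mathbf{Z}\|_2} \;=\; \mathbf{Y}$$
shows that the law of $\mathbf{Y}$ is invariant under the full orthogonal group. The final step is to invoke the classical fact that the uniform distribution $\unif(\mathcal{S}^{m-1})$ is the unique Borel probability measure on $\mathcal{S}^{m-1}$ that is invariant under $O(m)$ (the normalized surface Haar measure); hence the law of $\mathbf{Y}$ must equal $\unif(\mathcal{S}^{m-1})$.

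The main obstacle, or at least the only non-trivial ingredient, is the appeal to uniqueness of the Haar measure on the sphere. To keep the argument self-contained, I would sketch the standard direct computation as a fallback: introduce polar coordinates $\mathbf{Z}=R\mathbf{Y}$ with $R=\|\mathbf{Z}\|_2$, perform the change of variables so that the Gaussian density factors as a product of a radial density in $R$ (a $\chi_m$-type density scaled by $\sigma$) and a constant density in the angular variable with respect to the surface measure on $\mathcal{S}^{m-1}$. This factorization simultaneously yields independence of $R$ and $\mathbf{Y}$ and identifies the marginal of $\mathbf{Y}$ as the normalized surface measure, which is exactly $\unif(\mathcal{S}^{m-1})$. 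Either route gives the fact, and the invariance argument is the cleaner of the two.
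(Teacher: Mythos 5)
Your argument follows the same route as the paper's: establish rotational invariance of $\mathbf{Z}$, observe $\|\mathbf{Q}\mathbf{Z}\|_2=\|\mathbf{Z}\|_2$ so the invariance transfers to $\mathbf{Y}$, and conclude uniformity. The only cosmetic difference is the last step, where the paper closes with an ad hoc contradiction argument on continuous densities while you invoke the uniqueness of the $O(m)$-invariant probability measure on $\mathcal{S}^{m-1}$ (with the polar-coordinate factorization as a fallback), which is the cleaner and more standard way to finish.
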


Because  the average length of $\|\m Z\|_2$ is roughly $\sigma \sqrt{m}$ \citep{chandrasekaran2012convex}, that is,
\begin{align}\label{eq:align loss}
\frac{m}{\sqrt{m+1}} \leq  \Vert \m Z \Vert_2/\sigma \leq \sqrt{m}, \nonumber
\end{align}
we expect that $\m Z/ (\sigma \sqrt{m})\sim \mathcal{N}(\m 0,  \m I_m/m)$ provides a reasonable approximation to $\m Z/ \|\m Z\|_2$, and thus to the uniform spherical distribution.  This is  partially justified by the following theorem.

\begin{theorem}
\label{proof:the kl divergence}
Let $Y_i$ be the $i$-th coordinate of $\m Y=\m Z/\|\m Z \|_2\in \mathbb{R}^m$,  where $\m Z \sim \mathcal{N}(\m 0, \sigma^2 \m I_m)$. Then the quadratic Wasserstein
distance between $Y_i$ and  $\hat{Y}_i \sim \mathcal{N}(0, 1/m)$ converges to zero as $m \to \infty$, that is,
\begin{equation}\label{eq:limit}
\nonumber \lim_{m \to \infty} \mathcal{W}_2(Y_i, \hat Y_i)  = 0.
\end{equation}
\end{theorem}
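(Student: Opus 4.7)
The plan is to exhibit a single explicit coupling of $Y_i$ and $\hat Y_i$ and then bound the resulting $L^2$ transport cost, since $\mathcal{W}_2^2(Y_i,\hat Y_i)$ is the infimum of $\mathbb{E}[(Y_i-\hat Y_i)^2]$ over couplings. First I would use the same Gaussian vector $\mathbf{Z}\sim\mathcal{N}(\mathbf{0},\sigma^2\mathbf{I}_m)$ that defines $\mathbf{Y}=\mathbf{Z}/\|\mathbf{Z}\|_2$ to build $\hat Y_i:=Z_i/(\sigma\sqrt{m})$. Since $Z_i\sim\mathcal{N}(0,\sigma^2)$, this auxiliary variable has exactly the target marginal $\mathcal{N}(0,1/m)$, and by spherical symmetry the distribution of $Y_i$ is the same for all $i$, so it suffices to work with this coordinate.

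Next I would rewrite the difference by factoring out $Z_i/\sigma$. Setting $R:=\|\mathbf{Z}\|_2/\sigma$, one has
\[
Y_i-\hat Y_i \;=\; \frac{Z_i}{\sigma}\left(\frac{1}{R}-\frac{1}{\sqrt{m}}\right) \;=\; \frac{Z_i}{\sigma}\cdot\frac{\sqrt{m}-R}{R\sqrt{m}}.
\]
Squaring and taking expectations,
\[
\mathbb{E}[(Y_i-\hat Y_i)^2] \;=\; \frac{1}{m}\,\mathbb{E}\!\left[\frac{Z_i^2}{\sigma^2 R^2}\,(\sqrt{m}-R)^2\right].
\]
The key simplification is that $Z_i^2/(\sigma^2 R^2)=Z_i^2/\|\mathbf{Z}\|_2^2\in[0,1]$ almost surely, so the integrand is dominated deterministically and
\[
\mathcal{W}_2^2(Y_i,\hat Y_i)\;\leq\;\mathbb{E}[(Y_i-\hat Y_i)^2]\;\leq\;\frac{1}{m}\,\mathbb{E}[(\sqrt{m}-R)^2].
\]

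The remaining step is to show that $\mathbb{E}[(\sqrt{m}-R)^2]$ stays bounded as $m\to\infty$, which is where the main technical content sits. Since $R^2\sim\chi^2_m$ with mean $m$ and variance $2m$, I would write $\mathbb{E}[(\sqrt{m}-R)^2]=\mathrm{Var}(R)+(\sqrt{m}-\mathbb{E}[R])^2$ and invoke the standard asymptotics for the chi distribution, namely $\mathbb{E}[R]=\sqrt{2}\,\Gamma((m+1)/2)/\Gamma(m/2)=\sqrt{m}\,(1-1/(4m)+O(m^{-2}))$ and $\mathrm{Var}(R)\to 1/2$. Together these give $\mathbb{E}[(\sqrt{m}-R)^2]\to 1/2$, hence $\mathcal{W}_2^2(Y_i,\hat Y_i)=O(1/m)\to 0$, proving the claim (and in fact yielding an explicit $O(m^{-1/2})$ rate).

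The main obstacle is the last paragraph: verifying that $R=\|\mathbf{Z}\|_2/\sigma$ concentrates at $\sqrt{m}$ with $O(1)$ fluctuations requires the Gamma-function expansion above (or, alternatively, a direct second-order Taylor argument on $R=\sqrt{R^2}$ around $R^2=m$ together with moment bounds on $R^2-m$). Everything else is a one-line coupling argument, so the result hinges on this sharp chi-distribution estimate.
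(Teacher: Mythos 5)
Your proof is correct, and it takes a genuinely different route from the paper's. The paper works analytically: it first derives the explicit density of $Y_i$ (a Beta-type density via a Student-$t$ change of variables), computes $\KL(Y_i \,\|\, \hat Y_i)$ by hand with Beta-function integrals and the inequality $\frac{-\mu}{1-\mu}\leq\log(1-\mu)\leq-\mu$, shows the KL tends to zero via Stirling, and finally converts to a Wasserstein bound by invoking the Talagrand $T_2$ transportation inequality for the Gaussian target $\hat Y_i\sim\mathcal{N}(0,1/m)$. You instead build an explicit coupling by reusing the same Gaussian vector $\mathbf{Z}$ that defines $Y_i$, set $\hat Y_i=Z_i/(\sigma\sqrt{m})$, and bound $\mathbb{E}[(Y_i-\hat Y_i)^2]\leq\frac{1}{m}\mathbb{E}[(\sqrt{m}-R)^2]$ with $R=\|\mathbf{Z}\|_2/\sigma$; the chi-distribution asymptotics then close it. Your argument is more elementary and constructive: it avoids the explicit density of $Y_i$, the Beta-function machinery, and the $T_2$ inequality entirely. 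The trade-off is the rate: the paper's route gives $\KL=O(1/m)$ and hence $\mathcal{W}_2=O(1/m)$, while your crude bound $Z_i^2/\|\mathbf{Z}\|_2^2\leq 1$ only yields $\mathcal{W}_2=O(m^{-1/2})$. That is, however, easily repaired within your framework: since the direction $\mathbf{Y}=\mathbf{Z}/\|\mathbf{Z}\|_2$ and the radius $R$ are independent for an isotropic Gaussian, and $\mathbb{E}[Y_i^2]=1/m$ by symmetry, the factorization $\mathbb{E}[Y_i^2(\sqrt{m}-R)^2]=\mathbb{E}[Y_i^2]\,\mathbb{E}[(\sqrt{m}-R)^2]$ immediately upgrades your bound to $\mathcal{W}_2^2\leq\frac{1}{m^2}\bigl(\tfrac{1}{2}+o(1)\bigr)$, matching the paper's $O(1/m)$ rate with an arguably shorter proof.
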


Theorem \ref{proof:the kl divergence} suggests that $\mathcal{N}(\m 0, \m I_m/m)$ approximates the distribution of each coordinate of the uniform spherical distribution as $m\rightarrow \infty$. 
It can be proven by first  employing the Talagrand $T_2$ inequality \citep{van2016probability} to upper bound the quadratic Wasserstein distance using the  Kullback-Leibler (KL) divergence, and then establishing that the Kullback-Leibler (KL) divergence converges to 0. The proof is provided in Appendix~\ref{app:KL_distance}.




We empirically compare the distributions of $Y_i$ and $\hat Y_i$
across various dimensions $m\in {2, 4, 8, 16, 32, 64, 128, 256}$.
For each $m$, we sample 200,000 data points from both $Y_i$ and $\hat{Y}_i$, bin them into 51 groups, and calculate the empirical KL divergence and Wasserstein distance. Figure~\ref{fig:two distances} plots both distances versus increasing dimensions. We observe that both distances converge to 0 as $m$ increases. Specifically, these results indicate that the distribution of $\widehat Y_i$ provides a reasonable approximation to that of $Y_i$ when $m\geq 2^4 = 16$. Further comparisons between $\m Y$ and $\hat{\m Y}$ can be found in Appendix \ref{sec:further comparisons}.


\begin{figure*}[t]
\vspace{-8pt}
\centering
\subfigure[KL Divergence] { 
\label{fig:kl distance}
\includegraphics[width=0.36\textwidth]{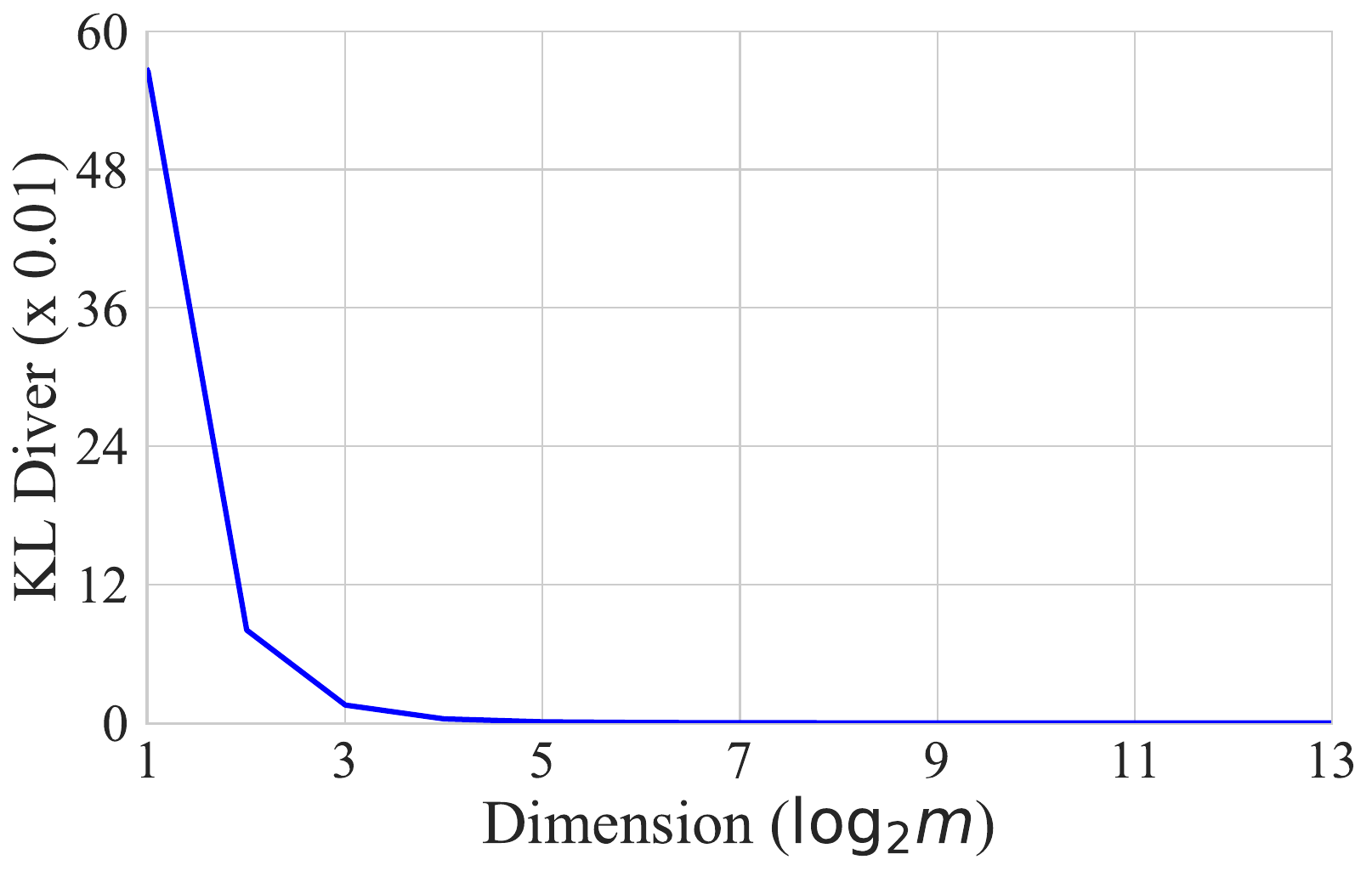}}
\hspace{0.6cm}
\subfigure[Wasserstein Distance]{
\label{fig:wasserstein distance}
\includegraphics[width=0.37\textwidth]{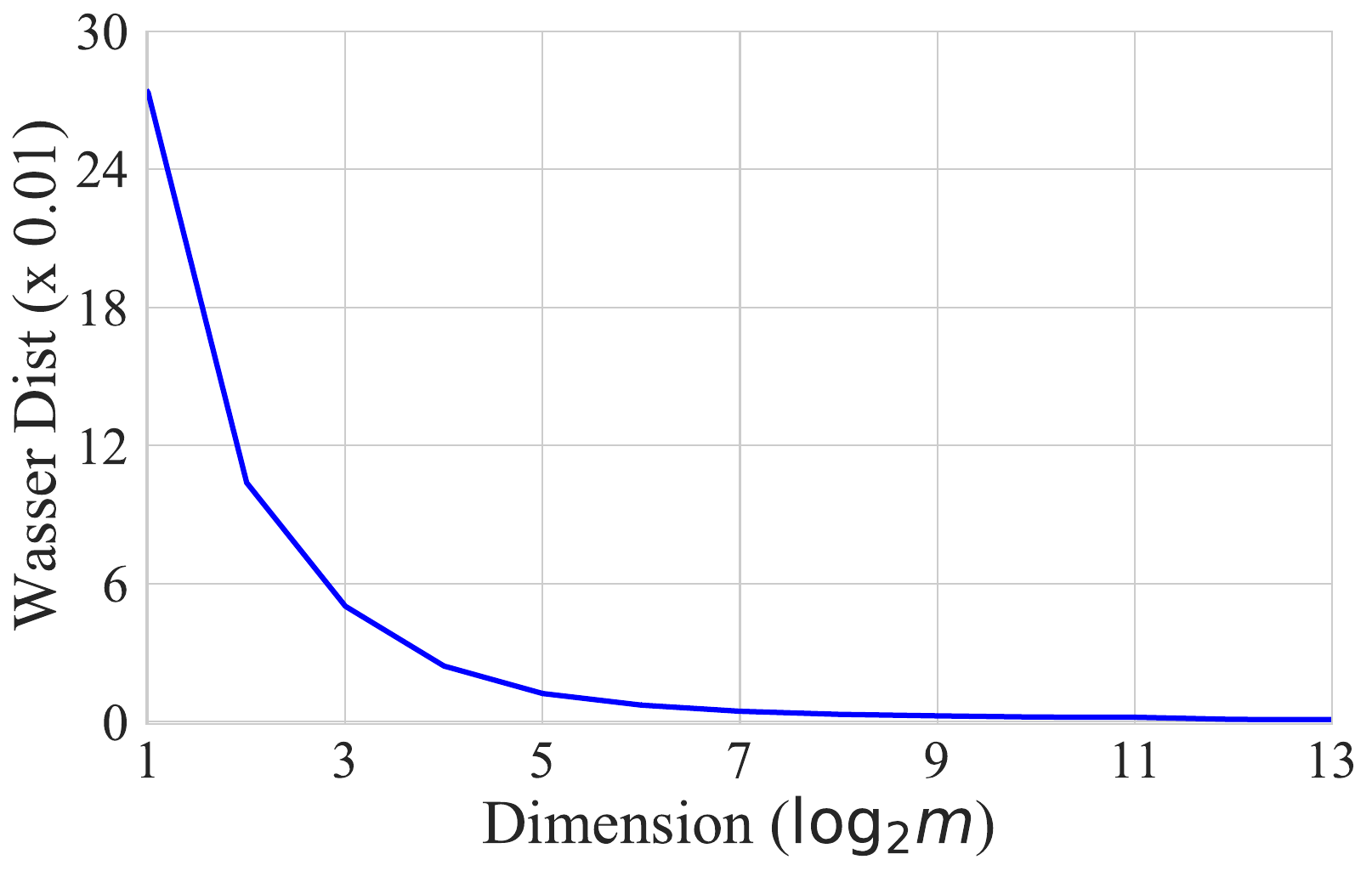}}
\vspace{-8pt}
\caption{The KL divergence and Wasserstein distance between $Y_i$ and $\hat{Y}_i$ w.r.t. various dimensions.}
\label{fig:two distances}
\vspace{-12pt}
\end{figure*}

\subsection{A New Metric for Uniformity} \label{sec:wasserstein Distance}

In this section, we discuss how to use  the quadratic Wasserstein distance between the distribution of learned representations and $\mathcal{N}(\m 0, \m I_m/m)$, in place of the uniform spherical distribution $\textrm{Unif}(\mathcal{S}^{m-1})$,  as our new uniformity loss.

To facilitate  computation, we adopt a Gaussian  hypothesis for the learned representations and assume they follow $\mathcal{N}(\bm \mu, \m \Sigma)$.
With this assumption, we employ the quadratic  Wasserstein distance\footnote{We discuss using other statistical distances as uniformity losses, such as the Kullback-Leibler divergence and Bhattacharyya distance, in Appendix~\ref{sec:other distribution distance}.}
to measure the distance between two distributions. We need the following well-known lemma \citep{Olkin1982TheDB}.

\begin{lemma}\label{theorem:Wasserstein Distance}
 Then the quadratic Wasserstein distance between $\mathcal{N}(\bm \mu, \m \Sigma)$ and $\mathcal{N}(\m 0, \m I/m)$  is
\begin{equation}\label{eq:wasserstein distance definition}
\sqrt{\Vert {\bm \mu} \Vert^2_{2} + 1 + \tr( {\m \Sigma}) -\frac{2}{\sqrt{m}} \tr( {\m \Sigma}^{ \frac{1}{2}})}.
\end{equation}
\end{lemma}

The lemma above indicates that the quadratic Wasserstein distance  can be easily computed using the population mean and covariance of the representations. In practice, we estimate the population mean and covariance by using the sample mean $\hat {\bm \mu} $ and covariance matrix $\hat {\m \Sigma}$, respectively.  Specifically, the empirical quadratic Wasserstein distance serves as the new empirical uniformity loss: 
\begin{equation}\label{eq:wasserstein distance}
\mathcal{W}_{2} \triangleq \sqrt{\Vert \widehat {\bm \mu} \Vert^2_{2} + 1 + \tr(\widehat {\m \Sigma}) -\frac{2}{\sqrt{m}} \tr(\widehat {\m \Sigma}^{ \frac{1}{2}})}.
\end{equation}  
Thus, $-\mathcal{W}_{2}$ can be utilized as the new uniformity metric, with larger values indicating greater uniformity. Moreover, our new uniformity loss can be seamlessly integrated into various existing self-supervised learning methods to enhance their performance.

\section{Comparing Two Metrics}
\subsection{Theoretical Comparison}
\label{sec:theoretical comparison}

We examine the proposed metric $-\mathcal{W}_{2}$ in terms of the four properties introduced earlier. The following theorem summarizes our findings.

\begin{theorem}\label{theorem:w2}
The uniformity metric  $-\mathcal{W}_2$ satisfies all four properties, that is,  {Properties~\ref{pro:ipc}--\ref{pro:fbc}}.
\end{theorem}

The proof of the above theorem is similar to that of Theorem \ref{theorem:lu}, and is provided in Appendix~\ref{appendix: claim w2}. Table~\ref{table:theoretical analysis} compares $-\mathcal{L_U}$ and $-\mathcal{W}_2$. It is important to highlight that our new uniformity metric is invariant to instance permutations and sample replications, while effectively capturing feature redundancy and dimensional collapse.

\begin{wraptable}[6]{r}{0.5\textwidth}
\vspace{-20pt}
\caption{Comparing the two uniformity metrics.}
\vspace{4pt}
\centering
\begin{tabular}{lcccc}\hline
Properties & IPC  & ICC & FCC & FBC \\\hline
$-\mathcal{L_U}$ & \CheckmarkBold  & \XSolidBrush 
&\XSolidBrush & \XSolidBrush \\
$-\mathcal{W}_{2}$ & \CheckmarkBold  &  \CheckmarkBold & \CheckmarkBold & \CheckmarkBold\\\hline
\end{tabular}
\label{table:theoretical analysis}
\end{wraptable}

Taking dimensional collapse as an example,  we consider $\mathcal{D} \oplus \mathbf{0}^{k}$ versus $\mathcal{D}$. Here, a larger $k$ indicates a more severe dimensional collapse. However, $-\mathcal{L_U}$ fails to identify this issue, as $-\mathcal{L_U}(\mathcal{D} \oplus \mathbf{0}^{k}) = -\mathcal{L_U}(\mathcal{D})$. In stark contrast, our proposed metric can accurately detect this dimensional collapse, as $-\mathcal{W}_{2}(\mathcal{D} \oplus \mathbf{0}^{k}) < -\mathcal{W}_{2}(\mathcal{D})$.

\subsection{Empirical Comparisons via Synthetic Studies}\label{sec:empirical analysis}

We perform synthetic experiments to investigate the two uniformity metrics. An empirical examination of the correlation between these metrics shows that data points following an isotropic Gaussian distribution exhibit better uniformity compared to those from other distributions; see Appendix~\ref{appendix: supplementary empirical study} for detailed results. Additionally, we generate data vectors from this distribution to enable a thorough comparison between the two metrics.

\begin{figure*}[h]
\vspace{-8pt}
\centering
\subfigure[Sensitivity to collapse degrees for $-\mathcal{L_U}$]{ 
\label{fig:lu collapse level}  
\includegraphics[width=0.42\textwidth]{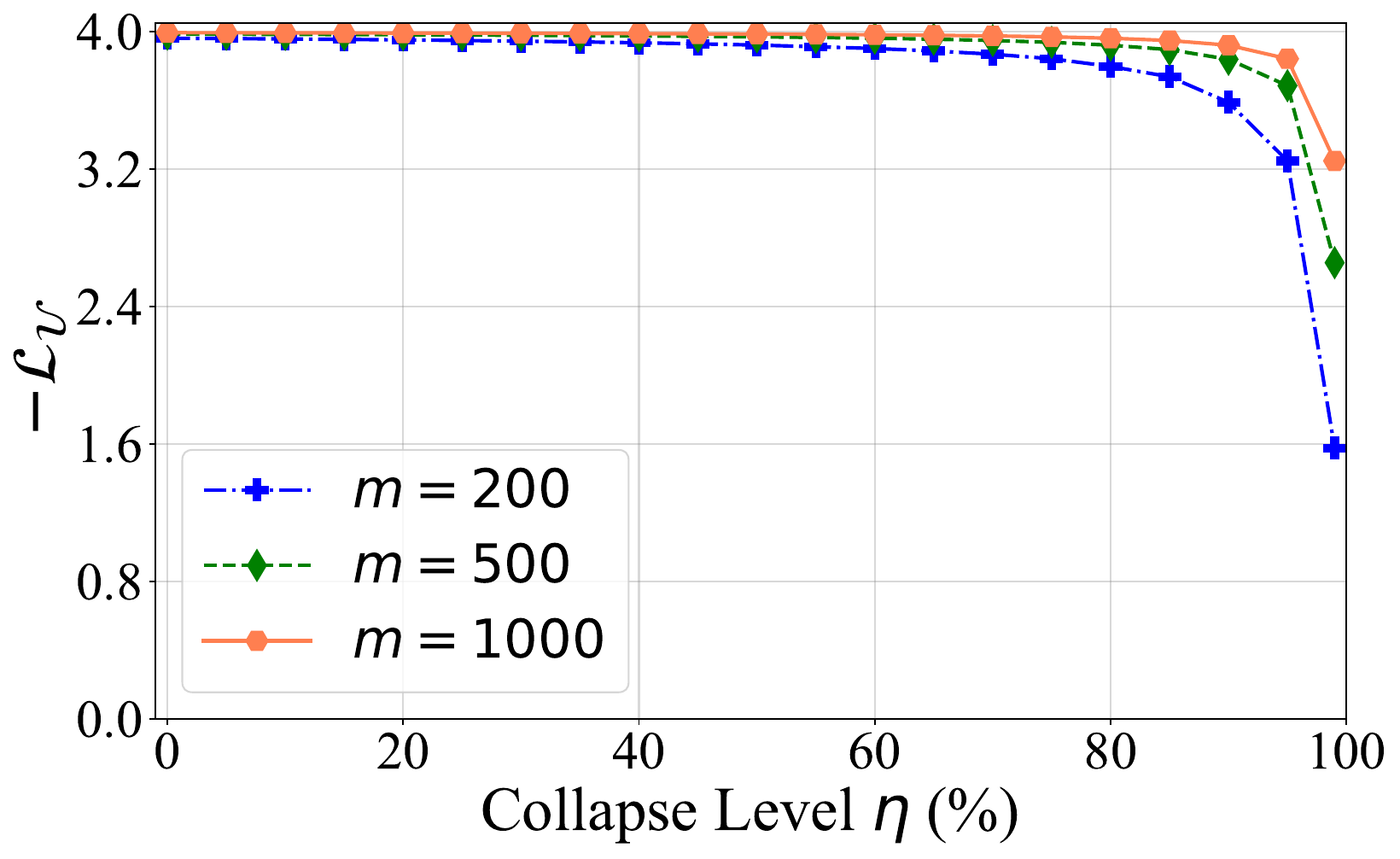}}
\hspace{0.6cm}
\subfigure[Sensitivity to collapse degrees for $-\mathcal{W}_{2}$]{
\label{fig:Wp collapse level}
\includegraphics[width=0.42\textwidth]{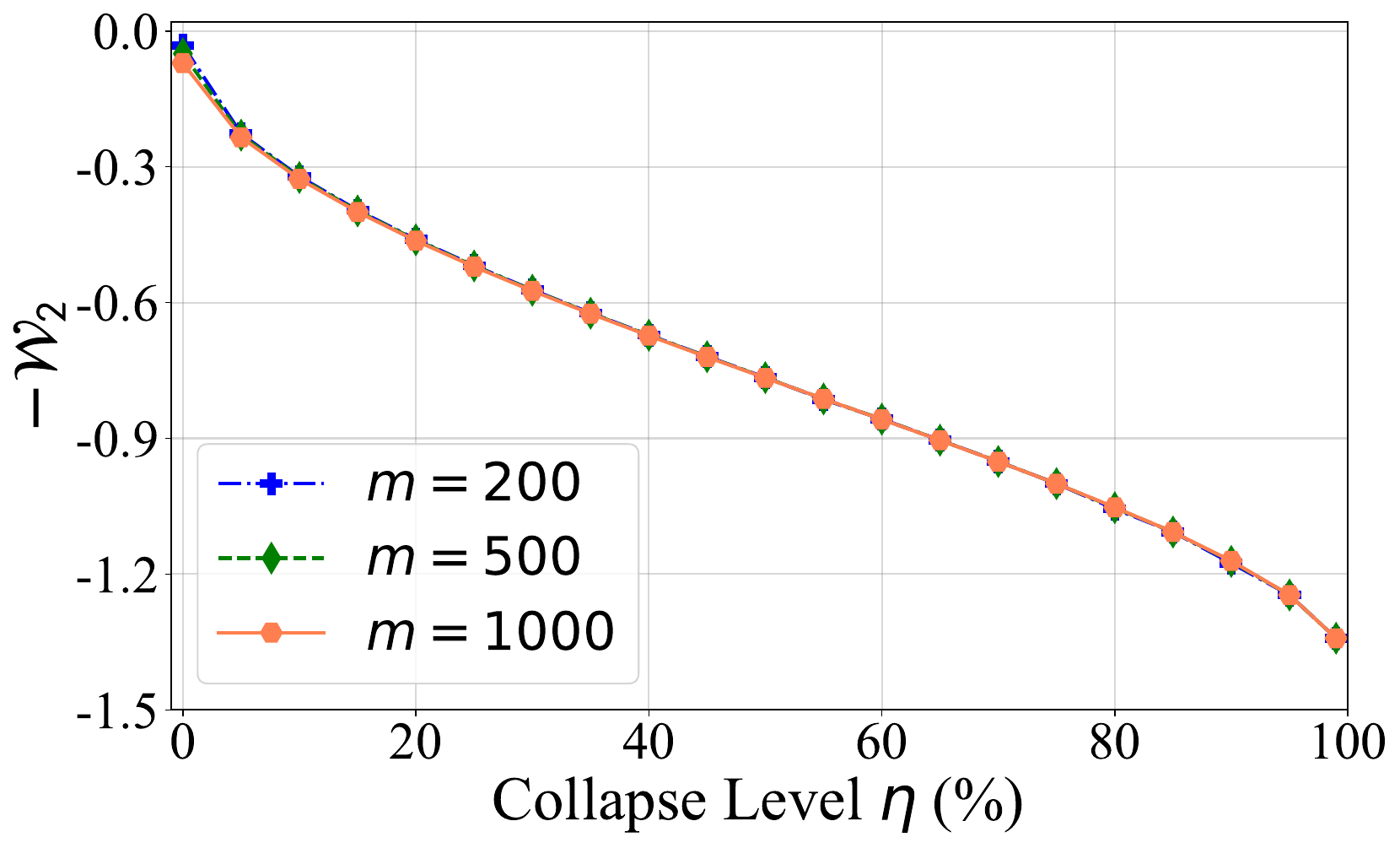}}
\vspace{-4pt}
\caption{{Sensitivity to dimensional collapse degrees: $-\mathcal{W}_{2}$ is more sensitive than $-\mathcal{L_U}$.}}
\label{fig:collapse analysis on various collapse level}
\vspace{-8pt}
\end{figure*}

\paragraph{On Dimensional Collapse  Degrees}
To generate data reflecting varying degrees of dimensional collapse, we sample data vectors from an isotropic Gaussian distribution, normalize them to have $\ell_2$ norms\footnote{In this paper, we always first normalize the representations to have unit $\ell_2$ norms.}, and then zero out a proportion of the coordinates. As the proportion of zero-value coordinates, denoted by $\eta$, increases, dimensional collapse becomes more pronounced, while the proportion of non-zero coordinates is $1-\eta$. In Figure~\ref{fig:lu collapse level} and Figure~\ref{fig:Wp collapse level}, we observe that $-\mathcal{W}_{2}$ effectively captures different collapse degrees, whereas $-\mathcal{L_U}$ remains almost unchanged even with $80\%$ collapse ($\eta=80\%$), indicating that $-\mathcal{L_U}$ is insensitive to the degrees of dimensional collapse.

\begin{figure*}[h]
\vspace{-4pt}
\centering
\subfigure[Effectiveness of $-\mathcal{L_U}$ when increasing $m$] { 
\label{fig:dimension lu}
\includegraphics[width=0.42\textwidth]{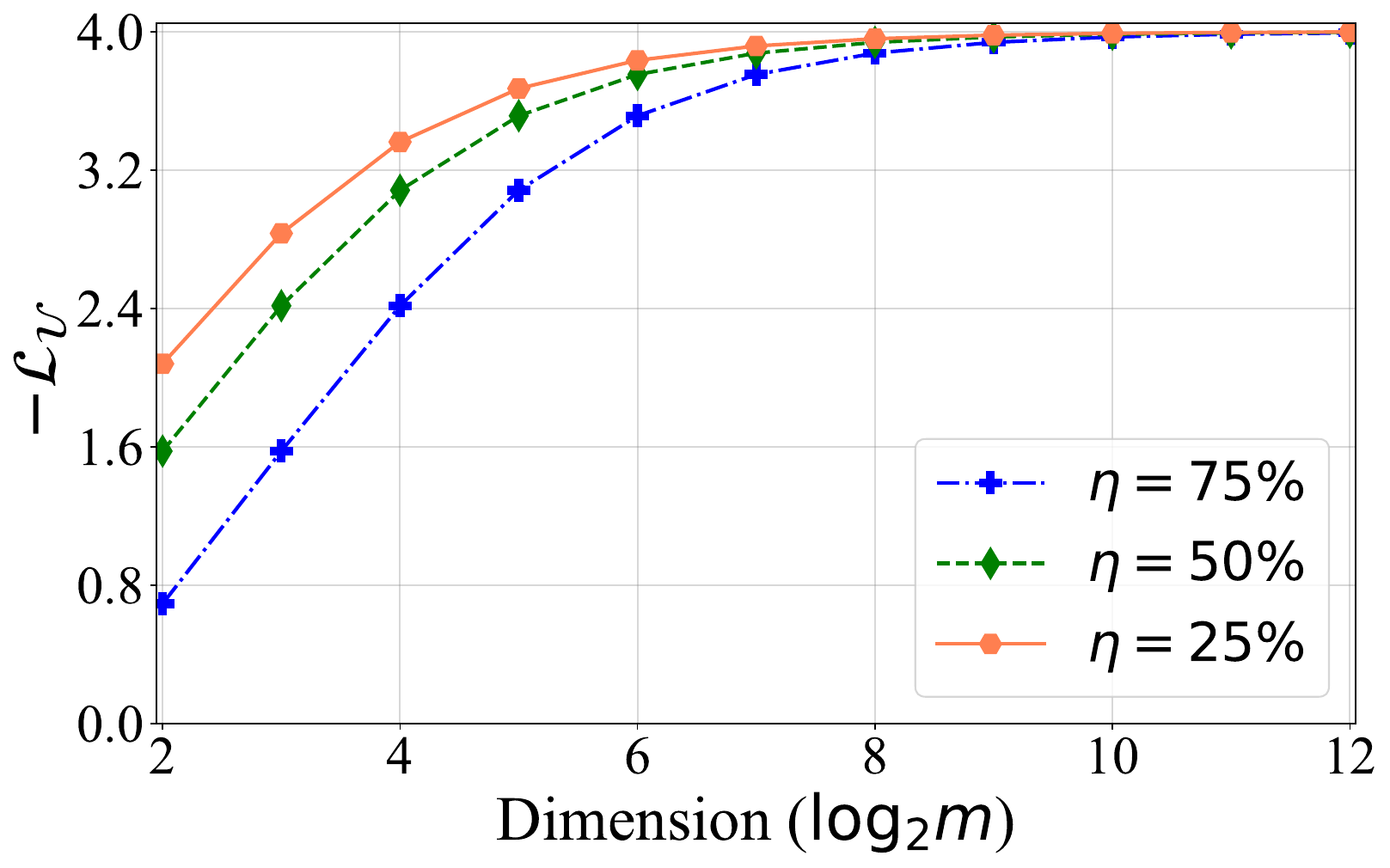}}
\hspace{0.6cm}
\subfigure[Effectiveness of $-\mathcal{W}_{2}$ of when increasing $m$]{
\label{fig:dimension wp}
\includegraphics[width=0.42\textwidth]{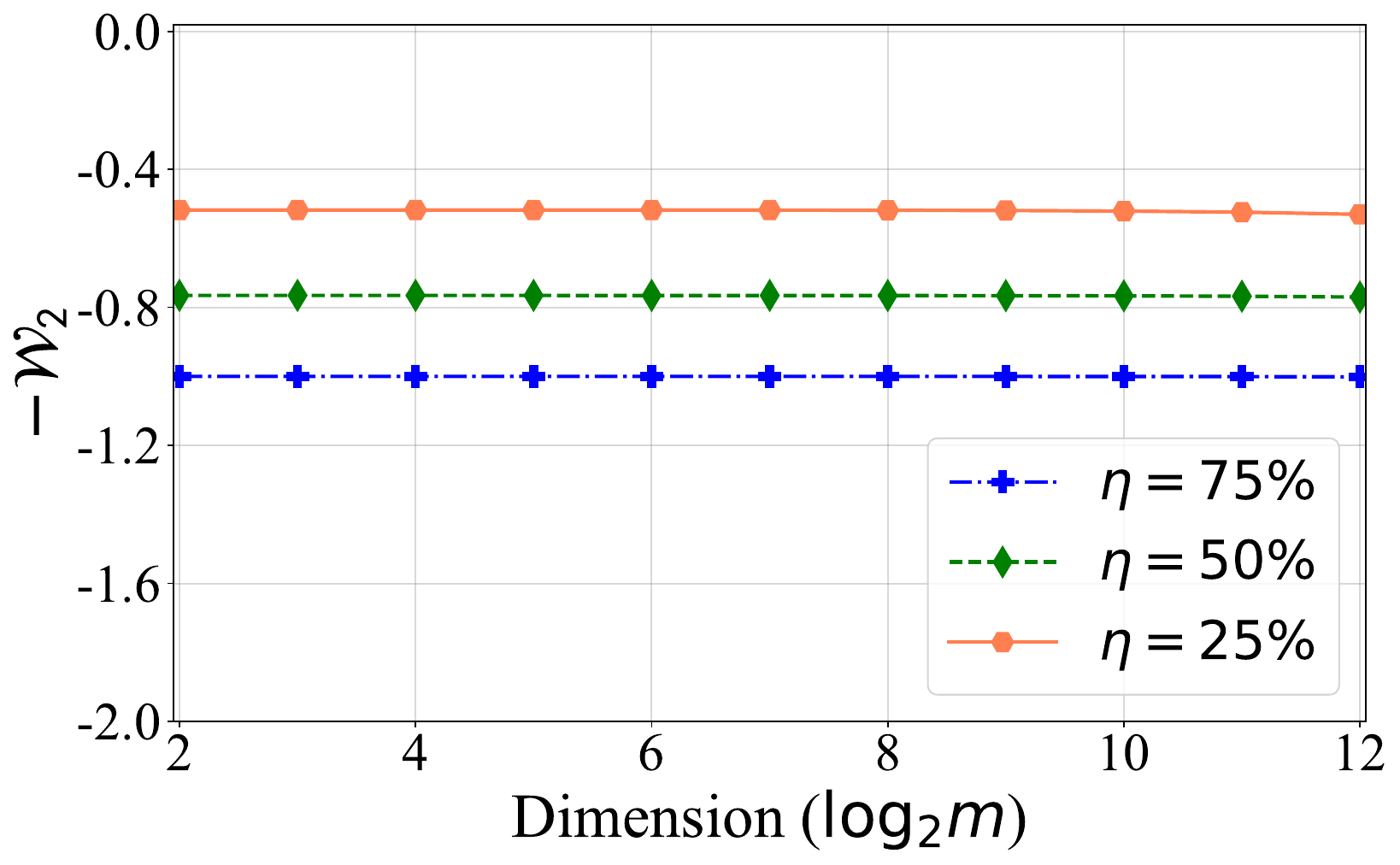}}
\caption{Effectiveness of the metrics when increasing dimension $m$: $-\mathcal{L_U}$ fails to distinguish  different dimensional collapse degrees  for large $m$, while $-\mathcal{W}_{2}$ is always able to.}
\label{fig:collapse analysis on dimension}
\vspace{-8pt}
\end{figure*}

\paragraph{On Sensitiveness of Dimensions}
Figure~\ref{fig:collapse analysis on dimension} demonstrates  that $-\mathcal{L_U}$  can not distinguish between different degrees of dimensional collapse ($\eta = 25\%, 50\%,$ and $75\%$) as the dimension $m$ increases (e.g., $m \geq 2^8 = 256$). 
In contrast, $-\mathcal{W}_{2}$ only depends on the degree of dimensional collapse and is independent of the dimensions $m$.

To complement the theoretical comparisons between the two metrics discussed in Section~\ref{sec:theoretical comparison}, we also conduct empirical comparisons in terms of FCC and FBC. ICC comparisons are collected in Appendix~\ref{appendix: supplementary empirical study}.

\begin{figure*}[t]
\vspace{-6pt}
\centering
\subfigure[$-\mathcal{L_U}$ does NOT satisfy FCC] { 
\label{fig:fcc lu}
\includegraphics[width=0.42\textwidth]{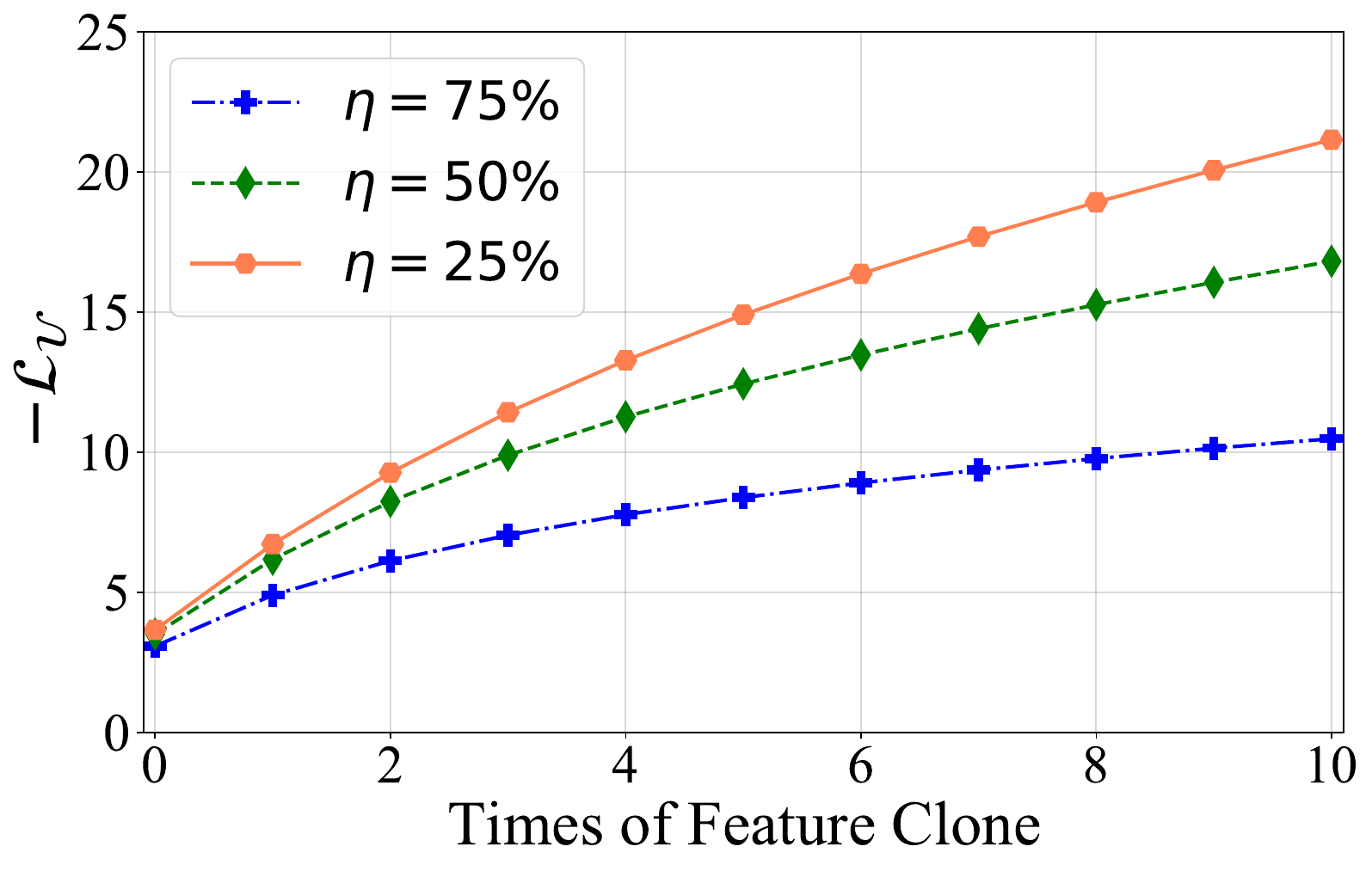}}
\hspace{0.6cm}
\subfigure[$-\mathcal{W}_{2}$ does satisfy FCC]{
\label{fig:fcc wp}
\includegraphics[width=0.42\textwidth]{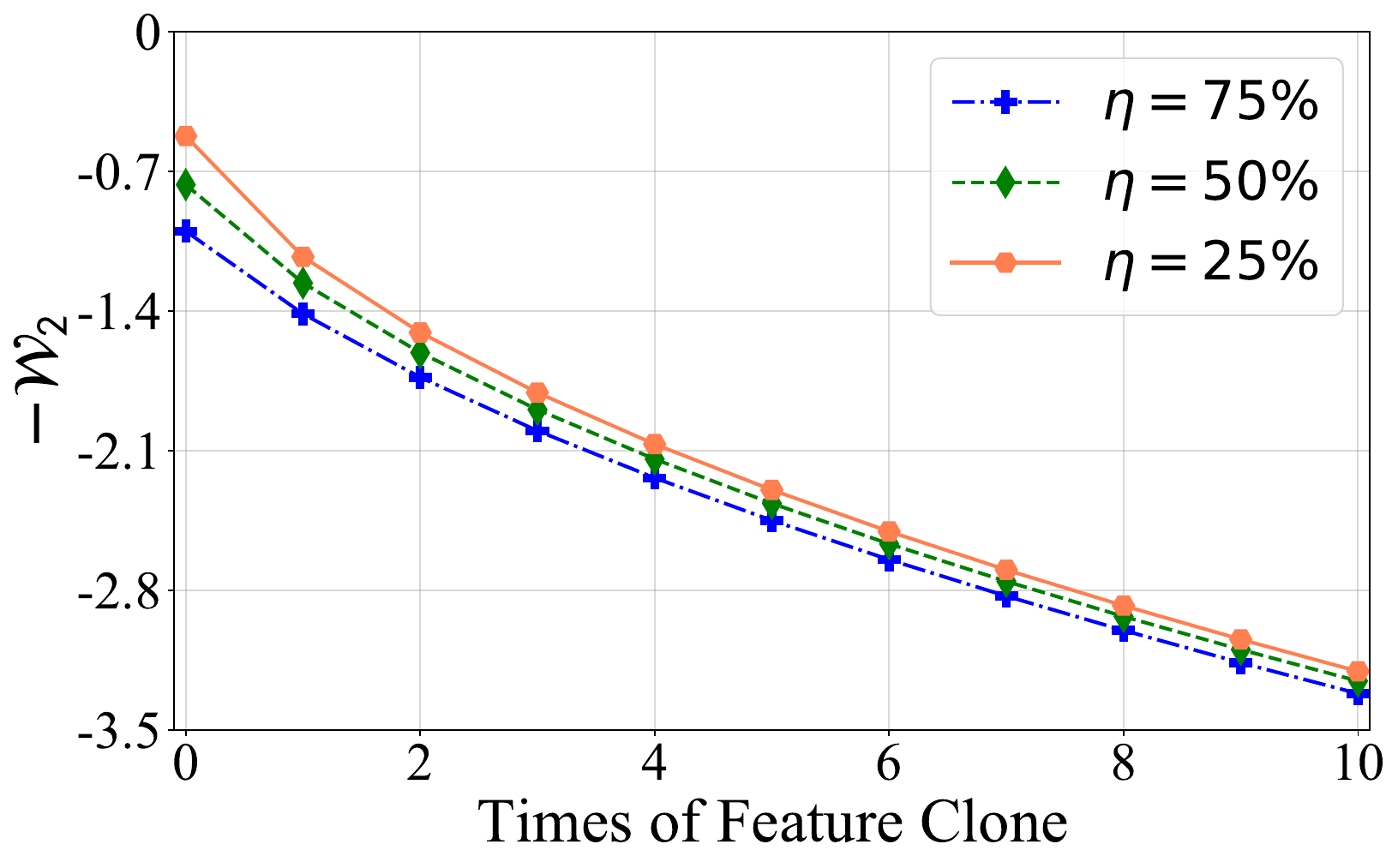}}
\vspace{-8pt}
\caption{{FCC analysis.}}
\label{fig:FCC analysis}
\vspace{-8pt}
\end{figure*}

\paragraph{On Feature Cloning Constraint}
We investigate the impact of feature cloning by creating multiple feature clones of the dataset, such as  $\mathcal{D}\oplus \mathcal{D}$ and $\mathcal{D} \oplus \mathcal{D} \oplus \mathcal{D}$, corresponding to one and two times cloning, respectively. Figure~\ref{fig:fcc lu} demonstrates that the value of $-\mathcal{L_U}$ increases as the number of clones increases, which violates the strict decline in Eqn.~(\ref{eq:fcc}). In contrast, in Figure~\ref{fig:fcc wp}, our proposed metric $-\mathcal{W}_{2}$ decreases, satisfying the property.

\begin{figure*}[h]
\vspace{-6pt}
\centering
\subfigure[$-\mathcal{L_U}$ does NOT satisfy FBC] { 
\label{fig:fbc lu}
\includegraphics[width=0.42\textwidth]{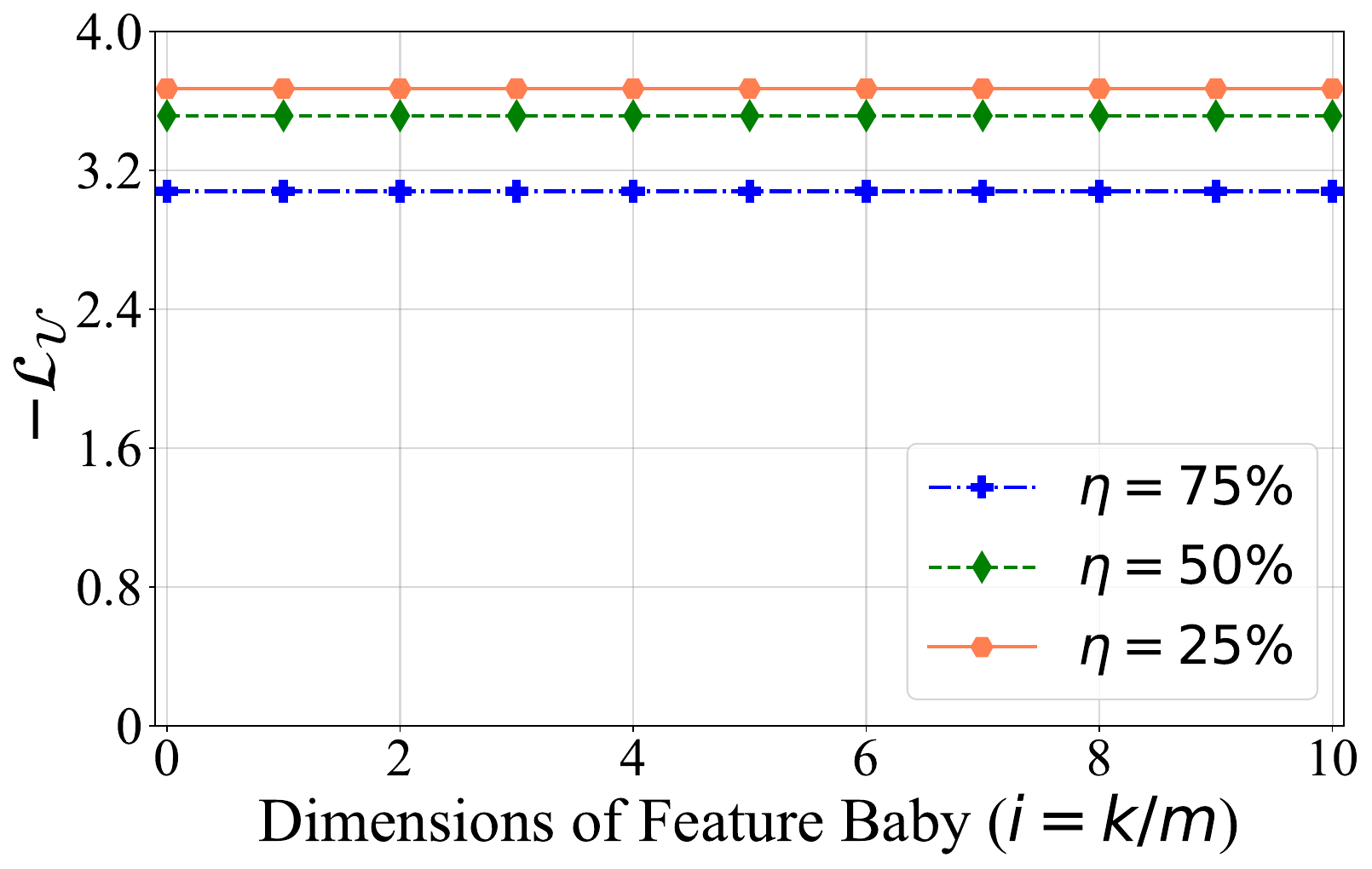}}
\hspace{0.6cm}
\subfigure[$-\mathcal{W}_{2}$ does satisfy FBC]{
\label{fig:fbc wp}
\includegraphics[width=0.42\textwidth]{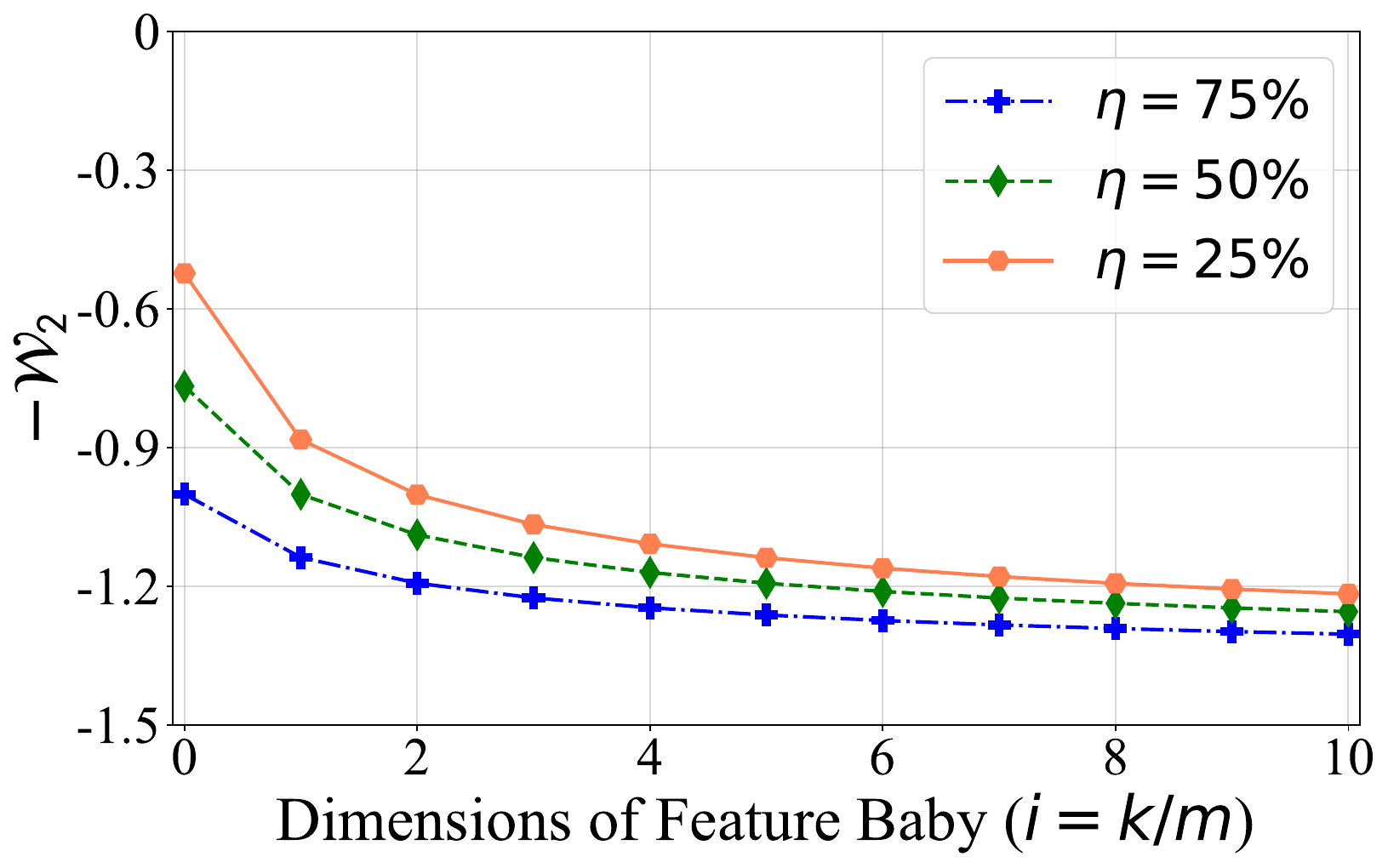}}
\vspace{-8pt}
\caption{{FBC analysis.}}
\label{fig:FBC analysis}
\vspace{-8pt}
\end{figure*}

\paragraph{On Feature Baby Constraint}
We proceed to analyze the effect of feature baby, where we insert $k$ dimensional zero vectors into each instance of  $\mathcal{D}$. This modified dataset is denoted as  $\mathcal{D} \oplus \m 0^{k}$, and we examine the impact of $k$ on both metrics. Figure~\ref{fig:fbc lu} shows that the value of $-\mathcal{L_U}$ remains constant as $k$ increases, violating the strict inequality constraint in Eqn.~(\ref{eq:fbc}). In contrast, Figure~\ref{fig:fbc wp} shows that our proposed metric $-\mathcal{W}_{2}$ decreases, satisfying the constraint.

\paragraph{Summary of Synthetic Studies} 
In summary, our empirical results corroborate our theoretical analysis, confirming that our proposed metric $-\mathcal{W}_{2}$ outperforms the existing metric $-\mathcal{L_U}$ in capturing feature redundancy and dimensional collapse.

\section{Experiments}\label{sec:experiment}

In this section, we integrate the proposed uniformity loss as an auxiliary term into various existing self-supervised methods. We then conduct experiments on CIFAR-10 and CIFAR-100 datasets to demonstrate its effectiveness.

\paragraph{Models}
We conduct experiments on a series of self-supervised representation learning models: (i) AlignUniform~\citep{Wang2020UnderstandingCR}, which incorporates both alignment and uniformity losses in its objective function; (ii) three contrastive learning methods, namely SimCLR~\citep{Chen2020ASF}, MoCo~\citep{He2020MomentumCF}, and NNCLR~\citep{Dwibedi2021WithAL}; (iii) two asymmetric models, BYOL~\citep{Grill2020BootstrapYO} and SimSiam~\citep{Chen2021ExploringSS}; (iv) two methods based on redundancy reduction, BarlowTwins~\citep{Zbontar2021BarlowTS} and Zero-CL~\citep{zhang2022zerocl}. To investigate the behavior of the proposed Wasserstein uniformity loss in self-supervised learning, we integrate it as an auxiliary loss into the following models: MoCo v2, BYOL, BarlowTwins, and Zero-CL. Additionally, we propose using linear decay to weight the Wasserstein uniformity loss during training. This is achieved by setting $\alpha_t = \alpha_{\max} - t,(\alpha_{\max} - \alpha_{\min})/T$, where $t$, $T$, $\alpha_{\max}$, $\alpha_{\min}$, and $\alpha_t$ represent the current epoch, maximum epochs, maximum weight, minimum weight, and current weight, respectively. Further details on the experimental settings can be found in Appendix~\ref{sec:parameter setting}.

\paragraph{Accuracy and representation capacity}
We assess the aforementioned methods using two distinct criteria: accuracy and representation quality/capacity. Accuracy is gauged through linear evaluation accuracy, quantified by Top-1 accuracy (Acc@1) and Top-5 accuracy (Acc@5). On the other hand, representation quality/capacity is evaluated using the uniformity losses $\mathcal{L_U}$ and $\mathcal{W}_{2}$, along with the alignment loss $\mathcal{L_A}$.
.

\paragraph{Main Results}
As depicted in Table~\ref{table:main results table}, incorporating $\mathcal{W}_{2}$ as an additional loss consistently yields superior performance compared to models without this loss or those with $\mathcal{L_U}$ as the additional term. Intriguingly, although it marginally compromises alignment, it enhances uniformity and accuracy in downstream tasks. This underscores the effectiveness of $\mathcal{W}_{2}$ as a uniformity loss. Notably, integrating the Wasserstein uniformity loss does not impede training or inference efficiency.

\begin{table*}[h]
\vspace{-2pt}
\centering
\caption{Main results on CIFAR-10 and CIFAR-100. Proj. and Pred. are the hidden dimensions in  projector and predictor. {\color{red} $\uparrow$} and {\color{blue} $\downarrow$} indicates  gains and losses, respectively.}
\vspace{6pt}
\resizebox{0.98\textwidth}{!}{
\begin{tabular}{lcccccccccccc}\hline
\multirow{2}{*}{Methods} & \multirow{2}{*}{Proj.} & \multirow{2}{*}{Pred.} &\multicolumn{5}{|c}{CIFAR-10} & \multicolumn{5}{|c}{CIFAR-100} \\\cline{4-13}
& & & \multicolumn{1}{|c}{Acc@1$\uparrow$} & Acc@5$\uparrow$ & $\mathcal{W}_{2}\downarrow$ & $\mathcal{L_U}\downarrow$ & $\mathcal{L_A}\downarrow$ & \multicolumn{1}{|c}{Acc@1$\uparrow$} & Acc@5$\uparrow$ & $\mathcal{W}_{2}\downarrow$ & $\mathcal{L_U}\downarrow$ & $\mathcal{L_A}\downarrow$ \\\hline
SimCLR & 256 & \XSolidBrush  & 89.85\quad\quad\enspace & 99.78 & 1.04\quad\quad\enspace & -3.75 & 0.47\quad\quad\enspace & 63.43\quad\quad\enspace & 88.97 & 1.05\quad\quad\enspace & -3.75 &  0.50\quad\quad\enspace\\
NNCLR & 256 & 256  & 87.46\quad\quad\enspace & 99.63 & 1.23\quad\quad\enspace & -3.12 & 0.38\quad\quad\enspace  & 54.90\quad\quad\enspace & 83.81 & 1.23\quad\quad\enspace & -3.18 & 0.43\quad\quad\enspace\\
SimSiam & 256 & 256  & 86.71\quad\quad\enspace & 99.67 & 1.19\quad\quad\enspace & -3.33 & 0.39\quad\quad\enspace & 56.10\quad\quad\enspace & 84.34 & 1.21\quad\quad\enspace & -3.29 & 0.42\quad\quad\enspace\\
AlignUniform & 256 &\XSolidBrush & 90.37\quad\quad\enspace & 99.76 & 0.94\quad\quad\enspace & -3.82 & 0.51\quad\quad\enspace & 65.08\quad\quad\enspace  & 90.15 & 0.95\quad\quad\enspace & -3.82 & 0.53\quad\quad\enspace\\\hline
MoCo v2 & 256 & \XSolidBrush & 90.65\quad\quad\enspace & 99.81 & 1.06\quad\quad\enspace & -3.75  & 0.51\quad\quad\enspace & 60.27\quad\quad\enspace & 86.29 & 1.07\quad\quad\enspace & -3.60 & 0.46\quad\quad\enspace \\
MoCo v2 + $\mathcal{L_U}$ & 256 & \XSolidBrush & 90.98 { \color{red} $\uparrow _{0.33}$} & 99.67 & 0.98 { \color{red} $\uparrow _{0.08}$} & -3.82 & 0.53 { \color{blue} $\downarrow _{0.02}$} & 61.21 { \color{red} $\uparrow _{0.94}$} & 87.32 & 0.98 { \color{red} $\uparrow _{0.09}$} & -3.81 & 0.52 { \color{blue} $\downarrow _{0.06}$}\\
MoCo v2 + $\mathcal{W}_{2}$ & 256 & \XSolidBrush  &  91.41 { \color{red} $\uparrow _{0.76}$} & 99.68 & 0.33 { \color{red} $\uparrow _{0.73}$} & -3.84 & 0.63 { \color{blue} $\downarrow _{0.12}$} & 63.68 { \color{red} $\uparrow _{3.41}$} & 88.48 & 0.28 { \color{red} $\uparrow _{0.79}$} & -3.86 & 0.66 { \color{blue} $\downarrow _{0.20}$}\\\hline
BYOL & 256 & 256 & 89.53\quad\quad\enspace & 99.71 & 1.21\quad\quad\enspace& -2.99 & \textbf{0.31}\quad\quad\enspace& 63.66\quad\quad\enspace & 88.81 & 1.20\quad\quad\enspace & -2.87 & \textbf{0.33}\quad\quad\enspace\\
BYOL + $\mathcal{L_U}$ & 256 & \XSolidBrush & 90.09 { \color{red} $\uparrow _{0.56}$} & 99.75 & 1.09 { \color{red} $\uparrow _{0.12}$} & -3.66 & 0.40 { \color{blue} $\downarrow _{0.09}$} & 62.68 { \color{blue} $\downarrow _{0.98}$} & 88.44 & 1.08 { \color{red} $\uparrow _{0.12}$} & -3.70 & 0.51 { \color{blue} $\downarrow _{0.18}$}\\
BYOL + $\mathcal{W}_{2}$ & 256 & 256 & 90.31 { \color{red} $\uparrow _{0.78}$} & 99.77 & 0.38 { \color{red} $\uparrow _{0.83}$} & -3.90 & 0.65 { \color{blue} $\downarrow _{0.34}$} & 65.16 { \color{red} $\uparrow _{1.50}$} & 89.25 & 0.36 { \color{red} $\uparrow _{0.84}$} & -3.91 & 0.69 { \color{blue} $\downarrow _{0.36}$}\\\hline
BarlowTwins & 256 & \XSolidBrush & 91.16\quad\quad\enspace & 99.80 & 0.22\quad\quad\enspace & -3.91  & 0.75\quad\quad\enspace & 68.19\quad\quad\enspace & 90.64 & 0.23\quad\quad\enspace & -3.91 & 0.75\quad\quad\enspace\\
BarlowTwins + $\mathcal{L_U}$ & 256 & \XSolidBrush & 91.38 { \color{red} $\uparrow _{0.22}$} & 99.77 & 0.21 { \color{red} $\uparrow _{0.01}$} & -3.92 & 0.76 { \color{blue} $\downarrow _{0.01}$} & 68.41 { \color{red} $\uparrow _{0.22}$} & 90.99 & 0.22 { \color{red} $\uparrow _{0.01}$} & -3.91 & 0.76 { \color{blue} $\downarrow _{0.01}$}\\
BarlowTwins + $\mathcal{W}_{2}$ & 256 & \XSolidBrush & \textbf{91.43} { \color{red} $\uparrow _{0.27}$} & 99.78 & 0.19 { \color{red} $\uparrow _{0.03}$} & -3.92 & 0.76 { \color{blue} $\downarrow _{0.01}$} & 68.47 { \color{red} $\uparrow _{0.28}$} & 90.64 & 0.19 { \color{red} $\uparrow _{0.04}$} & -3.91 &  0.79 { \color{blue} $\downarrow _{0.04}$} \\\hline
Zero-CL & 256 & \XSolidBrush & 91.35\quad\quad\enspace & 99.74 & 0.15\quad\quad\enspace & \textbf{-3.94} & 0.70\quad\quad\enspace & 68.50\quad\quad\enspace & 90.97 & 0.15\quad\quad\enspace & -3.93 & 0.75\quad\quad\enspace\\
Zero-CL + $\mathcal{L_U}$ & 256 & \XSolidBrush & 91.28 { \color{blue} $\downarrow _{0.07}$} & 99.74 & 0.15\quad\quad\enspace & \textbf{-3.94} & 0.72 { \color{blue} $\downarrow _{0.02}$} & 68.44 { \color{blue} $\downarrow _{0.06}$} & 90.91 & 0.15\quad\quad\enspace & -3.93 & 0.74 { \color{red} $\uparrow _{0.01}$}\\
Zero-CL + $\mathcal{W}_{2}$  & 256 & \XSolidBrush & 91.42 { \color{red} $\uparrow _{0.07}$} & \textbf{99.82} & \textbf{0.14} { \color{red} $\uparrow _{0.01}$} & \textbf{-3.94} & 0.71 { \color{blue} $\downarrow _{0.01}$} & \textbf{68.55} { \color{red} $\uparrow _{0.05}$} &  \textbf{91.02} & \textbf{0.14} { \color{red} $\uparrow _{0.01}$} & \textbf{-3.94} & 0.76 { \color{blue} $\downarrow _{0.01}$}\\\hline
\end{tabular}}
\label{table:main results table}
\end{table*}

\paragraph{Convergence Analysis} 
We evaluate the Top-1 accuracy of these models on CIFAR-10 and CIFAR-100 using the linear evaluation protocol, as described in Appendix~\ref{appendix:convergence analysis}, across different training epochs. Figure~\ref{fig:convergence} illustrates the results. By incorporating $\mathcal{W}_{2}$ as an additional loss for these models, we observe faster convergence compared to the raw models, particularly for MoCo v2 and BYOL, which exhibit significant collapse issues. Our experiments demonstrate that imposing the proposed Wasserstein uniformity metric as an auxiliary penalty loss greatly enhances uniformity but may compromise alignment. We further analyze uniformity and alignment throughout all training epochs in Appendix~\ref{appendix:representation analysis}.

\begin{figure*}[h]
	\centering
        \vspace{-6pt}
        \subfigure[Singular Value Spectra]{ 
        \label{fig:singular value spectrum}  	\includegraphics[width=0.31\textwidth]{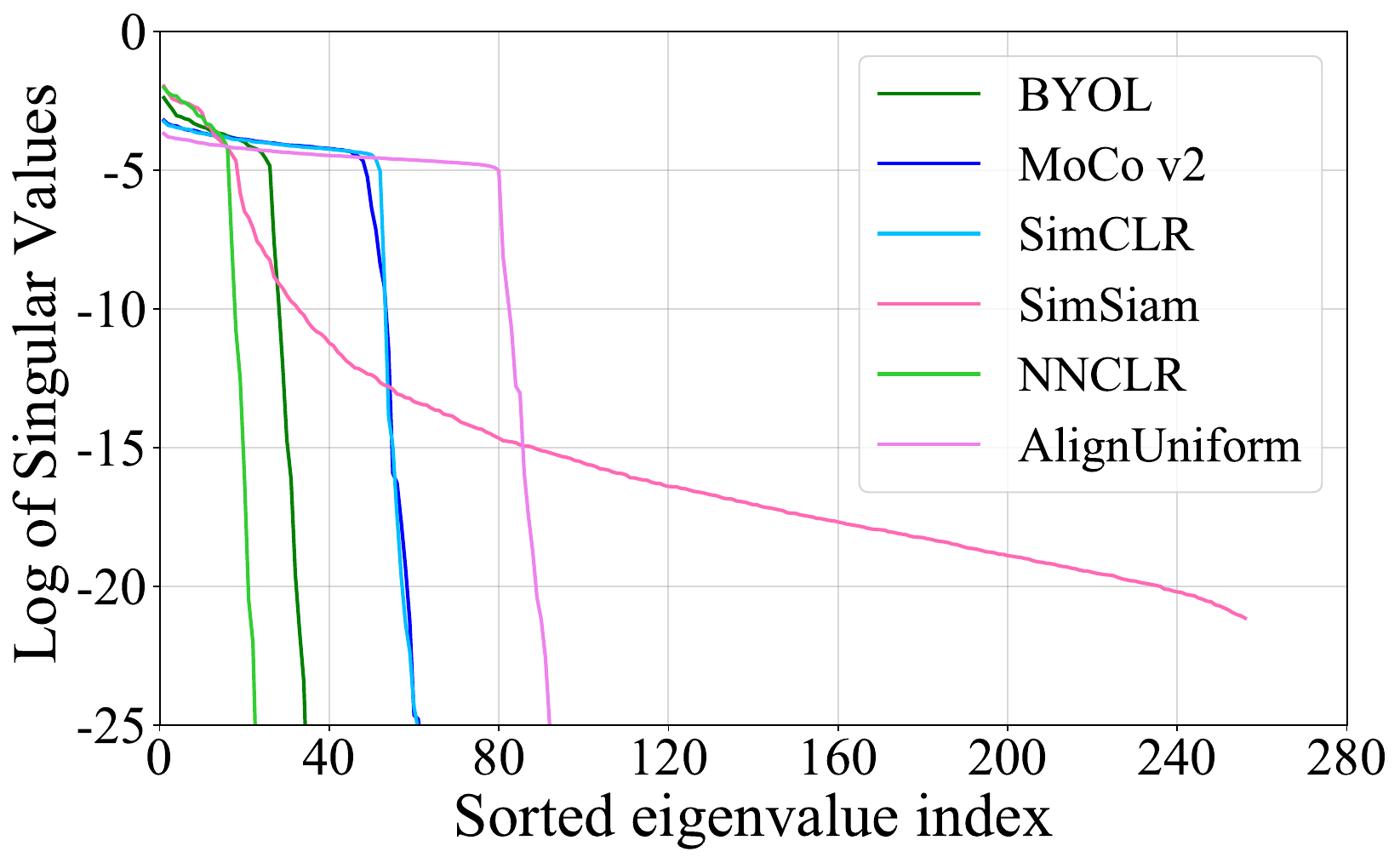}}
	\subfigure[MoCo v2]{ 
		\label{fig:moco v2 dimensional collapse}  	\includegraphics[width=0.31\textwidth]{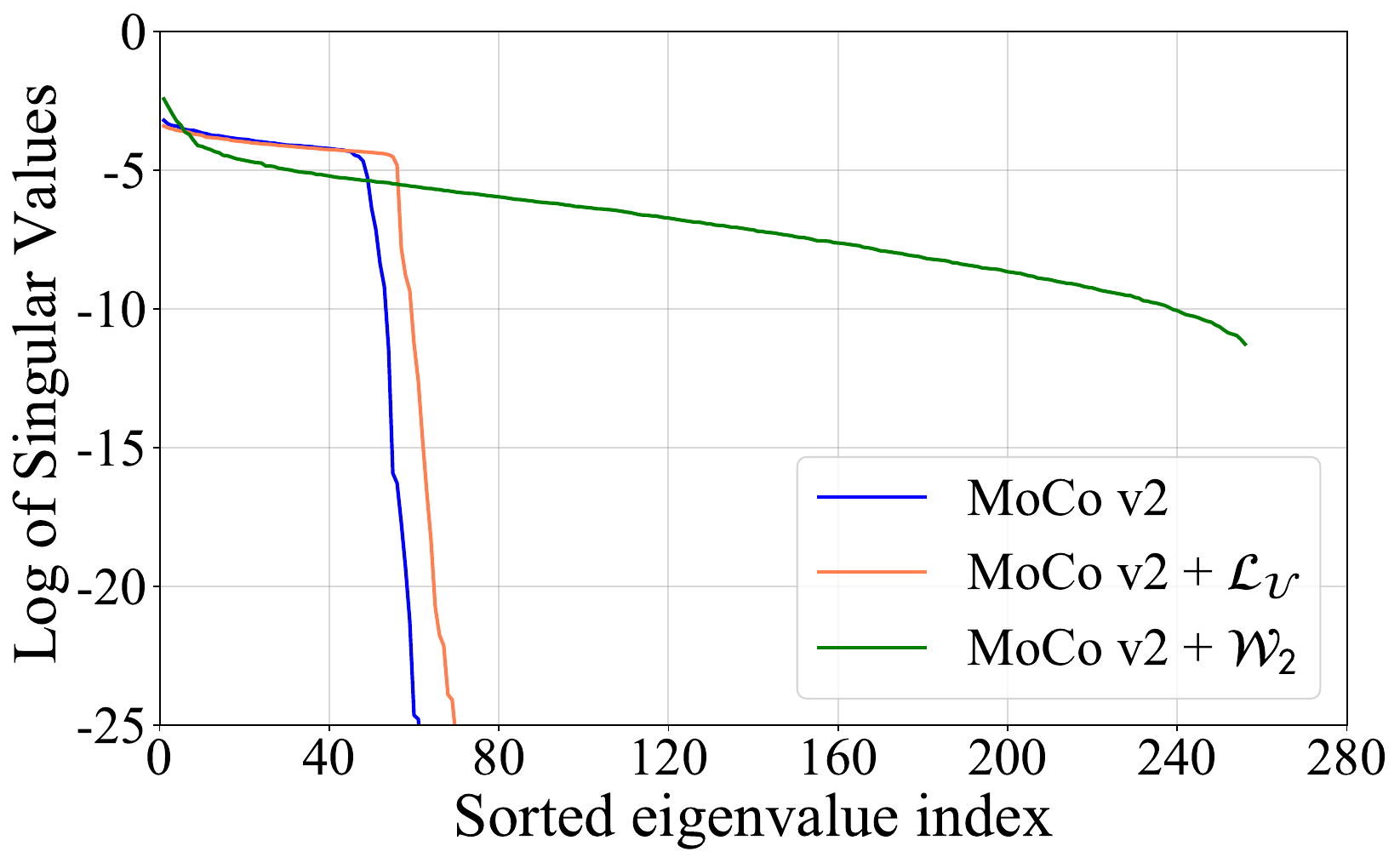}
	}
	\subfigure[BYOL]{
		\label{fig:byol dimensional collapse}	\includegraphics[width=0.31\textwidth]{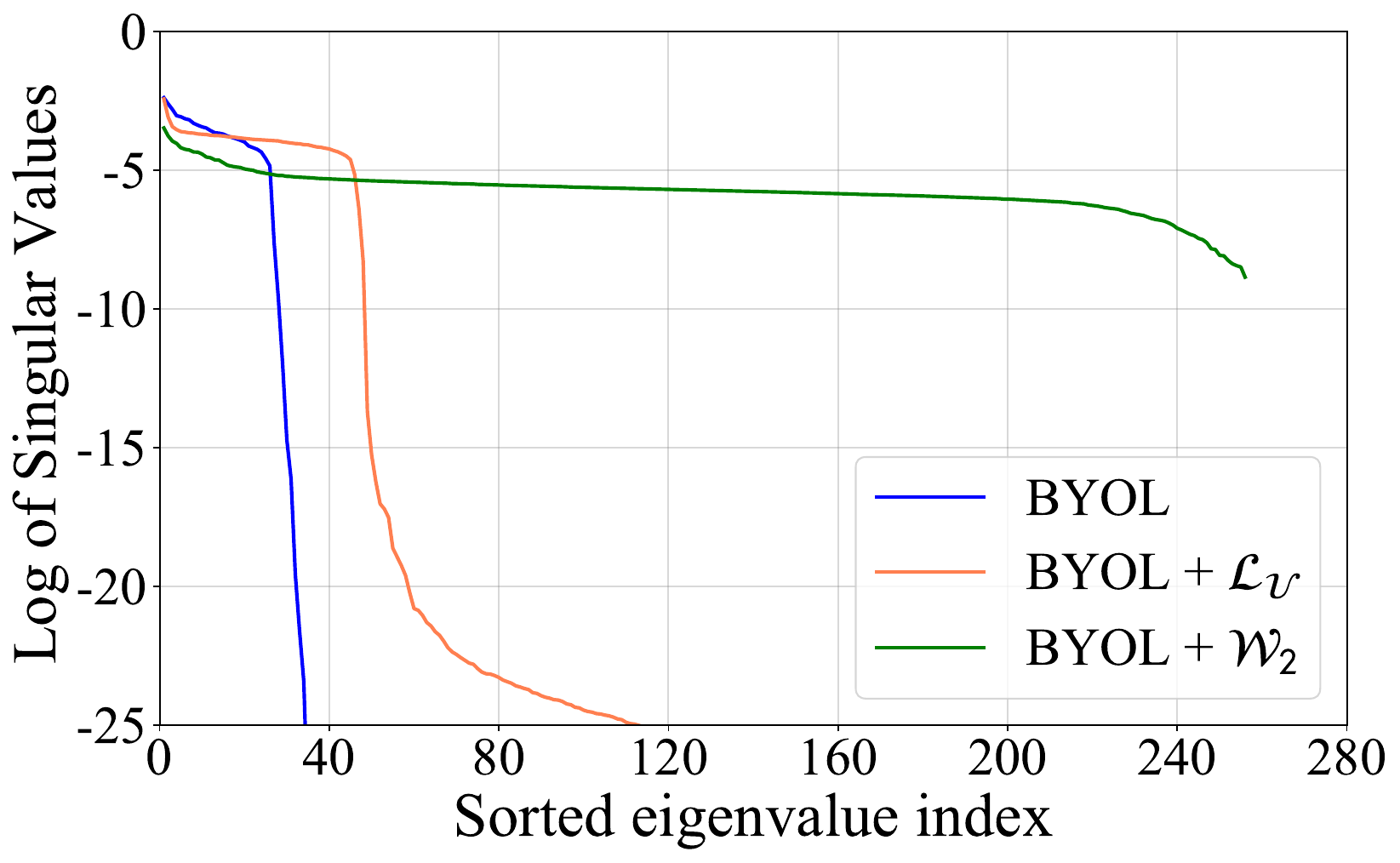}
	}
        \vspace{-8pt}
	\caption{ Dimensional collapse analysis on CIFAR-100 dataset.}
	\label{fig:dimensional collapse analysis}
        \vspace{-8pt}
\end{figure*}

\paragraph{Dimensional Collapse Analysis} 
We visualize the singular value spectra of the learned representations~\citep{Jing2021UnderstandingDC} for various models. These spectra  contain the singular values of the covariance matrix of representations from CIFAR-100 dataset, sorted in logarithmic scale order. As shown in Figure~\ref{fig:singular value spectrum}, most singular values collapse to zeros in most models, 
indicating a large number of collapsed coordinates in most models. To further understand how the additional loss $\mathcal{W}_{2}$  helps prevent dimensional collapse, we add  $\mathcal{W}_{2}$ as an additional loss for Moco v2 and BYOL, the numbers of collapsed coordinates decrease to zeros in both cases; see Figure~\ref{fig:moco v2 dimensional collapse} and Figure~\ref{fig:byol dimensional collapse}. This verifies that our proposed uniformity loss $\mathcal{W}_{2}$ can effectively address the dimensional collapse issue for Moco v2 and BYOL. In contrast,  $\mathcal{L}_{\mathcal{U}}$ can not effectively prevent  dimensional collapse.

\section{Conclusion}\label{sec:conclusion}

In this paper, we have identified four principled properties that an effective uniformity metric should possess. Namely, an effective uniformity metric should remain invariant to instance permutations and sample replications while accurately capturing feature redundancy and dimensional collapse. Surprisingly, the popular uniformity metric proposed by \cite{Wang2020UnderstandingCR} fails to meet the majority of these properties, unveiling its limitations.   Empirical investigations corroborate our theoretical findings. To overcome these limitations, we introduce a new uniformity metric that satisfies all four properties. Particularly, this new metric demonstrates remarkable abilities to capture feature redundancy and dimensional collapse.  Integrating it as an auxiliary loss in various self-supervised learning methods effectively mitigates dimensional collapse and consistently improves their performance on downstream tasks.  Nonetheless, it is worth noting that the four identified properties may not encompass a comprehensive characterization of an ideal uniformity metric, warranting further exploration.

\newpage
\section*{Acknowledgement}

Benyou Wang was partially supported by the Shenzhen Science and Technology Program (JCYJ20220818103001002), Shenzhen Doctoral Startup Funding (RCBS20221008093330065), and Tianyuan Fund for Mathematics of National Natural Science Foundation of China (NSFC) (12326608). Qiang Sun was partially supported in part by the Natural Sciences and Engineering Research Council of Canada under Grant RGPIN-2018-06484 and a Data Sciences Institute Catalyst Grant.  

\bibliography{iclr2024_conference}
\bibliographystyle{iclr2024_conference}

\newpage
\appendix

\appendix

\vspace{-40pt}
\addcontentsline{toc}{section}{Appendix} 
\part{Appendix}
\parttoc 


\section{Statistical distances over Gaussian distributions}
\label{sec:other distribution distance}

We first introduce the Wasserstein distance or the earth mover distance. 
\begin{definition}
The Wasserstein distance or earth-mover distance with $p$ norm is defined as below:
\begin{equation}
W_{p}(\mathbb{P}_r, \mathbb{P}_g) = (\inf_{\gamma \in \Pi(\mathbb{P}_r ,\mathbb{P}_g)} \mathbb{E}_{(x, y) \sim \gamma}\big[\|x - y\|^{p}\big])^{1/p}~.
\label{eq:def wasserstein distance}
\end{equation}
where $\Pi(\mathbb{P}_r,\mathbb{P}_g)$ denotes the set of all joint distributions $\gamma(x,y)$ whose marginals are respectively $\mathbb{P}_r$ and $\mathbb{P}_g$. Intuitively, when viewing each distribution as a unit amount of earth/soil, the Wasserstein distance or earth-mover distance takes the minimum cost of transporting ``mass'' from $x$ to $y$ to transform the distribution
$\mathbb{P}_r$ into the distribution $\mathbb{P}_g$.
This distance is also called the quadratic Wasserstein distance when $p=2$. 
\end{definition}

In this paper, we mainly exploit the quadratic Wasserstein distance over Gaussian distributions. Besides this distance,  we also discuss other distribution distances as uniformity metrics and make comparisons with the Wasserstein distance. Specifically, the Kullback-Leibler divergence and the Bhattacharyya distance over Gaussian distributions are provided in Lemma~\ref{theorem:kl distance} and Lemma~\ref{theorem:bd distance} respectively. Both distances require full-rank covariance matrices, making them impropriate to conduct dimensional collapse analysis. In contrast, our quadratic Wasserstein distance-based uniformity metric is free of such a requirement.
\begin{lemma}[Kullback-Leibler divergence~\citep{Lindley1959InformationTA}] \label{theorem:kl distance} 
Suppose two random variables $\m Z_1 \sim \mathcal{N}(\bm \mu_1, \m \Sigma_1)$ and $\m Z_2 \sim \mathcal{N}(\bm \mu_2, \m \Sigma_2)$ obey multivariate normal distributions, then Kullback-Leibler divergence between $\m Z1$ and $\m Z_2$ is:
\begin{align}
\KL(\m Z_1, \m Z_2) = \frac{1}{2}((\bm \mu_1-\bm \mu_2)^T \m \Sigma_2^{-1}(\bm \mu_1-\bm \mu_2) + \tr(\m \Sigma_2^{-1}\m \Sigma_1-\m I) + \ln{\frac{\det{\m \Sigma_2}}{\det \m \Sigma_1}}). \nonumber
\end{align}
\end{lemma}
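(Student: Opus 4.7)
The plan is to apply the definition of Kullback–Leibler divergence directly to the two multivariate Gaussian densities and reduce the resulting integral to expectations of quadratic forms that can be evaluated in closed form. Writing $p_i(\vx)$ for the density of $\m Z_i$, I would start from $\KL(\m Z_1, \m Z_2) = \mathbb{E}_{\m Z_1}[\log p_1(\m Z_1) - \log p_2(\m Z_1)]$, substitute the Gaussian closed-forms, and observe that the $-\tfrac{m}{2}\log(2\pi)$ normalizing constants cancel immediately, leaving a log-determinant term $\tfrac{1}{2}\ln(\det \m \Sigma_2 / \det \m \Sigma_1)$ together with an expectation of the difference of two quadratic forms.

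The next step is to evaluate those two expectations using the trace trick. For the first, $\mathbb{E}_{\m Z_1}[(\m Z_1-\bm\mu_1)^\T \m\Sigma_1^{-1}(\m Z_1-\bm\mu_1)] = \tr(\m\Sigma_1^{-1}\,\mathbb{E}[(\m Z_1-\bm\mu_1)(\m Z_1-\bm\mu_1)^\T]) = \tr(\m\Sigma_1^{-1}\m\Sigma_1) = m$. For the second, decompose $\m Z_1-\bm\mu_2 = (\m Z_1-\bm\mu_1) + (\bm\mu_1-\bm\mu_2)$, expand the quadratic form $(\m Z_1-\bm\mu_2)^\T \m\Sigma_2^{-1}(\m Z_1-\bm\mu_2)$, and note that the cross term vanishes in expectation because $\mathbb{E}[\m Z_1-\bm\mu_1] = \vzero$; what remains is $\tr(\m\Sigma_2^{-1}\m\Sigma_1) + (\bm\mu_1-\bm\mu_2)^\T \m\Sigma_2^{-1}(\bm\mu_1-\bm\mu_2)$.

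Assembling these pieces, the $-m/2$ from the first quadratic form combines with $\tfrac{1}{2}\tr(\m\Sigma_2^{-1}\m\Sigma_1)$ via $\tr(\m\Sigma_2^{-1}\m\Sigma_1) - m = \tr(\m\Sigma_2^{-1}\m\Sigma_1 - \m I)$, and combining with the log-determinant term and the Mahalanobis term in $\bm\mu_1-\bm\mu_2$ yields exactly the stated identity. There is no substantive obstacle here; the only care needed is the implicit nonsingularity of $\m\Sigma_1$ and $\m\Sigma_2$, without which the densities, the inverse $\m\Sigma_2^{-1}$, and the log-determinants are undefined. This is precisely the full-rank requirement the paper highlights as a drawback of the KL-based alternative relative to the proposed Wasserstein metric, so no extra assumption beyond what is already stated is needed.
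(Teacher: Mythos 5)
Your derivation is correct and is the standard textbook argument: expand the log-density ratio, cancel the $(2\pi)$ normalizers, evaluate the two quadratic-form expectations via the cyclic trace identity, split $\m Z_1-\bm\mu_2$ into $(\m Z_1-\bm\mu_1)+(\bm\mu_1-\bm\mu_2)$ so the cross term dies in expectation, and regroup $\tr(\m\Sigma_2^{-1}\m\Sigma_1)-m$ as $\tr(\m\Sigma_2^{-1}\m\Sigma_1-\m I)$. The paper does not actually prove this lemma — it is stated as a known result with a citation to \citet{Lindley1959InformationTA} — so there is no in-paper argument to compare against, but your proof is complete, and your closing remark about the implicit nonsingularity of $\m\Sigma_1,\m\Sigma_2$ correctly identifies the very full-rank restriction the paper invokes as a reason to prefer the Wasserstein metric.
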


\begin{lemma}
[Bhattacharyya Distance~\citep{Bhattacharyya1943OnAM}]
\label{theorem:bd distance}
Suppose two random variables $\m Z_1 \sim \mathcal{N}(\bm \mu_1, \m \Sigma_1)$ and $\m Z_2 \sim \mathcal{N}(\bm \mu_2, \m \Sigma_2)$ obey multivariate normal distributions, $\m \Sigma = \frac{1}{2}(\m \Sigma_1 + \m \Sigma_2)$, then bhattacharyya distance between $\m Z1$ and $\m Z_2$ is:
\begin{align}
\mathcal{D}_{\textrm{B}}(\m Z_1, \m Z_2) = \frac{1}{8}(\bm \mu_1-\bm \mu_2)^T \m \Sigma^{-1}(\bm \mu_1-\bm \mu_2) + \frac{1}{2}\ln \frac{\det \m \Sigma}{\sqrt{\det \m \Sigma_1 \det \m \Sigma_2}}. \nonumber
\end{align}
\end{lemma}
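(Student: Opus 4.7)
The plan is to compute $\mathcal{D}_B(\m Z_1,\m Z_2) = -\ln \int_{\mathbb{R}^m}\sqrt{p_1(x)p_2(x)}\,dx$ in closed form by exploiting the fact that $\sqrt{p_1 p_2}$ is itself an unnormalized Gaussian in $x$ whose normalizing constant is essentially what we need. First I would write
\begin{equation*}
\sqrt{p_1(x)p_2(x)} = \frac{1}{(2\pi)^{m/2}(\det\m\Sigma_1\det\m\Sigma_2)^{1/4}}\exp\!\left(-\tfrac{1}{4}Q(x)\right),
\end{equation*}
with $Q(x) = (x-\bm\mu_1)^\T\m\Sigma_1^{-1}(x-\bm\mu_1) + (x-\bm\mu_2)^\T\m\Sigma_2^{-1}(x-\bm\mu_2)$. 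Expanding yields $Q(x) = x^\T\m C x - 2x^\T\vb + \bm\mu_1^\T\m\Sigma_1^{-1}\bm\mu_1 + \bm\mu_2^\T\m\Sigma_2^{-1}\bm\mu_2$ with $\m C := \m\Sigma_1^{-1}+\m\Sigma_2^{-1}$ and $\vb := \m\Sigma_1^{-1}\bm\mu_1+\m\Sigma_2^{-1}\bm\mu_2$.

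Completing the square in $x$ and integrating the resulting Gaussian bump out contributes a factor $(4\pi)^{m/2}(\det\m C)^{-1/2}$, and the remainder of the integrand becomes the constant $\exp(-\tfrac{1}{4}R)$ where $R := \bm\mu_1^\T\m\Sigma_1^{-1}\bm\mu_1 + \bm\mu_2^\T\m\Sigma_2^{-1}\bm\mu_2 - \vb^\T\m C^{-1}\vb$. Two matrix identities then let me read off the formula. The determinant identity $\det\m C = \det(\m\Sigma_1+\m\Sigma_2)/(\det\m\Sigma_1\det\m\Sigma_2) = 2^m\det\m\Sigma/(\det\m\Sigma_1\det\m\Sigma_2)$ collapses all prefactors into $(\det\m\Sigma)^{-1/2}(\det\m\Sigma_1\det\m\Sigma_2)^{1/4}$, which after taking $-\ln$ delivers the $\tfrac{1}{2}\ln(\det\m\Sigma/\sqrt{\det\m\Sigma_1\det\m\Sigma_2})$ summand of the lemma. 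The inverse identity $\m C^{-1} = \m\Sigma_1(\m\Sigma_1+\m\Sigma_2)^{-1}\m\Sigma_2 = \tfrac{1}{2}\m\Sigma_1\m\Sigma^{-1}\m\Sigma_2$ is then substituted into $R$ with the goal of showing $R = \tfrac{1}{2}(\bm\mu_1-\bm\mu_2)^\T\m\Sigma^{-1}(\bm\mu_1-\bm\mu_2)$, which (divided by $4$) produces the first summand.

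I expect this last algebraic collapse to be the main obstacle: brute substitution produces four mixed terms of the form $\bm\mu_i^\T\m\Sigma_i^{-1}\m\Sigma_1\m\Sigma^{-1}\m\Sigma_2\m\Sigma_j^{-1}\bm\mu_j$ that do not visibly symmetrize. My preferred route is to first whiten via the affine change of variables $y := \m\Sigma_1^{-1/2}(x-\bm\mu_1)$, which preserves $\mathcal{D}_B$ (as a functional of measures) and under which both sides of the claimed identity transform covariantly, reducing the problem to the case $(\bm\mu_1,\m\Sigma_1) = (\vzero, \m I)$. In that reduced setting the identity collapses to the single-matrix statement $\m\Sigma_2^{-1} - \m\Sigma_2^{-1}(\m I + \m\Sigma_2^{-1})^{-1}\m\Sigma_2^{-1} = (\m I+\m\Sigma_2)^{-1}$, which follows from the one-line resolvent manipulation $A^{-1} - A^{-1}(I+A^{-1})^{-1}A^{-1} = [I-(I+A)^{-1}]A^{-1} = (I+A)^{-1}$, and then $(\m I+\m\Sigma_2)^{-1} = \tfrac{1}{2}\m\Sigma^{-1}$ (since $\m\Sigma = (\m I+\m\Sigma_2)/2$ in the reduced coordinates) finishes the job.
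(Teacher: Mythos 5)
Your derivation is correct. The paper does not actually prove this lemma --- it is stated as a known result with a citation to \citet{Bhattacharyya1943OnAM} and then invoked in Appendix~\ref{sec:other distribution distance} only for comparison with the Wasserstein distance --- so there is no ``paper proof'' to compare against; what you have supplied is a clean self-contained derivation of the cited fact.

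The computation itself checks out at every step. The Gaussian-integral bookkeeping is standard: $\sqrt{p_1 p_2}$ is an unnormalized Gaussian with precision $\tfrac12\m C = \tfrac12(\m\Sigma_1^{-1}+\m\Sigma_2^{-1})$, integrating it contributes $(4\pi)^{m/2}(\det\m C)^{-1/2}$, and the factorization $\m\Sigma_1^{-1}+\m\Sigma_2^{-1}=\m\Sigma_1^{-1}(\m\Sigma_1+\m\Sigma_2)\m\Sigma_2^{-1}$ gives $\det\m C = 2^m\det\m\Sigma/(\det\m\Sigma_1\det\m\Sigma_2)$, which yields exactly the log-determinant term. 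The affine-invariance reduction you use for the quadratic remainder $R$ is the nicest part: $R$, being determined by the affine-invariant Bhattacharyya coefficient together with the already-matched determinant factors, is itself affine-invariant, as is the target $\tfrac12(\bm\mu_1-\bm\mu_2)^\T\m\Sigma^{-1}(\bm\mu_1-\bm\mu_2)$; so it suffices to check $(\bm\mu_1,\m\Sigma_1)=(\vzero,\m I)$. Your resolvent identity $A^{-1}-A^{-1}(I+A^{-1})^{-1}A^{-1}=(I+A)^{-1}$ is correct (it is just $A^{-1}-A^{-1}(A+I)^{-1}=[I-(A+I)^{-1}]A^{-1}=(A+I)^{-1}A\,A^{-1}$), and with $\m\Sigma=(\m I+\m\Sigma_2)/2$ in the reduced coordinates the first summand drops out with the right factor of $\tfrac18$. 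No gaps.

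One small cosmetic remark: the whitening route is not strictly necessary --- after completing the square one can also observe directly that $\m C^{-1}=\m\Sigma_1(\m\Sigma_1+\m\Sigma_2)^{-1}\m\Sigma_2=\m\Sigma_2(\m\Sigma_1+\m\Sigma_2)^{-1}\m\Sigma_1$ and use shift-invariance of $R$ to set $\bm\mu_1=\vzero$, which already collapses the four cross terms you were worried about to the single term $\bm\mu_2^\T[\m\Sigma_2^{-1}-\m\Sigma_2^{-1}\m C^{-1}\m\Sigma_2^{-1}]\bm\mu_2$. But your whitening reduction accomplishes the same with the extra benefit of making the invariance structure explicit, so it is a matter of taste.
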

  
\section{Proof of Theorem~\ref{proof:the kl divergence}}
\label{app:KL_distance}

We first need the following lemma, whose proof is collected in the end of this section.  

\begin{lemma}\label{proof:the pdf of y_i}
Let  $\m Z \sim \mathcal{N}(\m 0, \sigma^2 \m I_m)$ and $\m Y=\m Z/\Vert \m Z \Vert_2$. Then the probability density function of $Y_i$, the $i$-th coordinate of $\m Y$ is: 
\begin{align}
\mathit{f}_{\scaleto{Y_i}{5pt}}(y_i) = \frac{\Gamma(m/2)}{\sqrt{\pi}\Gamma((m-1)/2)}(1-y_i^2)^{(m-3)/2}, ~~\forall~y_i\in [-1, 1]. \nonumber
\end{align}
\end{lemma}

We are ready to prove Theorem \ref{proof:the kl divergence}. 

\begin{proof}[Proof of Theorem \ref{proof:the kl divergence}]
According to the Lemma~\ref{proof:the pdf of y_i}, the pdf of $Y_i$ and $\hat{Y}_i$ are:
\begin{align}
\mathit{f}_{\scaleto{Y_i}{5pt}}(y) = \frac{\Gamma(m/2)}{\sqrt{\pi}\Gamma((m-1)/2)}(1-y^2)^{(m-3)/2}, \quad \mathit{f}_{\scaleto{\hat{Y}_i}{5pt}}(y) = \sqrt{\frac{m}{2\pi}} \exp\{-\frac{my^2}{2}\}. \nonumber
\end{align}
Then the Kullback-Leibler divergence between $Y_i$ and $\hat{Y}_i$ is
\begin{align}
\nonumber \KL(Y_i \Arrowvert \hat{Y}_i) 
& = \int_{-1}^{1}  \mathit{f}_{\scaleto{Y_i}{5pt}}(y) [\log \mathit{f}_{\scaleto{Y_i}{5pt}}(y)- \log \mathit{f}_{\scaleto{\hat{Y}_i}{5pt}}(y)] dy \\\nonumber
& = \int_{-1}^{1} \mathit{f}_{\scaleto{Y_i}{5pt}}(y) [\log \frac{\Gamma(m/2)}{\sqrt{\pi}\Gamma((m-1)/2)} +\frac{m-3}{2} \log (1-y^2) - \log\sqrt{\frac{m}{2\pi}}+\frac{my^2}{2}] dy \\\nonumber
& = \log \sqrt{\frac{2}{m}}\frac{\Gamma(m/2)}{\Gamma((m-1)/2)} + \int_{-1}^{1} \mathit{f}_{\scaleto{Y_i}{5pt}}(y) [\frac{m-3}{2} \log (1-y^2) + \frac{my^2}{2}] dy.
\end{align}
Letting $\mu = y^2$, we have $y=\sqrt{\mu}$ and $dy =\frac{1}{2} \mu^{-\frac{1}{2}}du$. Thus, 
\begin{align}
\nonumber \mathcal{A} & \triangleq \int_{-1}^{1} \mathit{f}_{\scaleto{Y_i}{5pt}}(y) [\frac{m-3}{2} \log (1-y^2) + \frac{my^2}{2}] dy \\\nonumber
& = 2 \int_{0}^{1} \frac{\Gamma(m/2)}{\sqrt{\pi}\Gamma((m-1)/2)}(1-y^2)^{\frac{m-3}{2}} [\frac{m-3}{2} \log (1-y^2) + \frac{my^2}{2}] dy \\\nonumber
& = \frac{\Gamma(m/2)}{\sqrt{\pi}\Gamma((m-1)/2)} \int_{0}^{1} (1-\mu)^{\frac{m-3}{2}} [\frac{m-3}{2} \log(1-\mu) + \frac{m}{2} \mu] \mu^{-\frac{1}{2}} d\mu \\\nonumber
& = \frac{\Gamma(m/2)}{\sqrt{\pi}\Gamma((m-1)/2)} \frac{m-3}{2} \int_{0}^{1} (1-\mu)^{\frac{m-3}{2}} \mu^{-\frac{1}{2}} \log(1-\mu) d\mu \\\nonumber 
& + \frac{\Gamma(m/2)}{\sqrt{\pi}\Gamma((m-1)/2)} \frac{m}{2} \int_{0}^{1} (1-\mu)^{\frac{m-3}{2}} \mu^{\frac{1}{2}} d\mu. \\\nonumber
\end{align}
By using the property of Beta distribution, and the inequality that $\frac{-\mu}{1-\mu} \leq \log (1-\mu) \leq - \mu$, we have
\begin{align}
\nonumber \mathcal{A}_1 
& \triangleq \frac{\Gamma(m/2)}{\sqrt{\pi}\Gamma((m-1)/2)} \frac{m-3}{2} \int_{0}^{1} (1-\mu)^{\frac{m-3}{2}} \mu^{-\frac{1}{2}} \log(1-\mu) d\mu \nonumber\\
& \leq - \frac{\Gamma(m/2)}{\sqrt{\pi}\Gamma((m-1)/2)} \frac{m-3}{2} \int_{0}^{1} (1-\mu)^{\frac{m-3}{2}} \mu^{\frac{1}{2}} d\mu \nonumber\\
& = - \frac{\Gamma(m/2)}{\sqrt{\pi}\Gamma((m-1)/2)} \frac{m-3}{2} \mathit{B}(\frac{3}{2}, \frac{m-1}{2}) ~\text{and} \nonumber\\
\mathcal{A}_2, 
& \triangleq \frac{\Gamma(m/2)}{\sqrt{\pi}\Gamma((m-1)/2)} \frac{m}{2} \int_{0}^{1} (1-\mu)^{\frac{m-3}{2}} \mu^{\frac{1}{2}} d\mu \nonumber\\
& = \frac{\Gamma(m/2)}{\sqrt{\pi}\Gamma((m-1)/2)} \frac{m}{2}  \mathit{B}(\frac{3}{2}, \frac{m-1}{2}). \nonumber
\end{align}
Then, for $\mathcal{A}$, we have
\begin{align}
\nonumber \mathcal{A} = \mathcal{A}_1 + \mathcal{A}_2 & \leq  - \frac{\Gamma(m/2)}{\sqrt{\pi}\Gamma((m-1)/2)} \frac{m-3}{2} \mathit{B}(\frac{3}{2}, \frac{m-1}{2}) + \frac{\Gamma(m/2)}{\sqrt{\pi}\Gamma((m-1)/2)} \frac{m}{2}  \mathit{B}(\frac{3}{2}, \frac{m-1}{2}) \\\nonumber
& = \frac{3}{2} \frac{\Gamma(m/2)}{\sqrt{\pi}\Gamma((m-1)/2)} \mathit{B}(\frac{3}{2}, \frac{m-1}{2}) = \frac{3}{2} \frac{\Gamma(m/2)}{\sqrt{\pi}\Gamma((m-1)/2)} \frac{\Gamma(3/2)\Gamma((m-1)/2)}{\Gamma((m+2)/2)}\\\nonumber
& = \frac{3}{2}  \frac{\Gamma(3/2)\Gamma(m/2)}{\sqrt{\pi}\Gamma((m+2)/2)} = \frac{3}{2}  \frac{(\sqrt{\pi}/2) \Gamma(m/2)}{\sqrt{\pi}\Gamma((m+2)/2)} = \frac{3}{4} \frac{\Gamma(m/2)}{\Gamma((m+2)/2)}.
\end{align}
Using the Stirling formula, we have $\Gamma(x+\alpha) \to \Gamma(x)x^{\alpha}$ as $x \to \infty $ and thus
\begin{align}
\nonumber \lim\limits_{m \to \infty} \KL(Y_i \Arrowvert \hat{Y}_i) &=  \lim\limits_{m \to \infty} \log \sqrt{\frac{2}{m}}\frac{\Gamma(m/2)}{\Gamma((m-1)/2)} +  \lim\limits_{m \to \infty} \mathcal{A} \\\nonumber
& \leq \lim\limits_{m \to \infty} \log \sqrt{\frac{2}{m}} \frac{\Gamma((m-1)/2)(\frac{m-1}{2})^{1/2}}{\Gamma((m-1)/2)} + \lim\limits_{m \to \infty} \frac{3}{4} \frac{\Gamma(m/2)}{\Gamma((m+2)/2)} \\\nonumber
& = \lim\limits_{m \to \infty} \log \sqrt{\frac{2}{m}} \sqrt{\frac{m-1}{2}} + \frac{3}{4} \frac{\Gamma(m/2)}{\Gamma(m/2) m} = \lim\limits_{m \to \infty} \log \sqrt{\frac{m-1}{m}} + \frac{3}{4m} = 0.
\end{align}
We further use $T_2$ inequality \citep[Theorem 4.31]{van2016probability}  to derive the quadratic Wasserstein
metric \citep[Definition 4.29]{van2016probability} as:
\$
\lim_{m \to \infty} \mathcal{W}_2(Y_i, \hat Y_i) 
&\leq \lim_{m \to \infty} \sqrt{\frac{2}{m} \KL(Y_i \Arrowvert \hat{Y}_i) } = 0.
\$
\end{proof}

\subsection{Proofs for  supporting lemmas}
\begin{proof}[Proof of Lemma \ref{proof:the pdf of y_i}]
Let $\m Z=[Z_1, Z_2, \cdots, Z_m] \sim \mathcal{N}(\m 0, \sigma^2 \m I_m)$, then $Z_i \sim \mathcal{N}(0, \sigma^2) , \forall i \in [1, m]$. Let  $U = Z_i/\sigma \sim \mathcal{N}(0, 1)$, $V = \sum_{j \neq i}^{m} (Z_j/\sigma)^2 \sim \mathcal{X}^2(m-1)$, then $U$ and $V$ are independent with each other. The random variable $T = \frac{U}{\sqrt{V/(m-1)}}$ follows  the Student's t-distribution with $m-1$ degrees of freedom, and its probability density function (pdf) is:
\begin{align}
\nonumber \mathit{f}_{\scaleto{T}{5pt}}(t) = \frac{\Gamma(m/2)}{\sqrt{(m-1)\pi}\Gamma((m-1)/2)} (1+\frac{t^2}{m-1})^{-m/2}.
\end{align}

For random variable $Y_i$, we have 
\$
Y_i = \frac{Z_i}{\sqrt{\sum_{i=1}^{m}Z_i^2}} = \frac{Z_i}{\sqrt{Z_i^2 + \sum_{j \neq i}^{m} Z_j^2}} = \frac{Z_i/\sigma}{\sqrt{(Z_i/\sigma)^2 + \sum_{j \neq i}^{m} (Z_j/\sigma)^2}} = \frac{U}{\sqrt{U^2 + V}},
\$
and then $T = \frac{U}{\sqrt{V/(m-1)}} = \frac{\sqrt{m-1}Y_i}{\sqrt{1-Y_i^2}}$, $Y_i = \frac{T}{\sqrt{T^2+m-1}}$. Therefore,  the cumulative distribution function (cdf) of $T$ is: 
\begin{align}
\nonumber F_{Y_i}(y_i) = P(\{Y_i \leq y_i\}) &= 
\begin{cases}
P(\{Y_i \leq y_i\}) & y_i \leq 0 \\
P(\{Y_i \leq 0\}) +P(\{0 < Y_i \leq y_i\}) & y_i > 0
\end{cases} \\\nonumber
& = \begin{cases}
P(\{ \frac{T}{\sqrt{T^2+m-1}} \leq y_i\}) & y_i \leq 0 \\
P(\{ \frac{T}{\sqrt{T^2+m-1}} \leq 0\}) +P(\{0 < \frac{T}{\sqrt{T^2+m-1}} \leq y_i\}) & y_i > 0
\end{cases} \\\nonumber
& = \begin{cases}
P(\{ \frac{T^2}{T^2+m-1} \geq y_i^2, T \leq 0\}) & y_i \leq 0 \\
P(\{ T \leq 0\} +P(\{ \frac{T^2}{T^2+m-1} \leq y_i^2, T>0 \}) & y_i > 0
\end{cases} \\\nonumber
& = \begin{cases}
P(\{ T \leq \frac{\sqrt{m-1}y_i}{\sqrt{1-y_i^2}} \}) & y_i \leq 0 \\
P(\{ T \leq 0\} +P(\{ 0< T \leq \frac{\sqrt{m-1}y_i}{\sqrt{1-y_i^2}}\}) & y_i > 0
\end{cases} \\\nonumber
& = P(\{ T \leq \frac{\sqrt{m-1}y_i}{\sqrt{1-y_i^2}} \}) = F_{T}(\frac{\sqrt{m-1}y_i}{\sqrt{1-y_i^2}}).
\end{align}
The probability density function of $Y_i$ can then be derived as:
\begin{align}
\nonumber \mathit{f}_{\scaleto{Y_i}{5pt}}(y_i) 
& = \frac{d }{dy_i}F_{Y_i}(y_i) = \frac{d }{dy_i}F_{T}(\frac{\sqrt{m-1}y_i}{\sqrt{1-y_i^2}}) \\\nonumber
& = \mathit{f}_{\scaleto{T}{5pt}}(\frac{\sqrt{m-1}y_i}{\sqrt{1-y_i^2}}) \frac{d}{dy_i} (\frac{\sqrt{m-1}y_i}{\sqrt{1-y_i^2}}) \\\nonumber
& = [\frac{\Gamma(m/2)}{\sqrt{(m-1)\pi}\Gamma((m-1)/2)} (1-y_i^2)^{m/2}] [\sqrt{m-1} (1-y_i^2)^{-3/2}] \\\nonumber
& = \frac{\Gamma(m/2)}{\sqrt{\pi}\Gamma((m-1)/2)}(1-y_i^2)^{(m-3)/2}.
\end{align}
\end{proof}

\section{Examining the four properties for two uniformity metrics
}
\label{appendix: claim}

\subsection{Proof of Theorem \ref{theorem:lu}: Examining the four properties for $-\mathcal{L_U}$}
\label{appendix: claim lu}

{Property~\ref{pro:ipc}} can be easily verified for $-\mathcal{L_U}$ and thus we skip the verification. We only examine the other  three properties for the uniformity metric $-\mathcal{L_U}$.

\underline{First, we prove that $-\mathcal{L_U}$ does not satisfy {Property~\ref{pro:icc}}.} Due to the definition of $\mathcal{L_U}$ in Eqn.~(\ref{eq:uniform loss}), we have
\begin{align} 
{\mathcal{L_U}(\mathcal{D} \uplus \mathcal{D})} 
&\triangleq \log \frac{1}{2n(2n-1)/2} 
\left(4 \sum_{i=2}^{n} \sum_{j=1}^{i-1} e^{-t \Vert \m z_i  - \m z_j \Vert_2^{2}} + \sum_{i=1}^{n} e^{-t \Vert \m z_i - \m z_i \Vert_2^{2}}\right) \nonumber\\
& = \log \frac{1}{2n(2n-1)/2} \left(4\sum_{i=2}^{n} \sum_{j=1}^{i-1} e^{-t \Vert \m z_i  - \m z_j \Vert_2^{2}} + n\right). \nonumber\\ 
\end{align}
Letting $G = \sum_{i=2}^{n} \sum_{j=1}^{i-1} e^{-t \Vert \m z_i  - \m z_j \Vert_2^{2}}$,  we have
\begin{align}
G =  \sum_{i=2}^{n} \sum_{j=1}^{i-1} e^{-t \Vert \m z_i  - \m z_j \Vert_2^{2}} \leq  \sum_{i=2}^{n} \sum_{j=1}^{i-1} e^{-t \Vert \m z_i  - \m z_i\Vert_2^{2}} = n(n-1)/2, \nonumber
\end{align}
and $G = n(n-1)/2$ if and only if $\m z_1 = \m z_2 = \ldots = \m z_n$. Thus 
\begin{align}
\mathcal{L_U}(\mathcal{D} \uplus \mathcal{D}) - \mathcal{L_U}({\mathcal{D}})  & = \log \frac{4G + n}{2n(2n-1)/2} - \log \frac{G}{n(n-1)/2} \nonumber\\
& = \log \frac{(4G + n )n(n-1)/2}{2nG(2n-1)/2} 
= \log \frac{(4G + n)(n-1)}{4nG-2G} \nonumber\\
& = \log \frac{4nG-4G+ n^2-n}{4nG -2G} \geq \log 1 = 0. \nonumber
\end{align}
The above equality holds if and only if $G = n(n-1)/2$, which requires $\m z_1 = \m z_2 = ... = \m z_n$, a trivial case when all representations collapse to one constant point. We have excluded this trivial case, and thus $-\mathcal{L_U}(\mathcal{D} \uplus \mathcal{D}) < -\mathcal{L_U}(\mathcal{D})$.  Therefore, the uniformity metric $-\mathcal{L_U}$ does not satisfy {Property~\ref{pro:icc}}.

\underline{Second, we prove that  $-\mathcal{L_U}$ does not  satisfy {Property~\ref{pro:fcc}}}. 
Letting $\hat{\m z}_i = \m z_i \oplus \m z_i$ and $\hat{\m z}_j = \m z_j \oplus \m z_j$, we have
\begin{align}
\mathcal{L_U}(\mathcal{D} \oplus \mathcal{D}) \triangleq \log \frac{1}{n(n-1)/2} \sum_{i=2}^{n} \sum_{j=1}^{i-1} e^{-t\Vert \hat{\m z}_i - \hat{\m z}_j \Vert_2^{2}}. \nonumber
\end{align}

By the definitions of $\hat{\m z}_i$ and $\hat{\m z}_j$, we have $\Vert \hat{\m z}_i \Vert_2 = \sqrt{2} \Vert \m z_i \Vert_2$, $\Vert \hat{\m z}_j \Vert_2 = \sqrt{2} \Vert \m z_j \Vert_2$, and $\langle \hat{\m z}_i, \hat{\m z}_j \rangle= 2\langle \m z_i, \m z_j \rangle$. Thus 
\begin{align}
\Vert \hat{\m z}_i - \hat{\m z}_j\Vert_2^{2} = 2 \Vert \m z_i \Vert_2^2 + 2 \Vert \m z_j \Vert_2^2 - 4 \langle \m z_i, \m z_j \rangle = 2 \Vert \m z_i - \m z_j\Vert_2^{2} \geq \Vert \m z_i - \m z_j\Vert_2^{2}.\nonumber
\end{align}
Therefore, $-\mathcal{L_U}(\mathcal{D} \oplus \mathcal{D}) \geq -\mathcal{L_U}(\mathcal{D})$, indicating that the uniformity  metric $-\mathcal{L_U}$ does not satisfy the {Property~\ref{pro:fcc}}.

Third, we prove that the existing metric $-\mathcal{L_U}$ does not satisfy the {Property~\ref{pro:fbc}}.
Letting $\hat{\m z}_i = \m z_i \oplus \m 0^{k}$ and $\hat{\m z}_j = \m z_j \oplus \m 0^{k}$, we have
\begin{align}
\mathcal{L_U}(\mathcal{D} \oplus \m 0^{k}) \triangleq \log \frac{1}{n(n-1)/2} \sum_{i=2}^{n} \sum_{j=1}^{i-1} e^{-t\Vert \hat{\m z}_i - \hat{\m z}_j\Vert_2^{2}}. \nonumber
\end{align}
By the definitions of $\hat{\m z}_i$ and $\hat{\m z}_j$, we have $\Vert \hat{\m z}_i \Vert_2 = \Vert \m z_i \Vert_2$, $\Vert \hat{\m z}_j \Vert_2 = \Vert \m z_j \Vert_2$, $\langle \hat{\m z}_i, \hat{\m z}_j \rangle=\langle \m z_i, \m z_j \rangle$, and thus 
\begin{align}
\Vert \hat{\m z}_i - \hat{\m z}_j \Vert_2^{2} = \Vert \hat{\m z}_i \Vert_2^2 + \Vert \hat{\m z}_j \Vert_2^2 - 2 \langle \hat{\m z}_i, \hat{\m z}_j \rangle =  \Vert \m z_i \Vert_2^2 + \Vert \m z_j \Vert_2^2 - 2 \langle \m z_i, \m z_j \rangle = \Vert \m z_i - \m z_j \Vert_2^{2}.\nonumber
\end{align}
Therefore, $-\mathcal{L_U}(\mathcal{D} \oplus \m 0^{k}) = -\mathcal{L_U}(\mathcal{D})$, indicating that the uniformity metric $-\mathcal{L_U}$ does not satisfy {Property~\ref{pro:fbc}}.

\subsection{Proof of Theorem \ref{theorem:w2}: Examining  the four properties for $-\mathcal{W}_2$}
\label{appendix: claim w2}
{Property~\ref{pro:ipc} can be easily verified for  $-\mathcal{W}_{2}$, and thus the proof is skipped.  We only examine the rest three properties  for the proposed uniformity metric $-\mathcal{W}_{2}$.

\underline{First, we prove that our proposed metric $-\mathcal{W}_{2}$ satisfies {Property~\ref{pro:icc}}}. 
Let $\hat {\bm\mu}$ and $\hat{\bm \Sigma}$ be defined as above, for $\mathcal{D} \uplus \mathcal{D} =  \{\m z_1, \m z_2, ..., \m z_n, \m z_1, \m z_2, ..., \m z_n\}$, the mean  and covariance estimators are
\begin{align}
\widetilde{\bm \mu} = \frac{1}{2n}\sum_{i=1}^{n} 2 \m z_i = \hat{\bm \mu}, \quad \widetilde{\m \Sigma} = \frac{1}{2n}\sum_{i=1}^{n}2(\m z_i - \widetilde{\bm \mu})^T(\m z_i -\widetilde{\bm \mu}) = \hat{\m \Sigma}, \nonumber
\end{align}
which agree with those for $\mathcal{D}$. 
Then we have
\begin{align}
\mathcal{W}_{2}(\mathcal{D} \uplus \mathcal{D}) \triangleq \sqrt{\Vert \hat{\bm \mu}\Vert^2_{2} + 1 + \tr(\hat{\m \Sigma}) -\frac{2}{\sqrt{m}} \tr(\hat{\m \Sigma}^{1/2}}) = \mathcal{W}_{2}(\mathcal{D}). \nonumber
\end{align}
Therefore,  our proposed metric $-\mathcal{W}_{2}$ satisfies {Property~\ref{pro:icc}}.

\underline{Second, we prove that $-\mathcal{W}_{2}$ satisfies {Property~\ref{pro:fcc}}}. 
Let  $\widetilde{\m z}_i = \m z_i \oplus \m z_i \in \mathbb{R}^{2m}$. For $\mathcal{D} \oplus \mathcal{D}$, the  mean  and covariance estimators are: 
\begin{align}
\widetilde{\bm\mu} 
=\begin{pmatrix}
\hat{\bm\mu}\\
\hat{\bm\mu}
\end{pmatrix}, \quad
\widetilde{\m \Sigma} = \begin{pmatrix}
\hat{\m \Sigma} & \hat{\m \Sigma} \\
\hat{\m \Sigma} & \hat{\m \Sigma}
\end{pmatrix}. \nonumber
\end{align}
We easily have 
\$
\widetilde{\m \Sigma}^{1/2}=\begin{pmatrix}
\hat{\m \Sigma}^{1/2}/\sqrt{2} & \hat{\m \Sigma}^{1/2}/\sqrt{2}  \\
\hat{\m \Sigma}^{1/2}/\sqrt{2}  & \hat{\m \Sigma}^{1/2}/\sqrt{2} 
\end{pmatrix}, ~\tr(\widetilde{\m \Sigma}) = 2\tr(\hat{\m \Sigma}), ~\text{and}~\tr(\widetilde{\m \Sigma}^{1/2})=\sqrt{2}\tr(\hat{\m \Sigma}^{1/2}). 
\$ 
Thus 
\begin{align}
\mathcal{W}_{2}(\mathcal{D} \oplus \mathcal{D}) & \triangleq \sqrt{\Vert \widetilde{\bm \mu}\Vert^2_{2} + 1 + \tr(\widetilde{\m \Sigma}) -\frac{2}{\sqrt{2m}}\tr(\widetilde{\m \Sigma}^{1/2})} \nonumber \\
& = \sqrt{2\Vert \hat{\bm \mu} \Vert^2_{2} + 1 + 2\tr(\hat{\m \Sigma}) -\frac{2\sqrt{2}}{\sqrt{2m}} \tr(\hat{\m \Sigma}^{1/2})} \nonumber \\
& > \sqrt{\Vert \hat{\bm \mu} \Vert^2_{2} + 1 + \tr(\hat{\m \Sigma}) -\frac{2}{\sqrt{m}} \tr(\hat{\m \Sigma}^{1/2})} = \mathcal{W}_{2}(\mathcal{D}). \nonumber
\end{align}
Therefore, $-\mathcal{W}_{2}(\mathcal{D} \oplus \mathcal{D}) < -\mathcal{W}_{2}(\mathcal{D})$, indicating that our proposed metric $-\mathcal{W}_{2}$ could satisfy the {Property~\ref{pro:fcc}}.

\underline{Third, we prove that our proposed metric $-\mathcal{W}_{2}$ satisfies {Property~\ref{pro:fbc}}}.
Let $\widetilde{\m z}_i = \m z_i \oplus \m 0^{k} \in \mathbb{R}^{m+k}$ with an overload of notation. For  $\mathcal{D} \oplus \m 0^{k}$, the sample  mean  and covariance estimators are 
\begin{align}
\widetilde{\bm \mu} = \begin{pmatrix}
\hat{\bm \mu} \\
\m 0^{k}
\end{pmatrix}, \quad
\widetilde{\m \Sigma} 
= \begin{pmatrix}
\hat{\m \Sigma} & \m 0^{m \times k} \\
\m 0^{k \times m} & \m 0^{k \times k}
\end{pmatrix},\nonumber
\end{align}
where $\hat{\bm\mu}$ and $\hat{\m\Sigma}$ are defined previously.  Therefore, we have $\tr(\widetilde{\m \Sigma}) = \tr(\hat{\m \Sigma})$, $\tr(\widetilde{\m \Sigma}^{1/2}) = \tr(\hat{\m \Sigma}^{1/2})$, and thus
\begin{align}
\mathcal{W}_{2}(\mathcal{D} \oplus \m 0^{k}) & \triangleq \sqrt{\Vert \widetilde{\bm \mu}\Vert^2_{2} + 1 + \tr(\widetilde{\m \Sigma}) -\frac{2}{\sqrt{m+k}}\tr(\widetilde{\m \Sigma}^{1/2})} \nonumber \\
& = \sqrt{\Vert \hat{\bm \mu} \Vert^2_{2} + 1 + \tr(\hat{\m \Sigma}) -\frac{2}{\sqrt{m+k}} \tr(\hat{\m \Sigma}^{1/2})} \nonumber \\
& > \sqrt{\Vert \hat{\bm \mu} \Vert^2_{2} + 1 + \tr(\hat{\m \Sigma}) -\frac{2}{\sqrt{m}} \tr(\hat{\m \Sigma}^{1/2})} = \mathcal{W}_{2}(\mathcal{D}). \nonumber
\end{align}
Therefore,  $-\mathcal{W}_{2}(\mathcal{D} \oplus \m 0^{k}) < -\mathcal{W}_{2}(\mathcal{D})$, indicating that our proposed metric $-\mathcal{W}_{2}$ satisfies the {Property~\ref{pro:fbc}}. 

\section{Further comparisons between $\m Y$ and $\hat{\m Y}$}\label{sec:further comparisons}

This section further compares the distributions of $\m Y$ and  $\hat{\m Y}$.

We visually compare the distributions of $Y_i$ and $\hat{Y}_i$. To estimate the distributions of $Y_i$ and $\hat{Y}_i$, we bin 200,000 sampled data points into 51 groups. Figure~\ref{fig:binning density over dimensions} compares the binning densities of $Y_i$ and  $\hat{Y}_i$ when  $m\in \{2, 4, 8, 16, 32, 64, 128, 256\}$. We can observe that two distributions are highly overlapped when $m$ is moderately large, e.g.,  $m\geq 8$ or $m\geq 16$.
\begin{figure*}[ht]
	\centering
        \vspace{-3mm}
	\subfigure[$m=2$]{ 
		\label{fig:binning density m=2}  
		\includegraphics[width=0.225\textwidth]{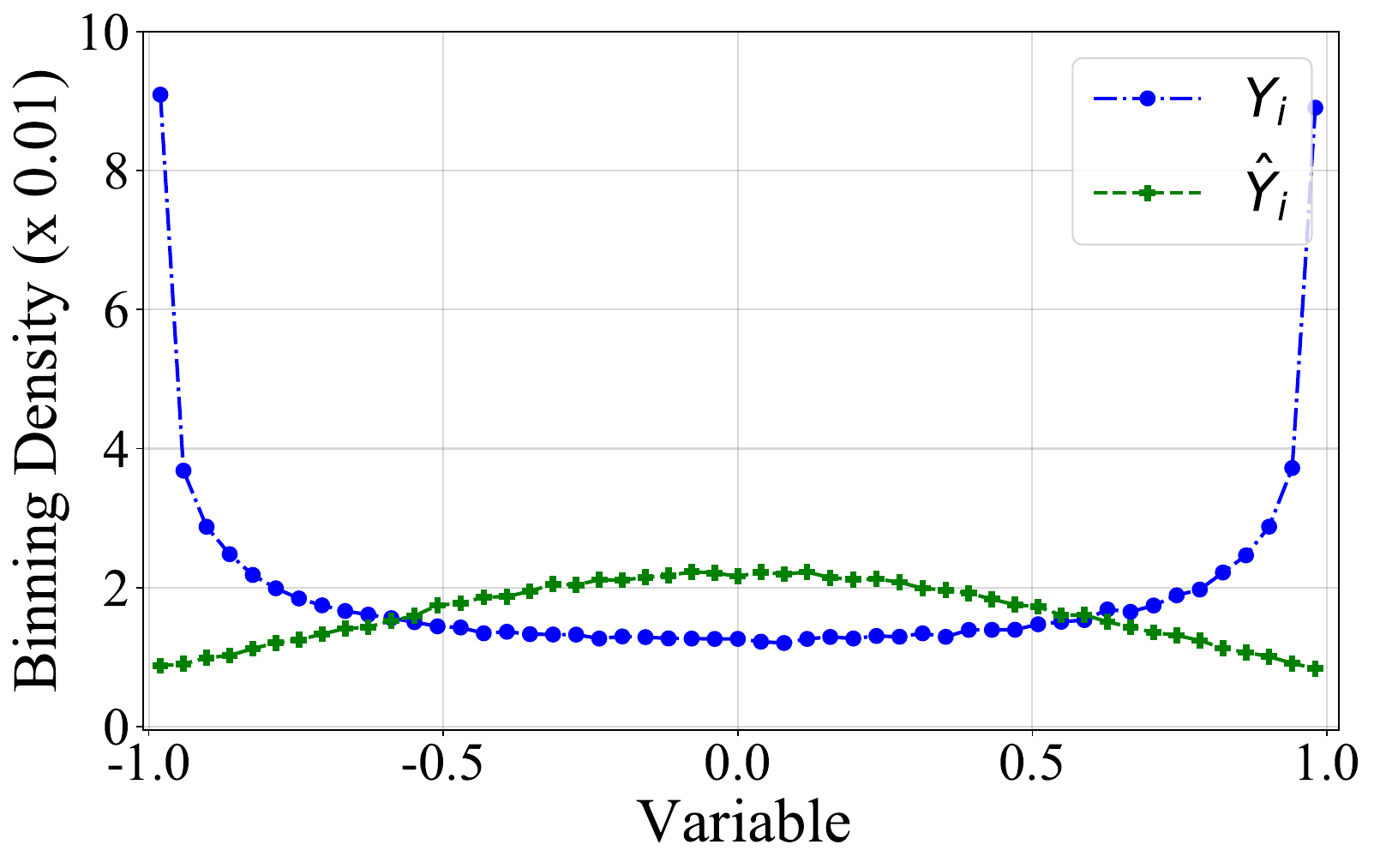}
	}
	\subfigure[$m=4$]{ 
		\label{fig:binning density m=4}  
		\includegraphics[width=0.225\textwidth]{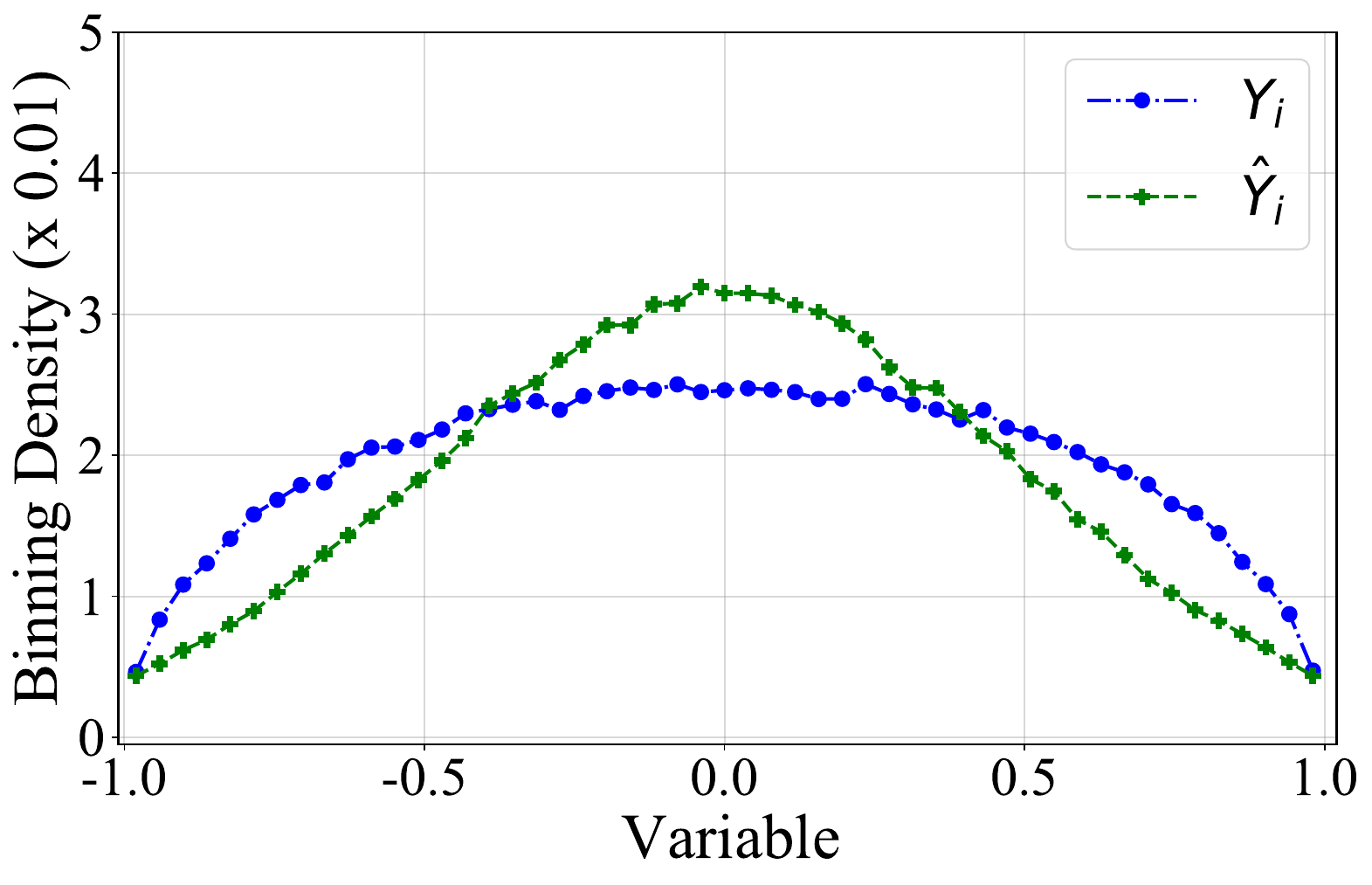}
	}
	\subfigure[$m=8$]{ 
		\label{fig:binning density m=8}  
		\includegraphics[width=0.225\textwidth]{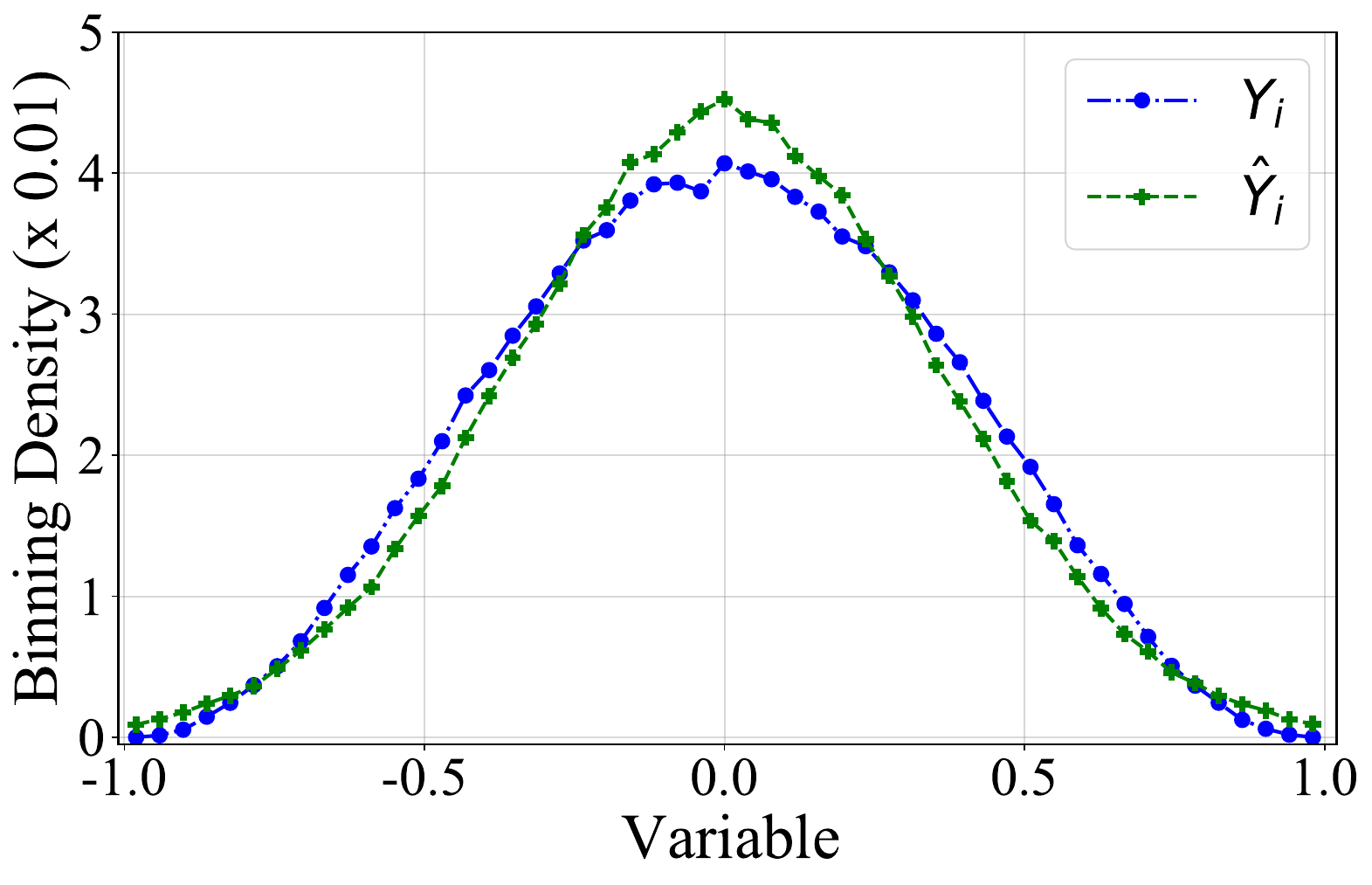}
	}
	\subfigure[$m=16$]{ 
		\label{fig:binning density m=16}  
		\includegraphics[width=0.225\textwidth]{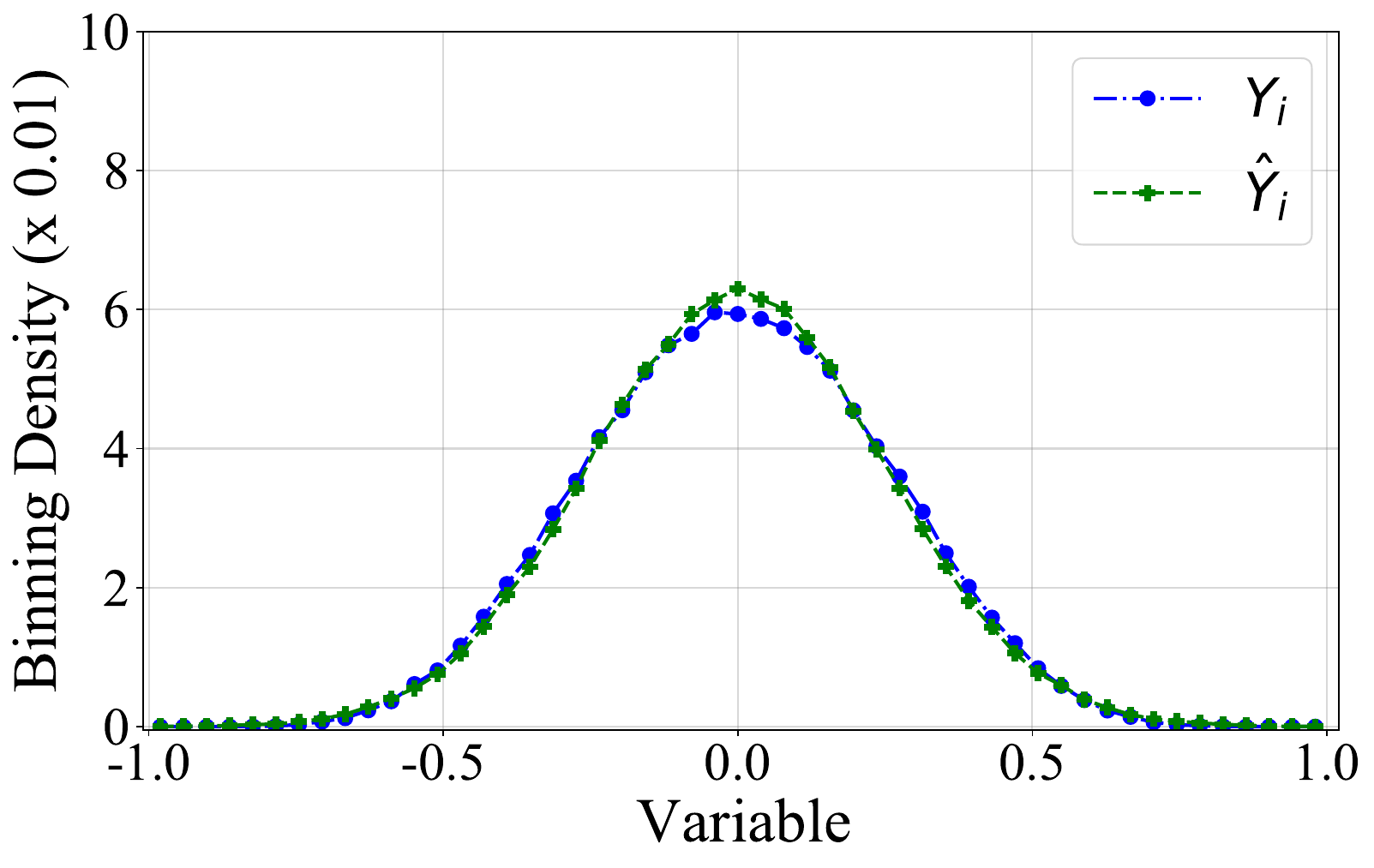}
	}
        \vspace{-4mm}
	\subfigure[$m=32$]{ 
		\label{fig:binning density m=32}  
		\includegraphics[width=0.225\textwidth]{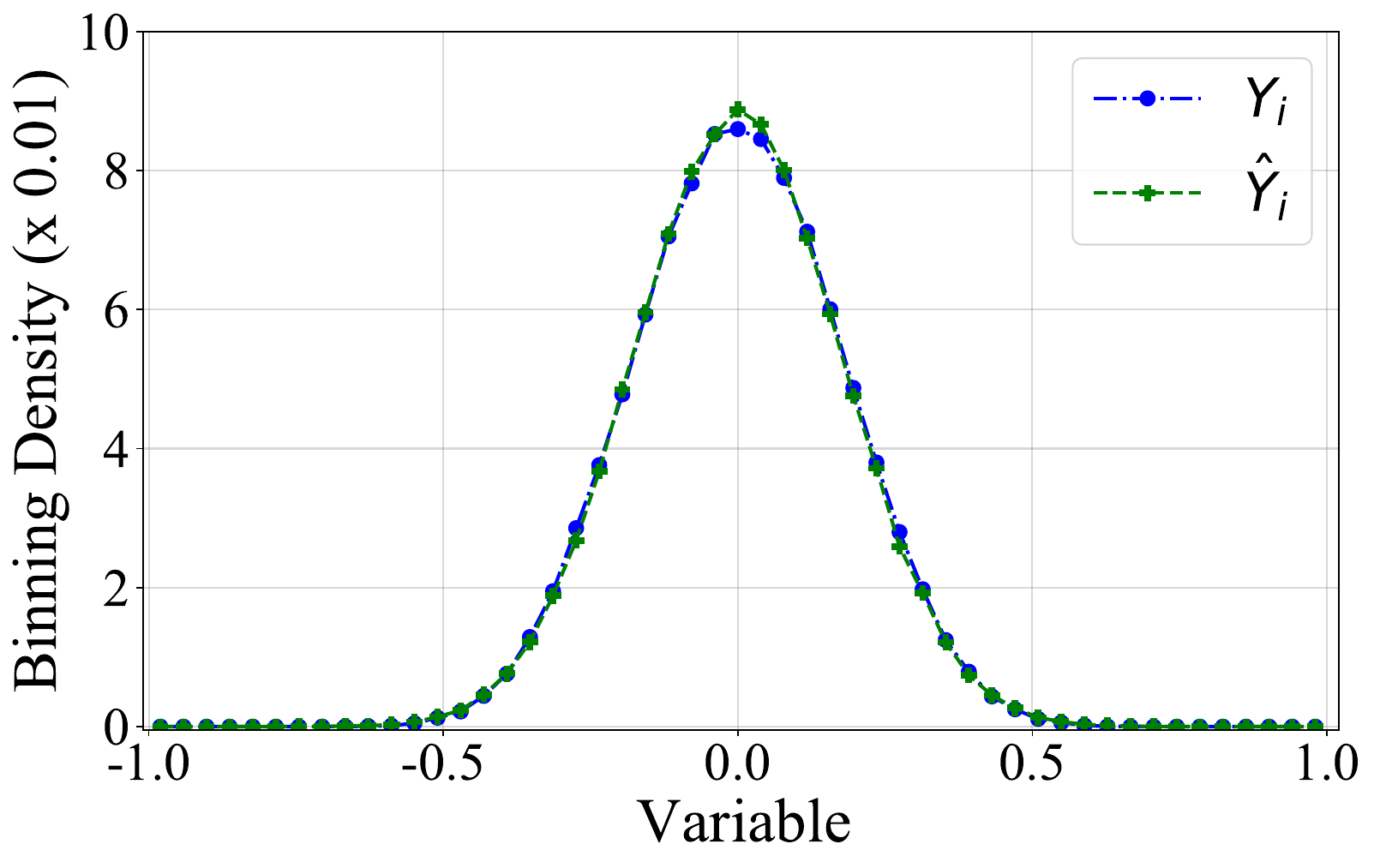}
	}
	\subfigure[$m=64$]{ 
		\label{fig:binning density m=64}  
		\includegraphics[width=0.225\textwidth]{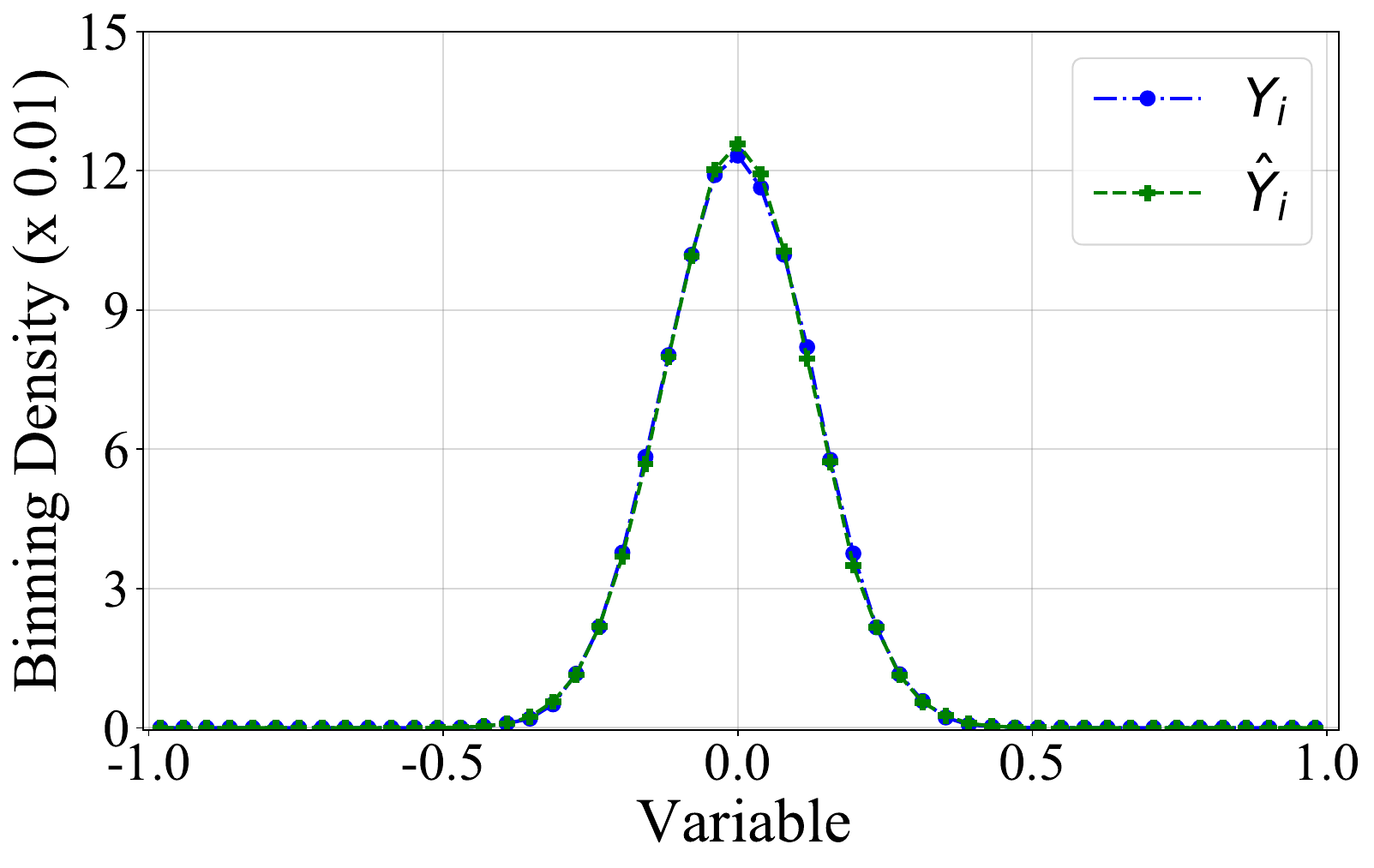}
	}
	\subfigure[$m=128$]{ 
		\label{fig:binning density m=128}  
		\includegraphics[width=0.225\textwidth]{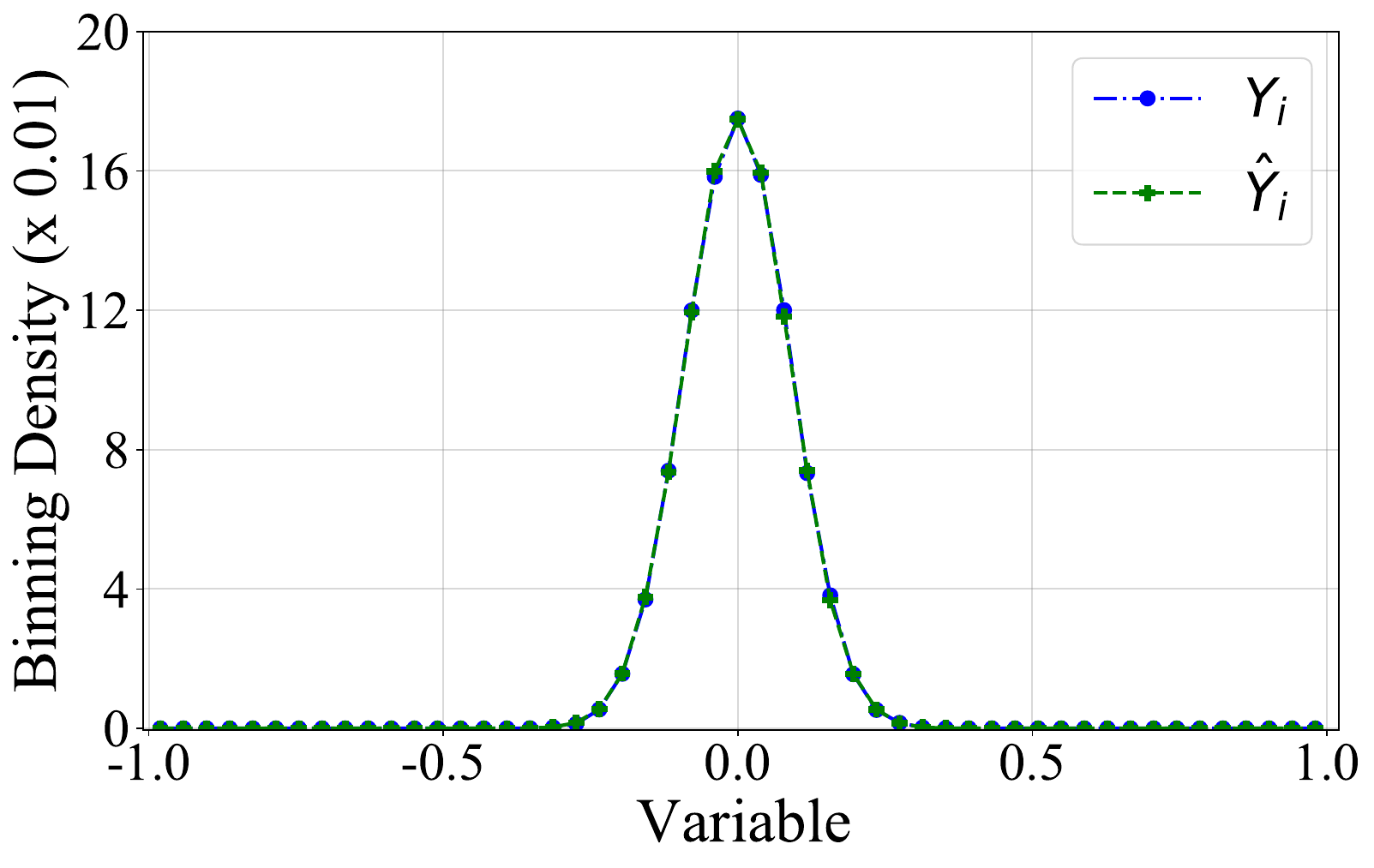}
	}
	\subfigure[$m=256$]{ 
		\label{fig:binning density m=256}  
		\includegraphics[width=0.225\textwidth]{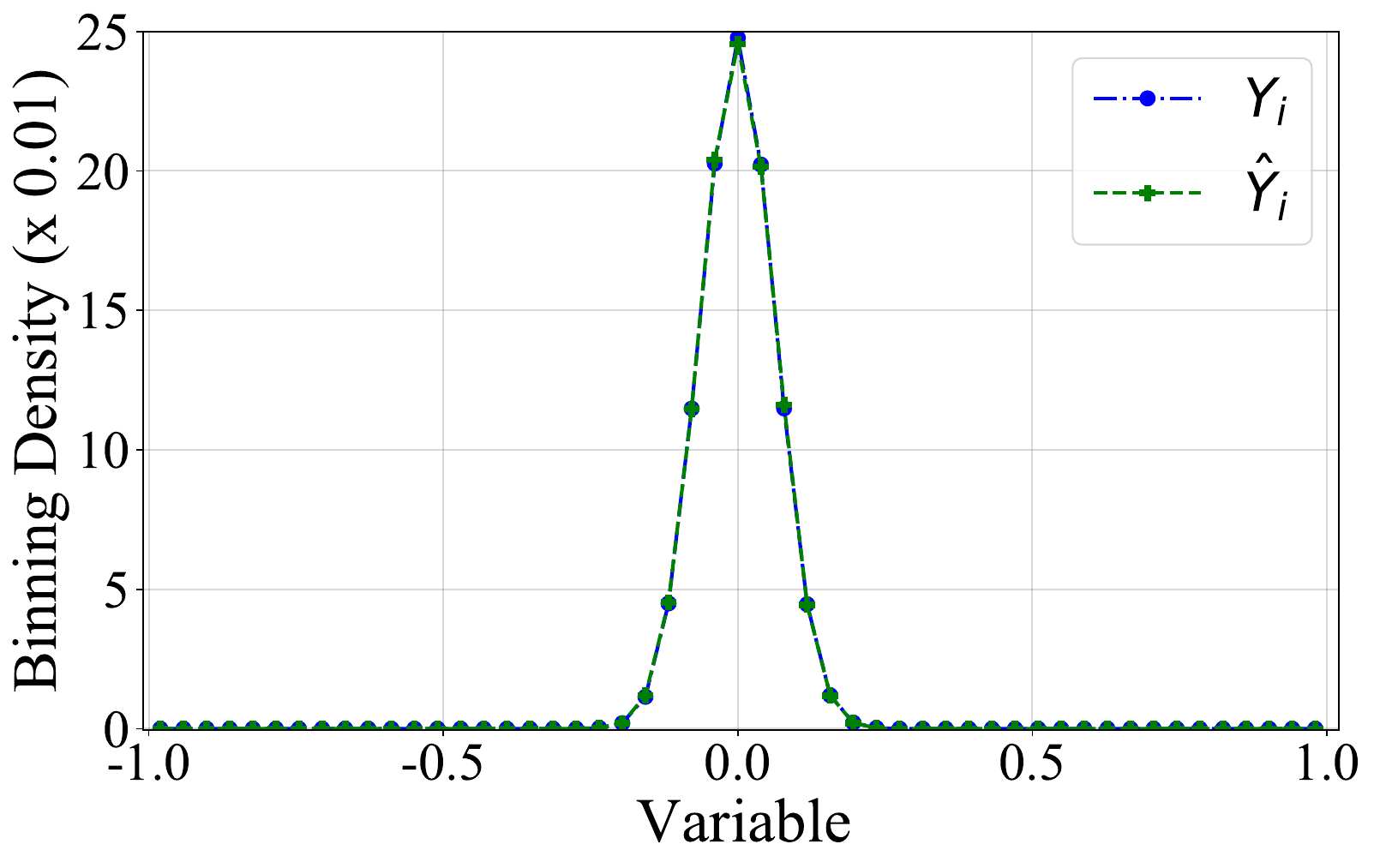}
	}
	\caption{Comparing the binning densities of $Y_i$ and $\hat{Y}_i$ with various dimensions.}
    \label{fig:binning density over dimensions}
    \vspace{-5mm}
\end{figure*}

By binning 2,000,000 data points into $51 \times 51$ groups in two-axis, we also analyze the joint binning densities and present 2D joint binning densities of  $(Y_i, Y_j)$ ($i \neq j$) in Figure~\ref{fig:2d density of y_i} and  $(\hat{Y}_i, \hat{Y}_j)$ ($i \neq j$) in Figure~\ref{fig:2d density of hat y_i}. Even if $m$ is relatively small (i.e., 32), the densities of the two distributions are close.

\begin{figure*}[h]
\vspace{-6pt}
	\centering
	\subfigure[Density for $Y_i$ and $Y_j$ ]{
		\label{fig:2d density of y_i}
		\includegraphics[width=0.38\textwidth]{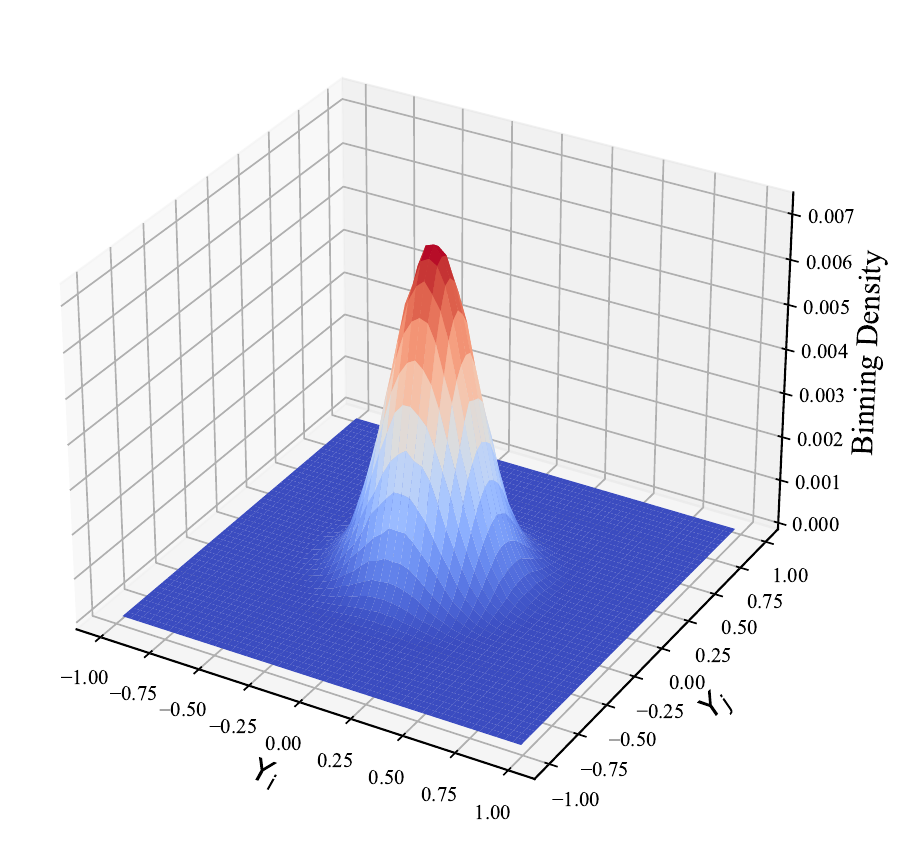}
	}
	\subfigure[Density for $\hat{Y}_i$ and $\hat{Y}_j$]{
		\label{fig:2d density of hat y_i}
		\includegraphics[width=0.38\textwidth]{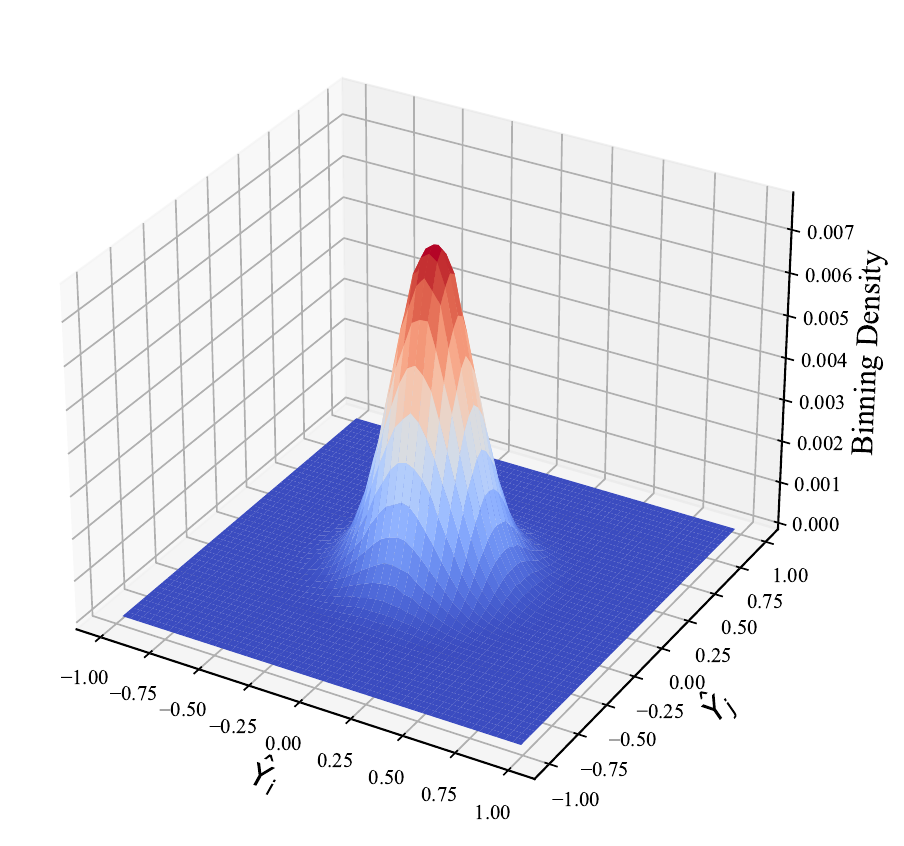}
	}
	\vspace{-4mm}
	\caption{Visualization of two arbitrary dimensions for  $\m Y$ and $ \hat{\m Y} $ when $m=32$.}
	\vspace{-4mm}
	\label{fig:density estimate}
\end{figure*}

\section{Additional synthetic studies}
\label{appendix: supplementary empirical study}

\subsection{Correlation between $-\mathcal{L_U}$ and $-\mathcal{W}_{2}$}
\label{Appendix:correlation}

\begin{wrapfigure}[12]{r}{0.40\textwidth}
 \vspace{-4pt}
\begin{center}
\includegraphics[width=0.39\textwidth]{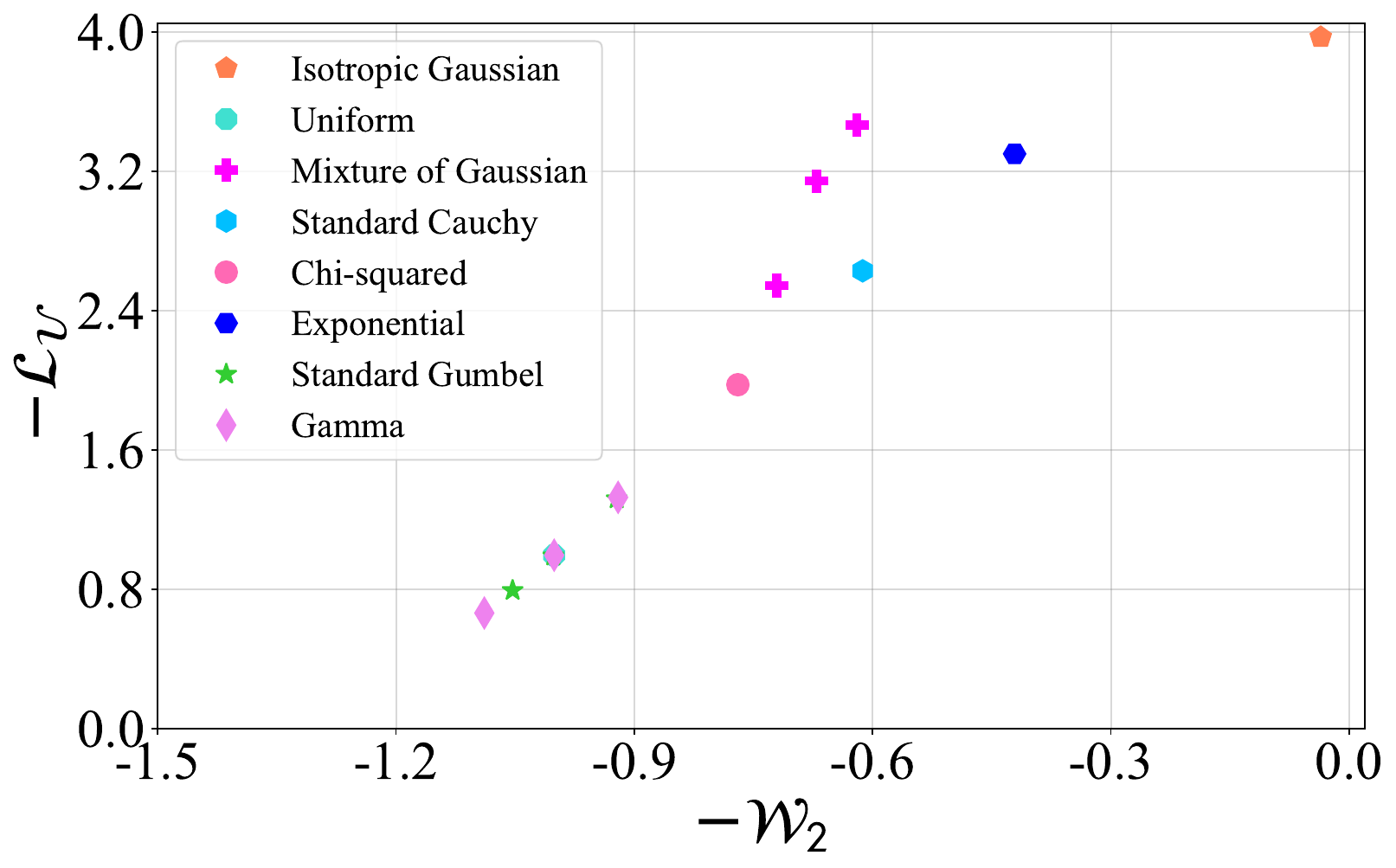}
\vspace{-4mm}
\caption{{Uniformity analysis for various distributions by two metrics.}}
\label{fig:distributions}
\end{center}
\end{wrapfigure} 

We employ synthetic experiments to study the uniformity metrics across different distributions. Specifically, we sample 50,000 data vectors ($m=256$) from different distributions, such as the isotropic Gaussian distribution $\mathcal{N}(\m 0, \m I)$, the uniform distribution on the hyperrectangle $[\m 0, \m 1]$, and the mixture of Gaussians, etc. Then we normalize these data vectors, and estimate the uniformity of different distributions by two metrics. As shown in Fig.~\ref{fig:distributions}, isotropic Gaussian distribution achieves the maximum values for both $-\mathcal{W}_{2}$ and $-\mathcal{L_U}$, which indicates that isotropic Gaussian distribution achieves larger uniformity than other distributions. This empirical result is consistent with  Fact~\ref{proof:maximum uniformity} that the isotropic Gaussian distribution (approximately) achieves the maximum uniformity.

\subsection{On Instance Cloning Constraint}
\label{sec:icc analysis}

\begin{wrapfigure}[11]{r}{0.42\textwidth}
\vspace{-12pt}
\begin{center}
\includegraphics[width=0.41\textwidth]{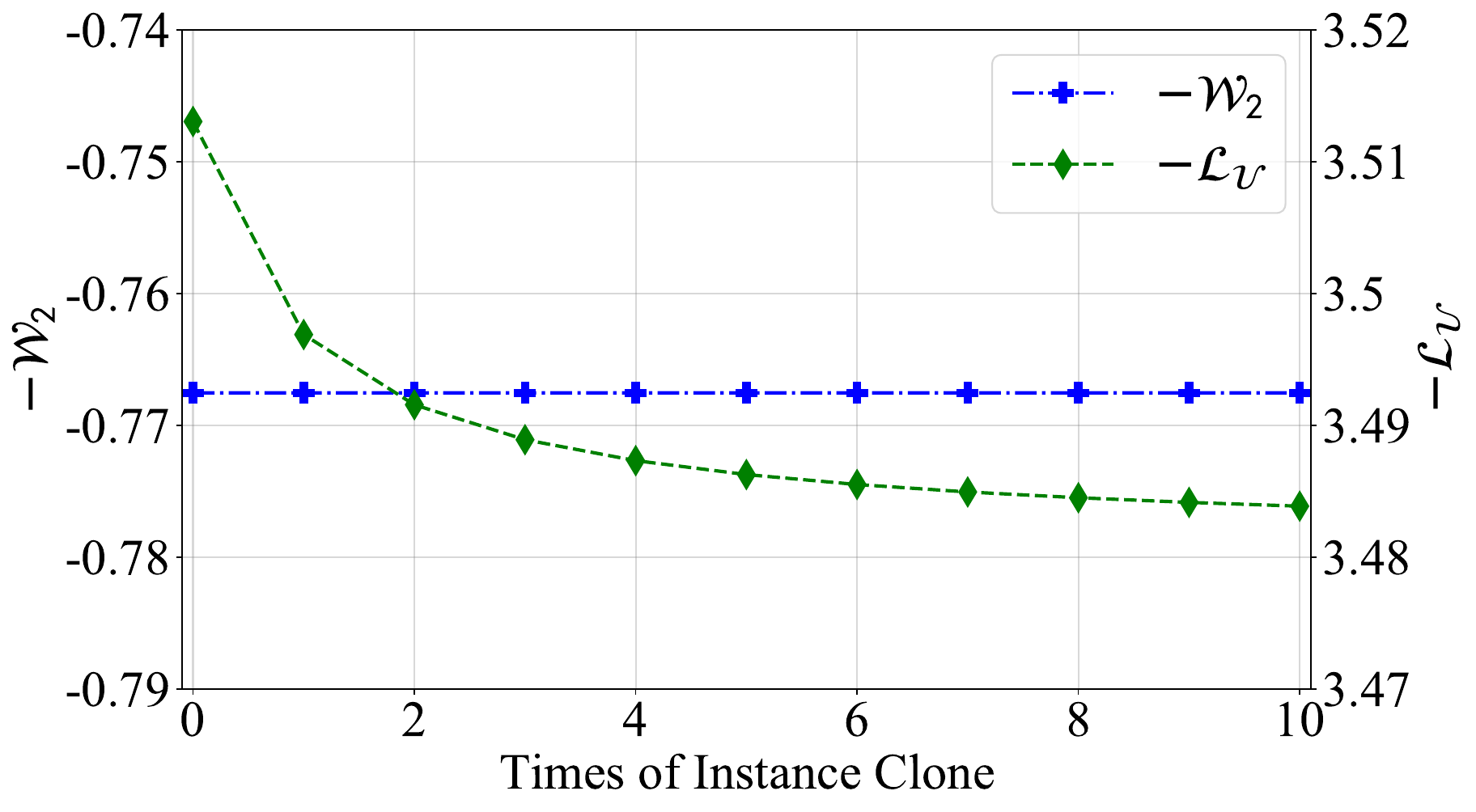}
\vspace{-4mm}
\caption{{ICC analysis.}}
\label{fig:analysisonICC}
\end{center}
\end{wrapfigure} 

In this section, we compare the two metrics in terms of Property \ref{pro:icc} (ICC). Specifically, we randomly sample 1,000 data vectors from the isotropic Gaussian distribution ($m = 32$) and then mask $50\%$ of their coordinates with zeros, forming a new dataset $\mathcal{D}$ with an overload of notation. To investigate the impact of instance cloning, we create multiple clones of the dataset, such as $\mathcal{D}\uplus \mathcal{D}$ and $\mathcal{D}\uplus \mathcal{D}\uplus \mathcal{D}$,  which correspond to one and two times cloning, respectively. We evaluate the two metrics on these datasets. Figure~\ref{fig:analysisonICC} shows that the value of $-\mathcal{L_U}$ slightly decreases as the number of clones increases, indicating that $-\mathcal{L_U}$ violates the equality in Equation~\ref{eq:icc}. In contrast, our proposed metric $-\mathcal{W}_{2}$ remains constant, satisfying the equality.

\subsection{Understanding Property~\ref{pro:fbc}: Why does it relate to dimensional collapse?}

This section delves into Property~\ref{pro:fbc} through case studies. Let us begin with a thought experiment. Consider a dataset $\mathcal{D}$ with instances uniformly distributed on the unit hypersphere, thereby possessing (almost) maximal uniformity. When additional coordinates with zeros are inserted  to each instance of $\mathcal{D}$, forming a new dataset $\mathcal{D} \oplus \m 0^{k}$, it can no longer maintain maximal uniformity. This is because, the new dataset only occupies a small area of the unit hypersphere. Consequently, as $k$ increases, the uniformity of the dataset would decrease significantly.

\begin{figure*}[t]
    \vspace{-3mm}
	\centering
	\subfigure[Two-dimensional visualization with no collapsed dimension]{ 
		\label{fig:2d dimensional collapse}  
		\includegraphics[width=0.22\textwidth]{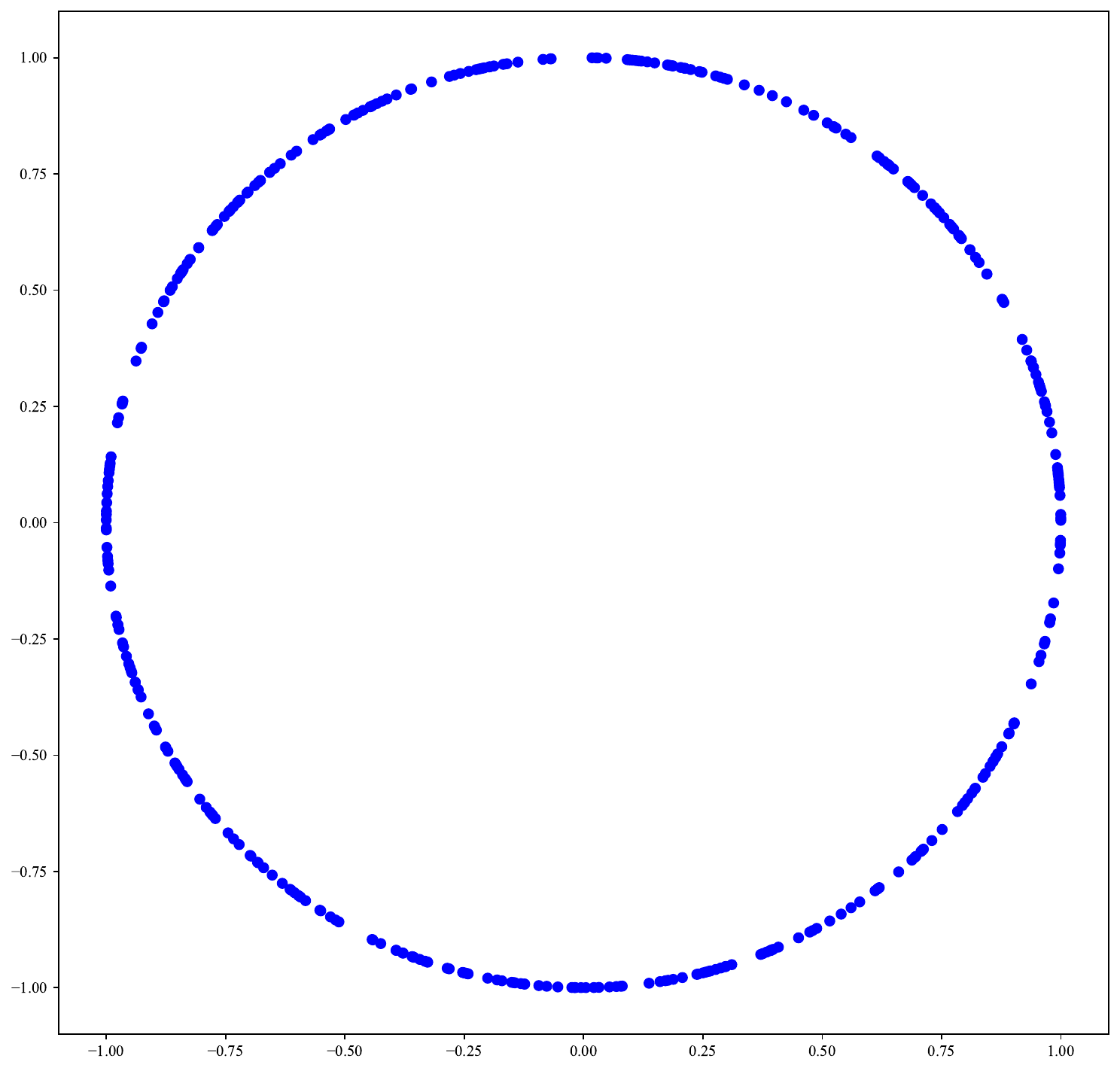}
	}
        \hspace{0.2cm}
	\subfigure[Three-dimensional visualization with one collapsed dimension]{
		\label{fig:3d yes dimensional collapse}
		\includegraphics[width=0.28\textwidth]{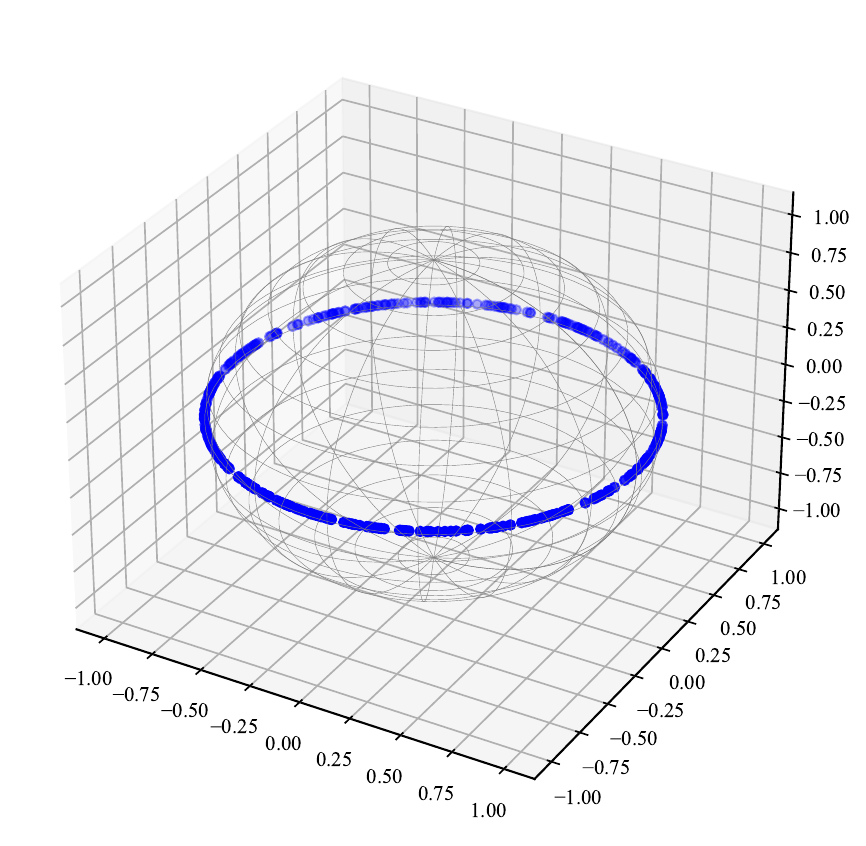}
	}
        \hspace{0.2cm}
	\subfigure[Three-dimensional visualization with no collapsed dimension]{
		\label{fig:3d no dimensional collapse}
		\includegraphics[width=0.28\textwidth]{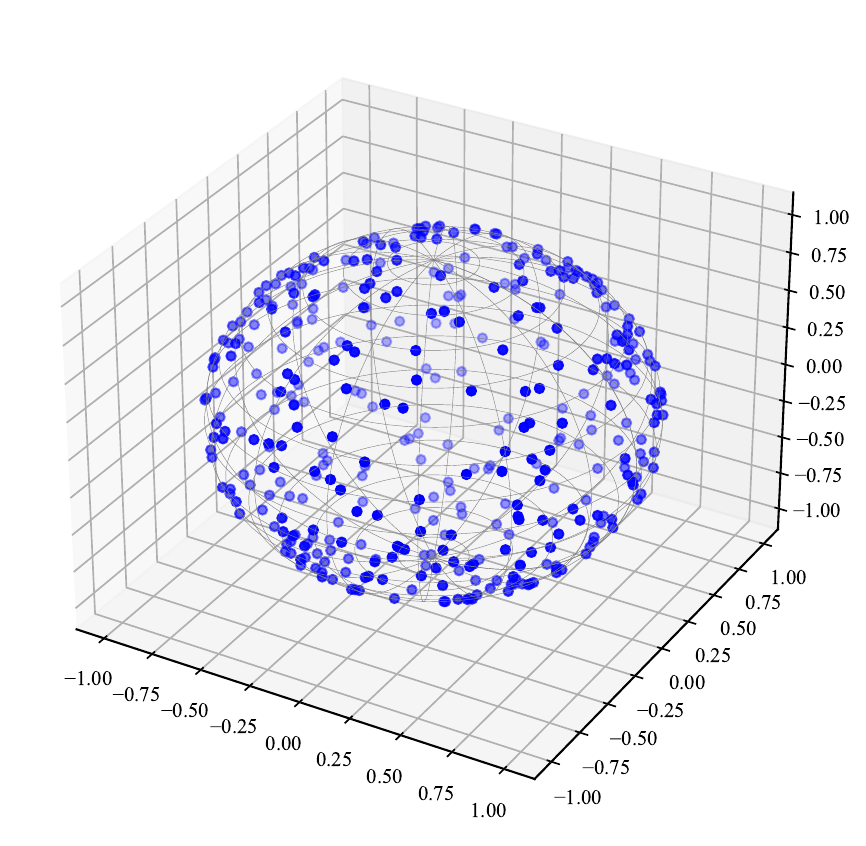}
	}
	\vspace{-4mm}
	\caption{{A case study for Property~\ref{pro:fbc} and blue points are data vectors.}}
	\label{fig:visualization for dimensional collapse analysis}
\end{figure*}

Let us visualize this thought experiment using synthetic studies. In Figure~\ref{fig:2d dimensional collapse}, we present 400 data vectors ($\mathcal{D}_1$) sampled from $\mathcal{N}(\m 0, \m I_2)$, which are also nearly uniformly distributed on $\mathcal{S}^{1}$. By inserting one zero-coordinate to each instance of $\mathcal{D}_1$, we obtain a new dataset $\mathcal{D}_1 \oplus \m 0^{1}$, as depicted in Figure~\ref{fig:3d yes dimensional collapse}. We also construct another dataset $\mathcal{D}_2$ consisting of 400 data vectors sampled from $\mathcal{N}(\m 0, \m I_3)$, visualized in Figure~\ref{fig:3d no dimensional collapse}. Notably, $\mathcal{D}_1 \oplus \m 0^{1}$ forms a ring on $\mathcal{S}^{2}$, while $\mathcal{D}_2$ is almost uniformly distributed over $\mathcal{S}^{2}$. Naturally, $\mathcal{U}(\mathcal{D}_2) > \mathcal{U}(\mathcal{D}_1 \oplus \m 0^{1})$. If $\mathcal{U}(\mathcal{D}_1) = \mathcal{U}(\mathcal{D}_2)$\footnote{Intuitively, maximal uniformity should stay constant regardless of dimensions; otherwise the corresponding uniformity metric exhibit a preference for larger or smaller dimensions. }, then $\mathcal{U}(\mathcal{D}_1) = \mathcal{U}(\mathcal{D}_2) > \mathcal{U}(\mathcal{D}_1 \oplus \m 0^{1})$. This partially confirms the validity of Property \ref{pro:fbc}.

\begin{wrapfigure}[11]{r}{0.42\textwidth}
\vspace{-15pt}
\begin{center}
\includegraphics[width=0.40\textwidth]{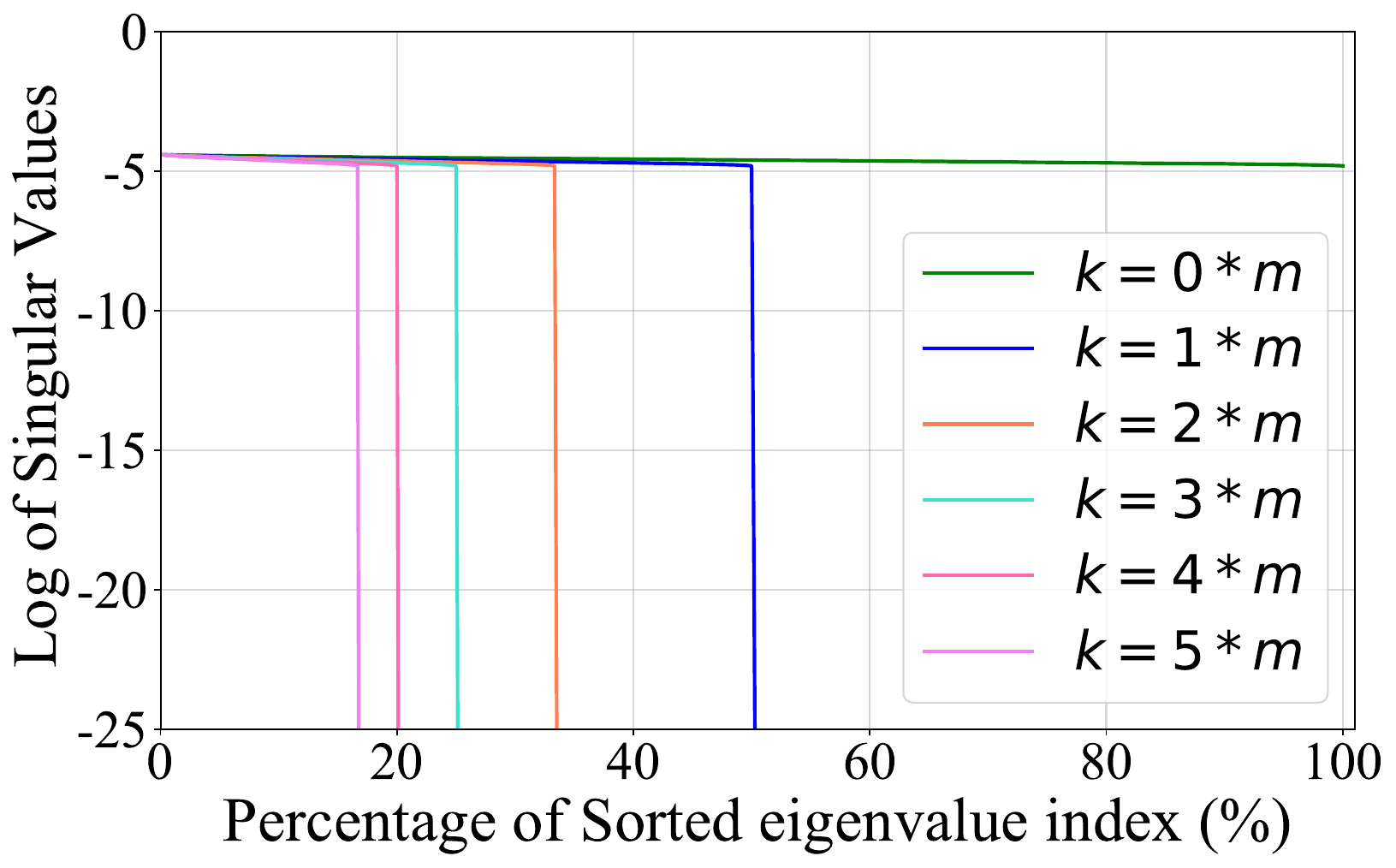}
\vspace{-4mm}
\caption{Singular value spectrum of $\mathcal{D} \oplus \m 0^{k}$.}
\label{fig:property 5 svd}
\end{center}
\end{wrapfigure} 

Additionally, increasing the value of $k$ in Property~\ref{pro:fbc} exacerbates the degree of dimensional collapse. To illustrate, consider a dataset $\mathcal{D}$ sampled from a multivariate Gaussian distribution $\mathcal{N}(\mathbf{0}, \mathbf{I}_m/m)$, exhibiting a collapse degree close to $0\%$. However, upon inserting $m$-dimensional zero-value vectors to each instance of  $\mathcal{D}$, denoted as $\mathcal{D} \oplus \mathbf{0}^m$, half of the dimensions collapse. Consequently, the collapse degree increases to $50\%$. Figure~\ref{fig:property 5 svd} visually represents the collapse of $\mathcal{D} \oplus \mathbf{0}^k$ using the singular value spectra of the representations. It is evident that a larger $k$ results in a more pronounced dimensional collapse. In summary, Property~\ref{pro:fbc} corresponds to dimensional collapse.

\subsection{Understanding $\mathcal{W}_{2}$: Large means may lead to collapse}
\label{sec:role of mean}

\begin{figure*}[!ht]
	\centering
	\subfigure[$\mathcal{N}(\m 0, \m I_{2})$]{ 
		\label{fig:mean 0)}  
		\includegraphics[width=0.22\textwidth]{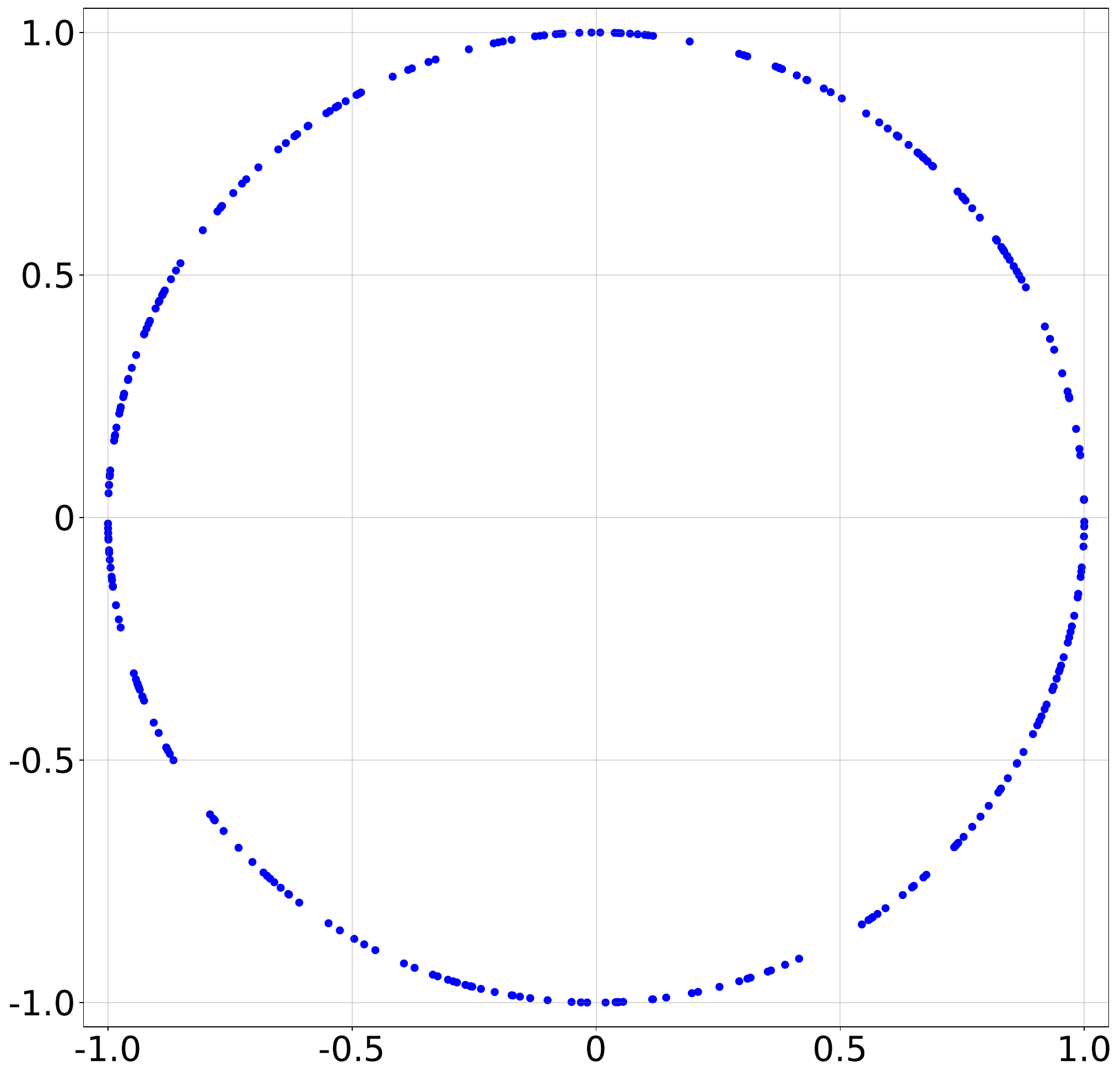}
	}
	\hspace{-0.2cm}
	\subfigure[$\mathcal{N}(0.5\cdot \m {1}, \m I_{2})$]{
		\label{fig:mean 0.5}
		\includegraphics[width=0.22\textwidth]{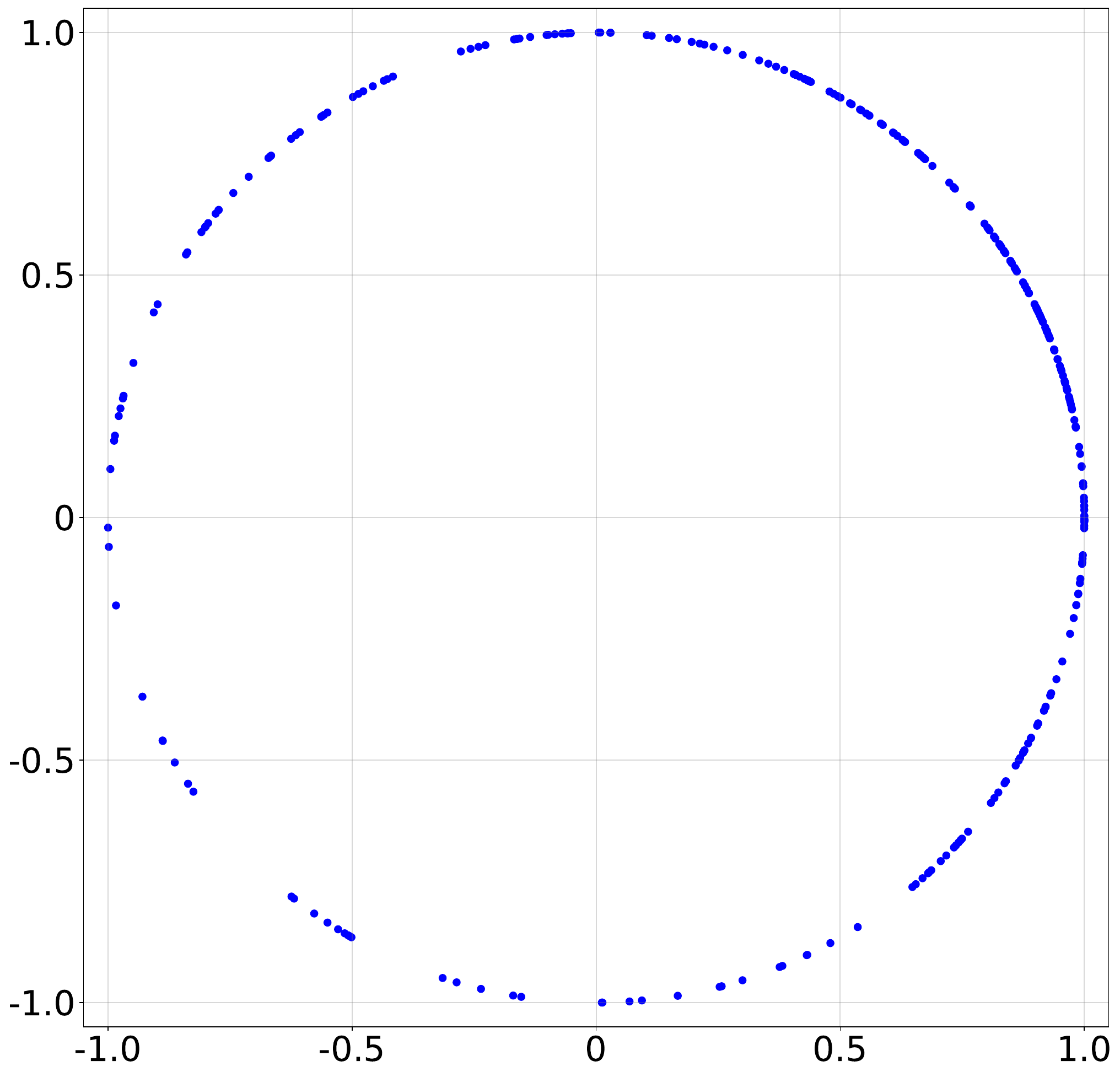}
	}
	\hspace{-0.2cm}
	\subfigure[$\mathcal{N}(1\cdot \m {1}, \m I_{2})$]{
		\label{fig:mean 1.0}
		\includegraphics[width=0.22\textwidth]{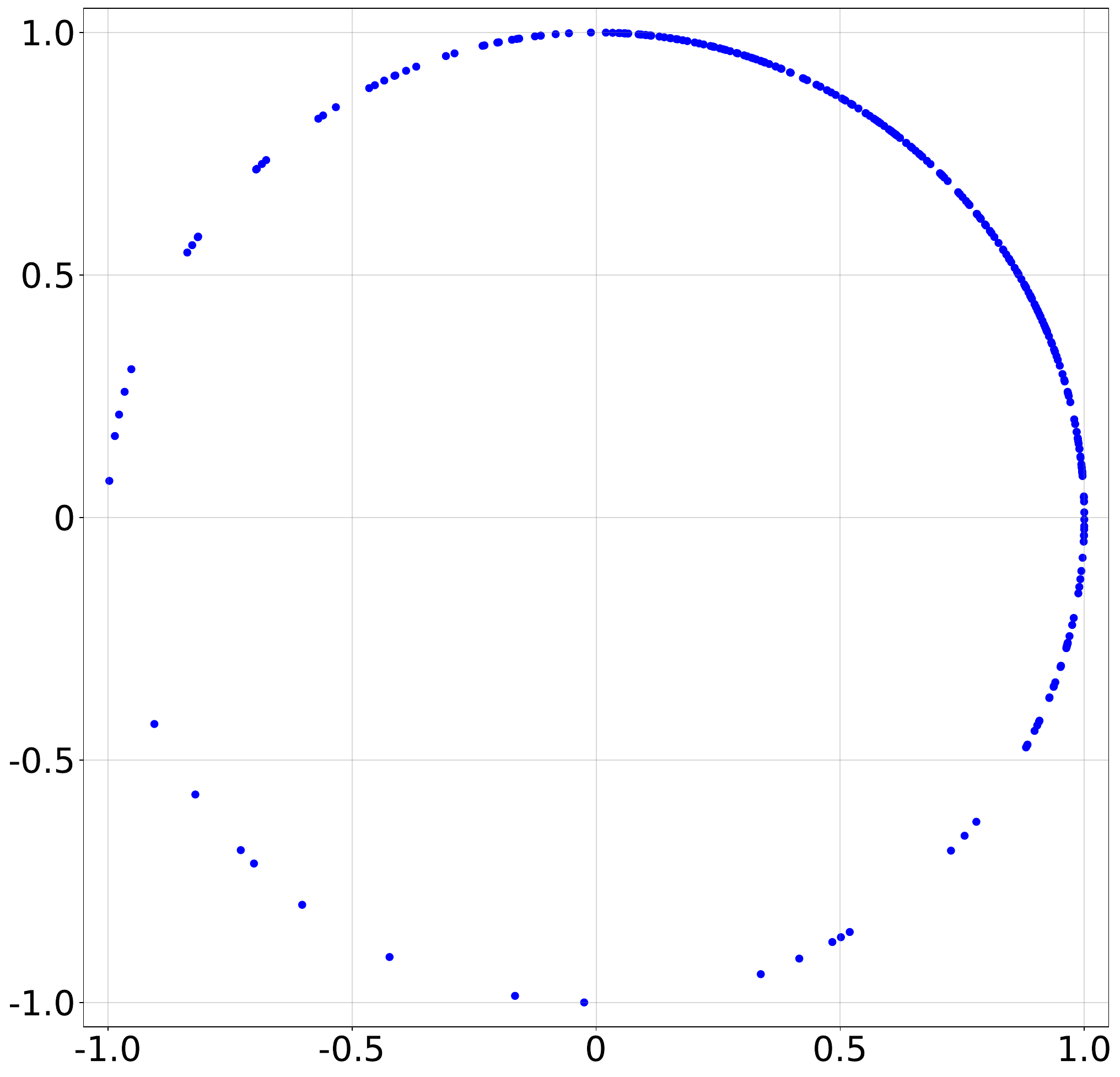}
	}
	\centering
	\subfigure[$\mathcal{N}( 2\cdot\m {1}, \m I_{2})$]{ 
		\label{fig:mean 2.0}  
		\includegraphics[width=0.22\textwidth]{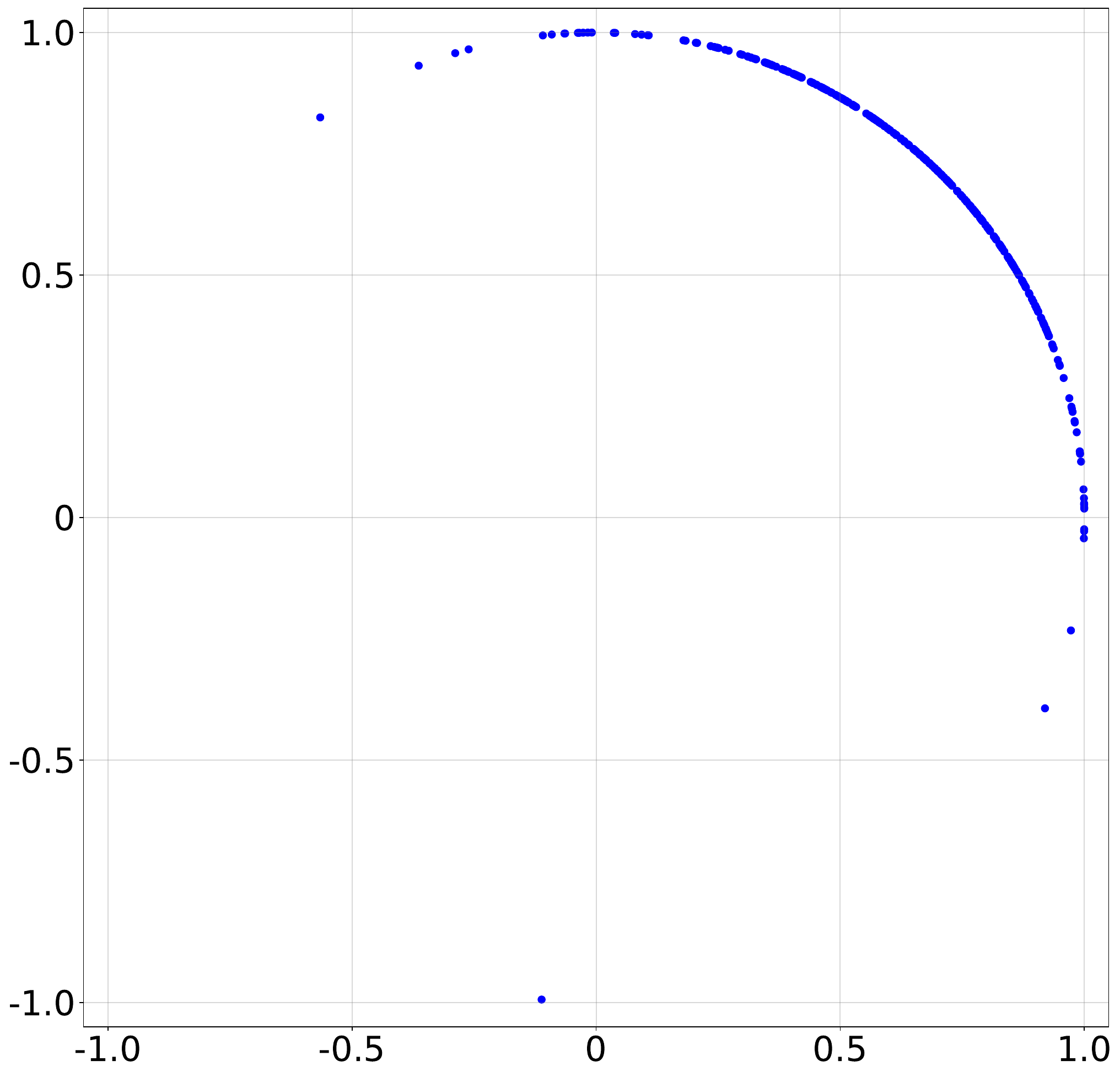}
	}
    \vspace{-4mm}
	\subfigure[$\mathcal{N}(4 \cdot\m {1}, \m I_{2})$]{
		\label{fig:mean 4.0}
		\includegraphics[width=0.22\textwidth]{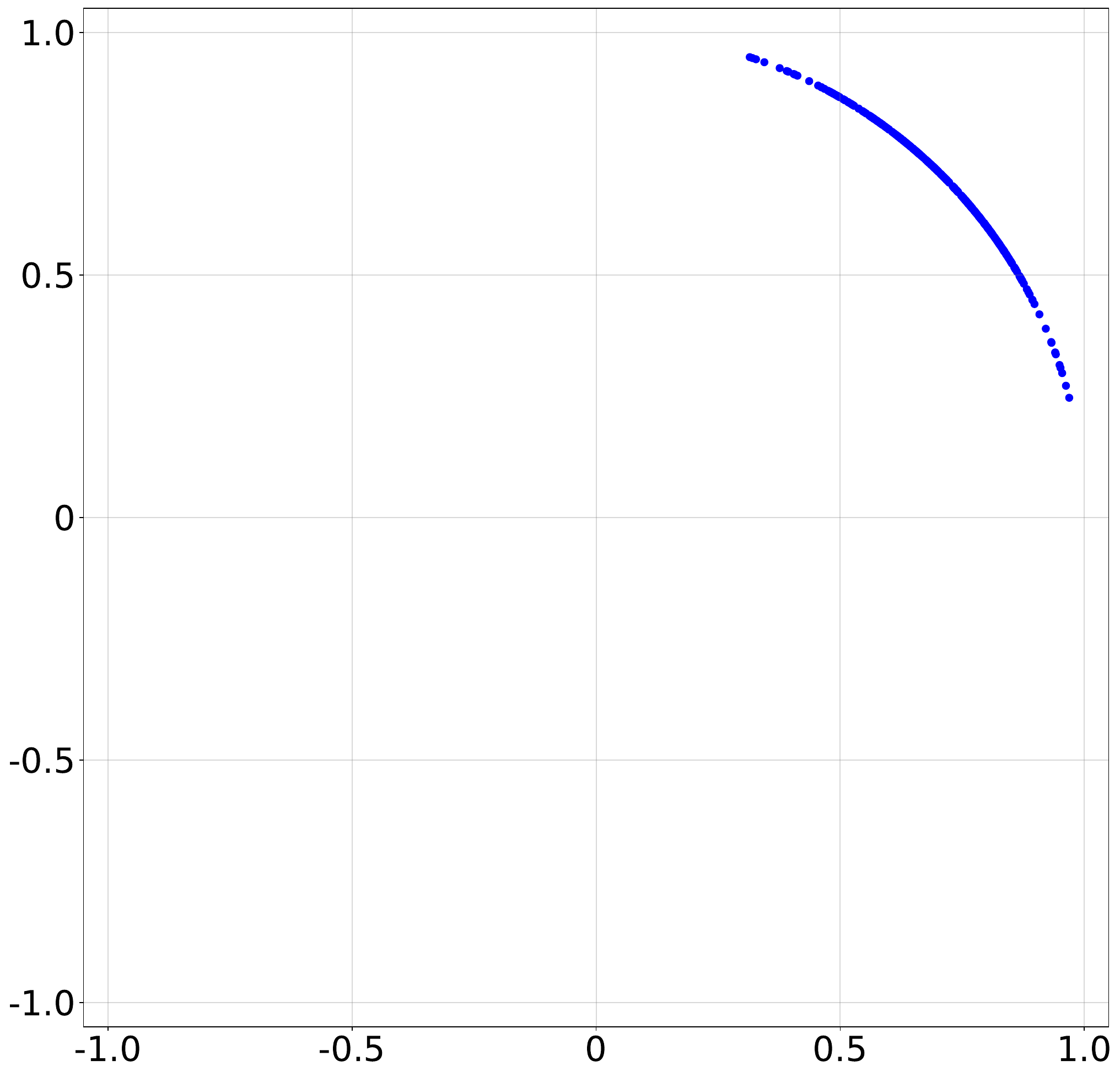}
	}
	\hspace{-0.2cm}
	\subfigure[$\mathcal{N}(8 \cdot\m {1}, \m I_{2})$]{
		\label{fig:mean 8.0}
		\includegraphics[width=0.22\textwidth]{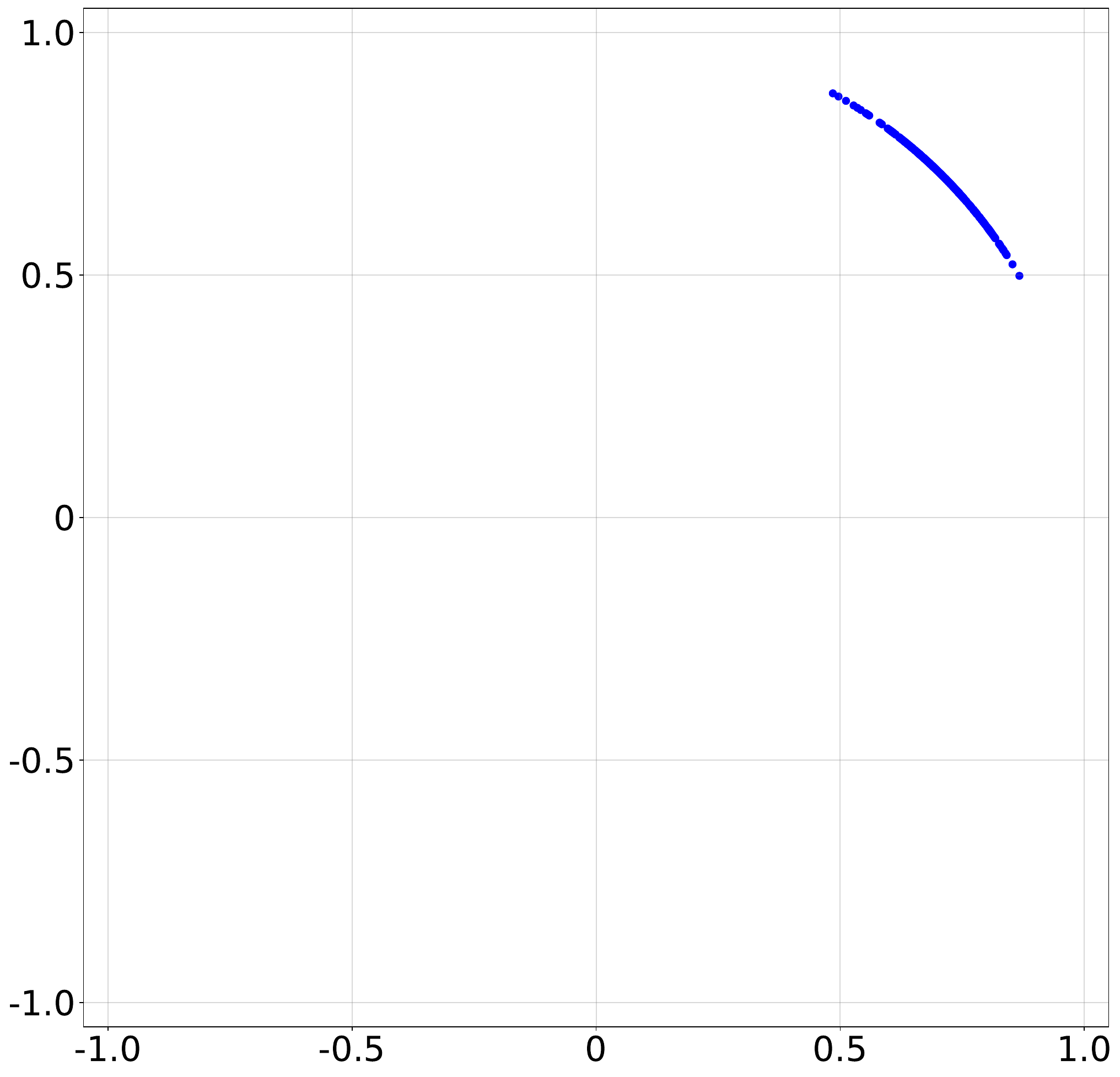}
	}
    \hspace{-0.2cm}
	\subfigure[$\mathcal{N}(16 \cdot\m {1}, \m I_{2})$]{
		\label{fig:mean 16.0}
		\includegraphics[width=0.22\textwidth]{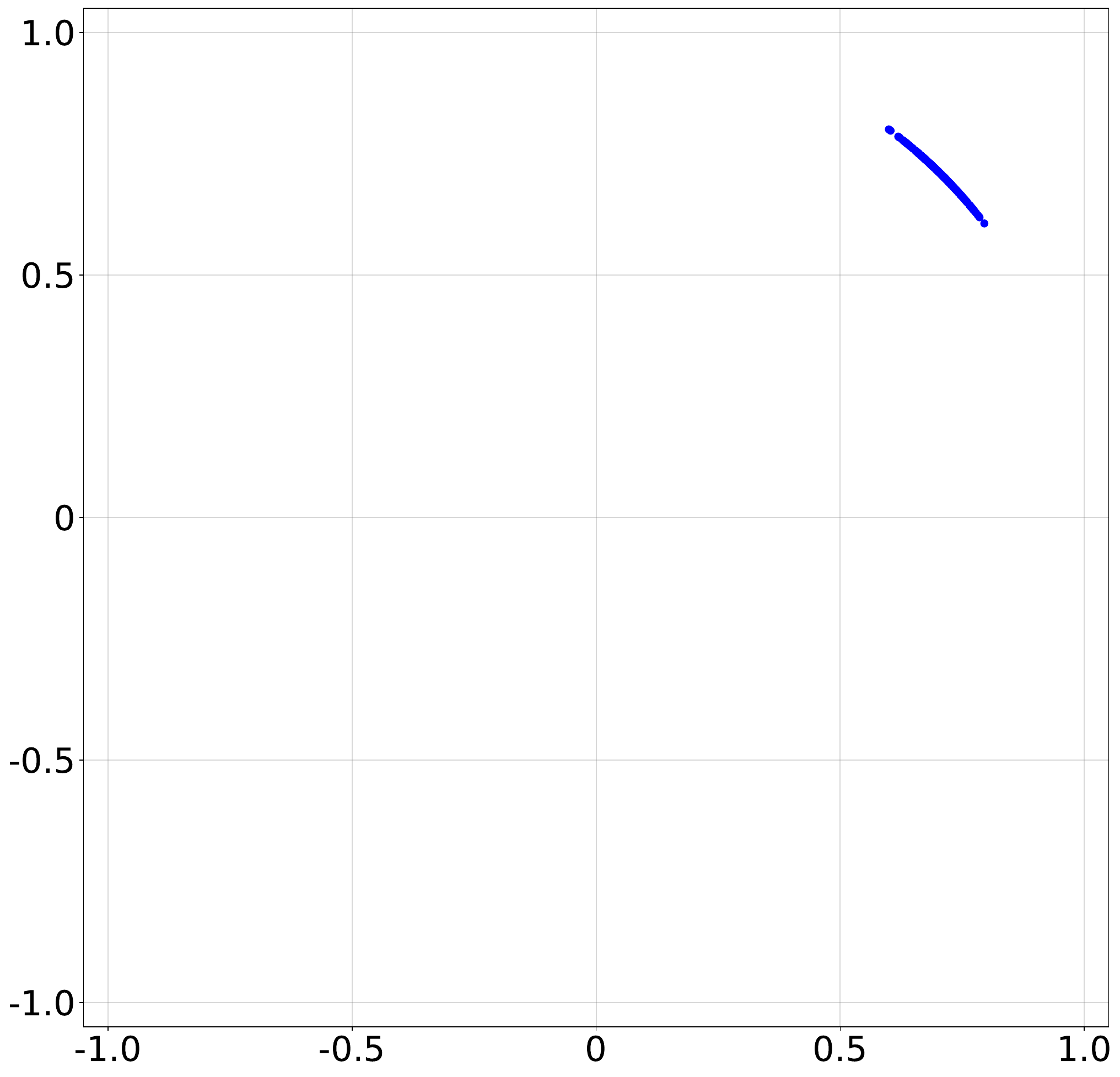}
	}
    \hspace{-0.2cm}
	\subfigure[$\mathcal{N}(32 \cdot\m {1}, \m I_{2})$]{
		\label{fig:mean 32.0}
		\includegraphics[width=0.22\textwidth]{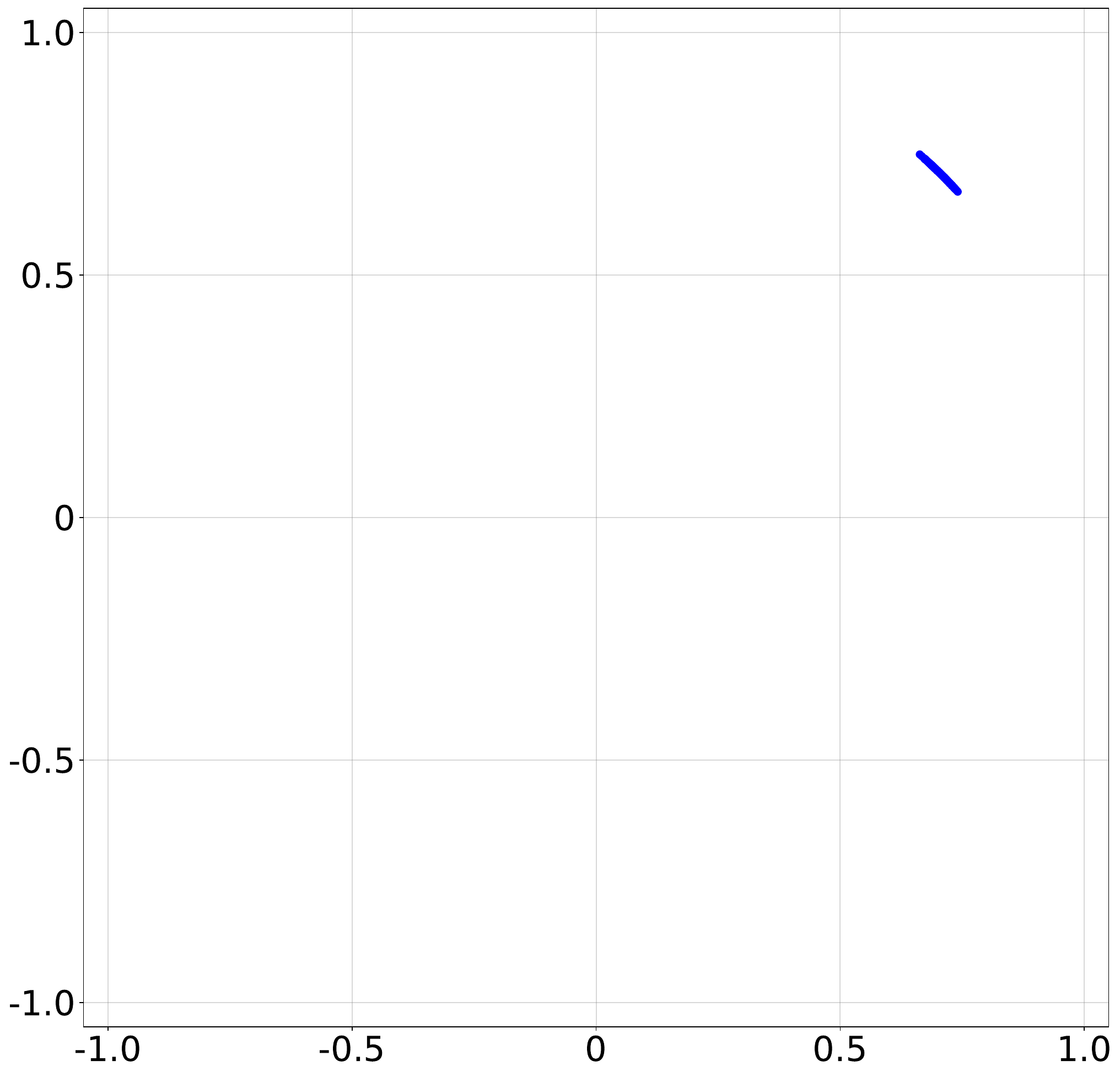}
	}
	\caption{Visualizing $\ell_2$ normalized Gaussian vectors with different means.}
	\label{fig:normalized gaussian distribution}
	\vspace{-2mm}
\end{figure*}

In this section, we explore our uniformity loss $\mathcal{W}_{2}$. This loss embodies two primary constraints. Firstly, it promotes the covariance matrix to be isotropic (specifically $\mathbf{I}_m/m$). Secondly, it enforces the mean to be zero. The latter constraint on the mean is crucial. To illustrate, we present a case study demonstrating that deviating the mean from zero compromises uniformity, even if the covariance matrix is precisely $\mathbf{I}_m/m$ and thus isotropic. Means deviating from zero may result in dimensional collapse and even constant collapse.


Assuming $\mathbf{X}\in \mathbb{R}^2$ follows a Gaussian distribution $\mathcal{N}(\mathbf{0}, \mathbf{I}_2)$, let $ \mathbf{Y} = \mathbf{X} + k\cdot \mathbf{1}$ such that $ \mathbf{Y} \sim \mathcal{N}(k \cdot\mathbf{1}, \mathbf{I}_2)$, where $\mathbf{1} \in \mathbb{R}^k$ represents a vector of all ones. We vary $k$ from $0$ to $32$ and visualize the $\ell_2$-normalized $\mathbf{Y}$'s in Figure~\ref{fig:normalized gaussian distribution} (by generating multiple independent copies).  It is clear that an excessively large means will cause representations to collapse to a single point, even if the covariance matrix is isotropic.
    
\section{Experiment settings and convergence analysis}\label{sec:setting and convergence}

\subsection{Experiment settings}
\label{sec:parameter setting}

To ensure fair comparisons, all experiments in Section~\ref{sec:experiment} are conducted on a single 1080 GPU. Additionally, we maintain consistency in network architecture across all models, utilizing ResNet-18~\citep{He2016DeepRL} as the backbone and a three-layer MLP as the projector. The LARS optimizer~\citep{You2017ScalingSB} is employed with a base learning rate of $0.2$, accompanied by a cosine decay learning rate schedule~\citep{Loshchilov2017SGDRSG} for all models. Evaluation follows a linear evaluation protocol, where models are pre-trained for 500 epochs. Evaluation involves adding a linear classifier and training the classifier for 100 epochs while preserving the learned representations. The same augmentation strategy is deployed across all models, encompassing various operations such as color distortion, rotation, and cutout. Following~\cite{Costa2022SololearnAL}, we set the temperature $t=0.2$ for all contrastive learning methods. For MoCo~\citep{He2020MomentumCF} and NNCLR~\citep{Dwibedi2021WithAL}, which require an additional queue to store negative samples, we set the queue size to $2^{12}$. Regarding the linear decay for weighting the quadratic Wasserstein distance, refer to Table~\ref{table:parameter setting} for the parameter settings.

\begin{table*}[t]
\centering
\caption{Parameter settings for various models in the experiments.}
\label{table:parameter setting}
\resizebox{0.60\textwidth}{!}{
\begin{tabular}{l c c c c} \hline
Models & MoCo v2 & BYOL & BarlowTwins & Zero-CL \\\hline
$\alpha_{\max}$ & 1.0 & 0.2 & 30.0 & 30.0 \\
$\alpha_{\min}$ & 1.0 & 0.2 & 0 & 30.0 \\\hline
\end{tabular}}
\end{table*}

\subsection{Convergence analysis for Top-1 accuracy}
\label{appendix:convergence analysis}

Here we illustrate the convergence of Top-1 accuracy across all training epochs in Fig~\ref{fig:convergence}. Throughout the training, we capture the model checkpoint at the end of each epoch to train a linear classifier. We subsequently evaluate the Top-1 accuracy on unseen images from the test set (either CIFAR-10 or CIFAR-100).

For both CIFAR-10 and CIFAR-100, we observe that integrating the proposed uniformity metric as an auxiliary loss significantly enhances the Top-1 accuracy, particularly in the initial stages of training.

\subsection{Convergence analysis for uniformity and alignment}\label{appendix:representation analysis}

This section presents the convergence of the uniformity metric  and alignment loss across all training epochs in Figure~\ref{fig:convergence on uniformity} and Figure~\ref{fig:convergence on alignment}, respectively. Throughout the training, we record the model checkpoint at the end of each epoch to evaluate the uniformity using the proposed metric $\mathcal{W}_{2}$ and alignment \citep{Wang2020UnderstandingCR} on unseen images from the test set (either CIFAR-10 or CIFAR-100).

For both CIFAR-10 and CIFAR-100, we observe that integrating the proposed uniformity metric as an auxiliary loss significantly improves uniformity. However, it also slightly compromises alignment (where a smaller alignment loss indicates better alignment). It should be noted that improved uniformity often leads to worse alignment.

\begin{figure*}[h]
    \vspace{-4mm}
	\centering
	\subfigure[MoCo v2 on CIFAR-10]{ 
		\label{fig:moco v2 on cifar-10}  
		\includegraphics[width=0.3\textwidth]{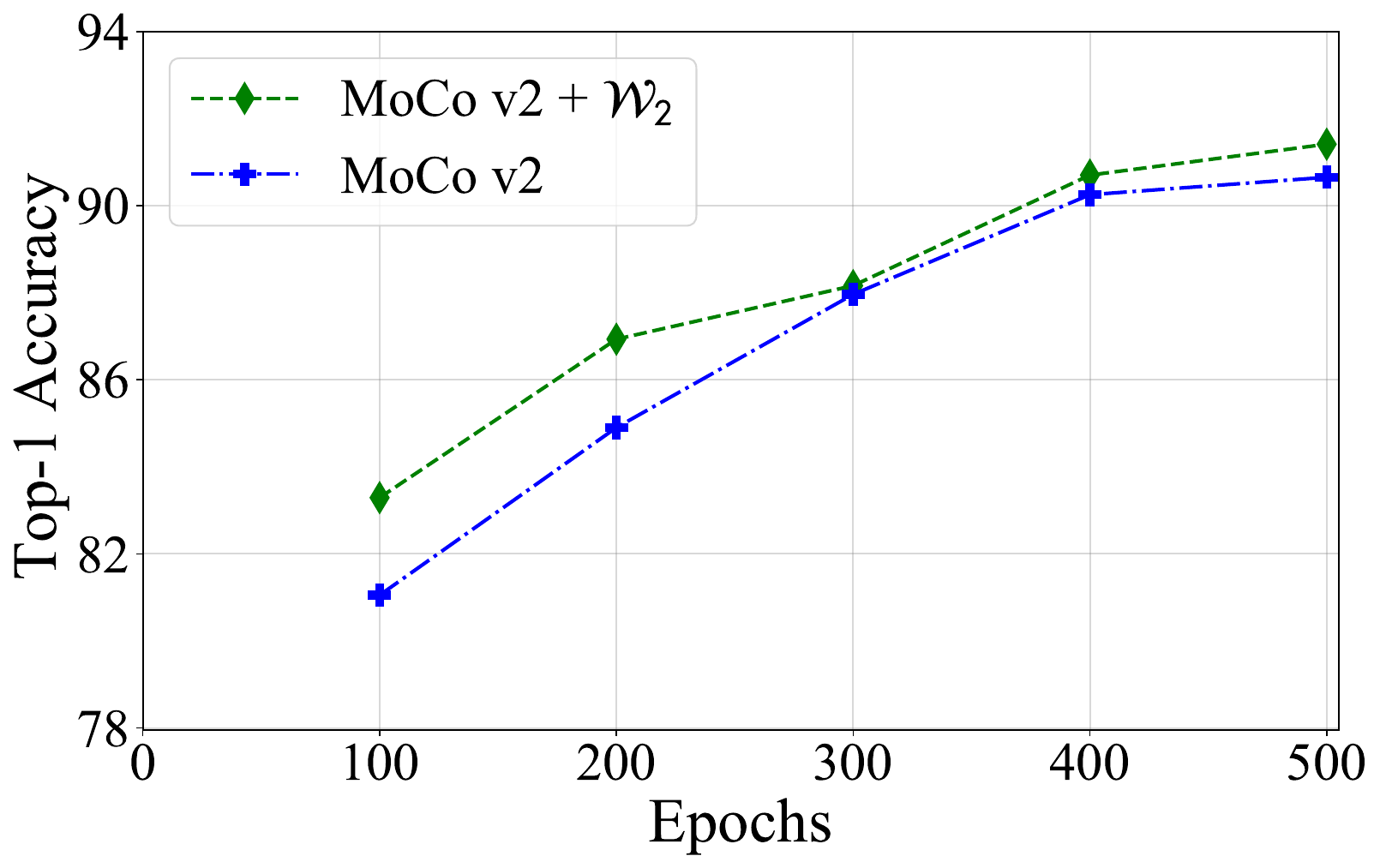}
	}
	\hspace{-0.2cm}
	\subfigure[BYOL on CIFAR-10]{
		\label{fig:byol on cifar-10}
		\includegraphics[width=0.3\textwidth]{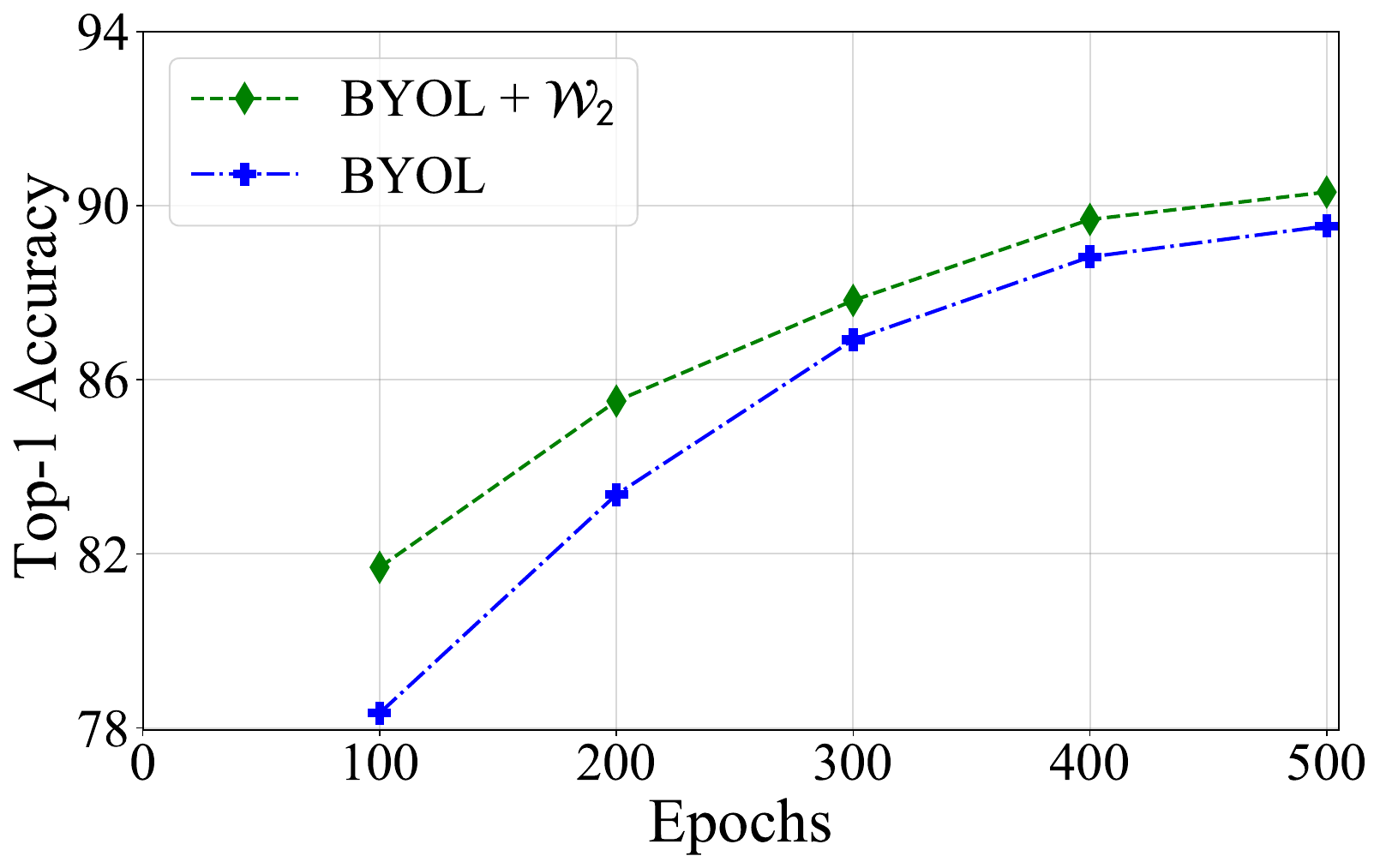}
	}
	\hspace{-0.2cm}
	\subfigure[BarlowTwins on CIFAR-10]{
		\label{fig:barlowtwins on cifar-10}
		\includegraphics[width=0.3\textwidth]{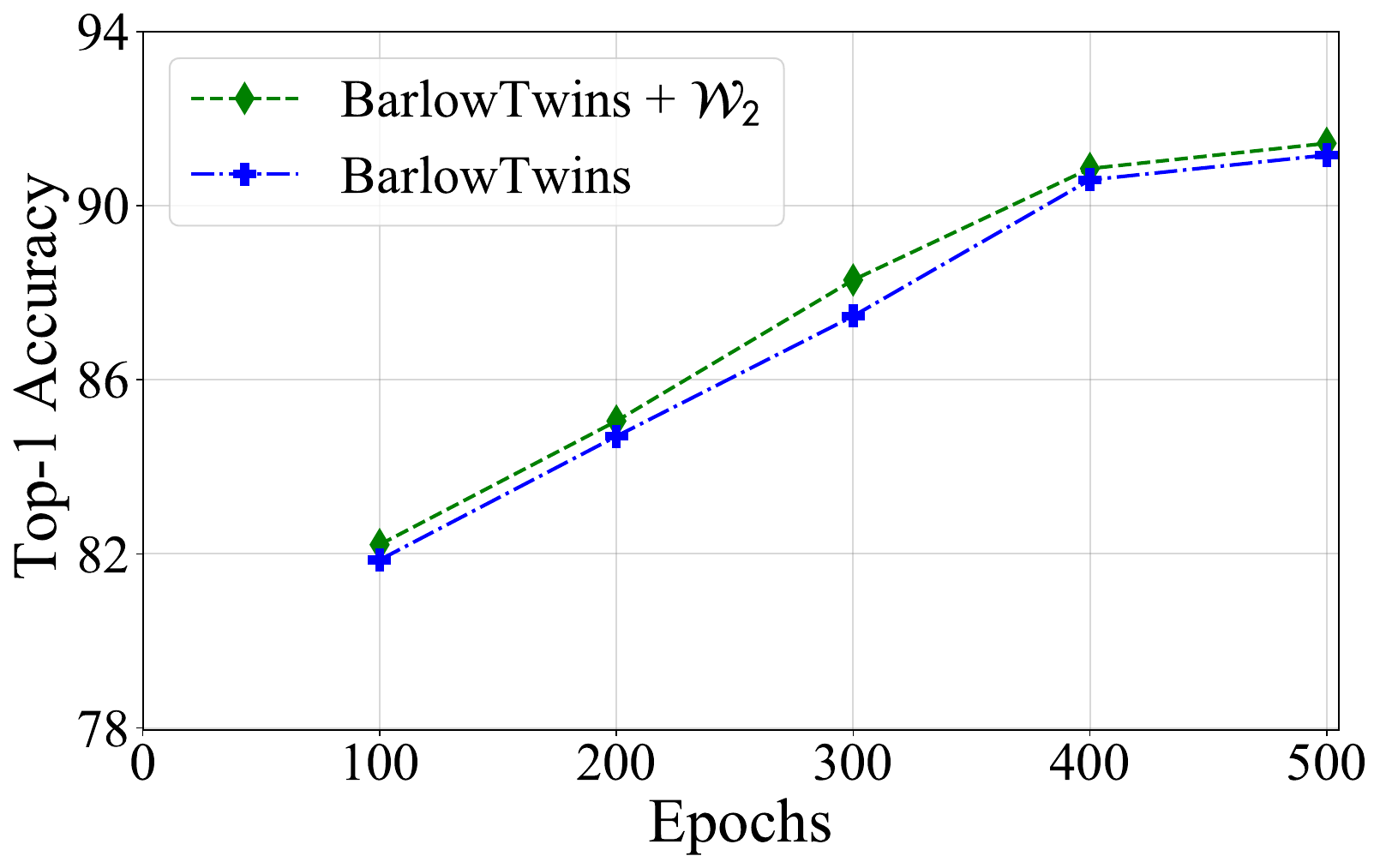}
	}
	\vspace{-2mm}
	\subfigure[MoCo v2 on CIFAR-100]{ 
		\label{fig:moco v2 on cifar-100}  
		\includegraphics[width=0.3\textwidth]{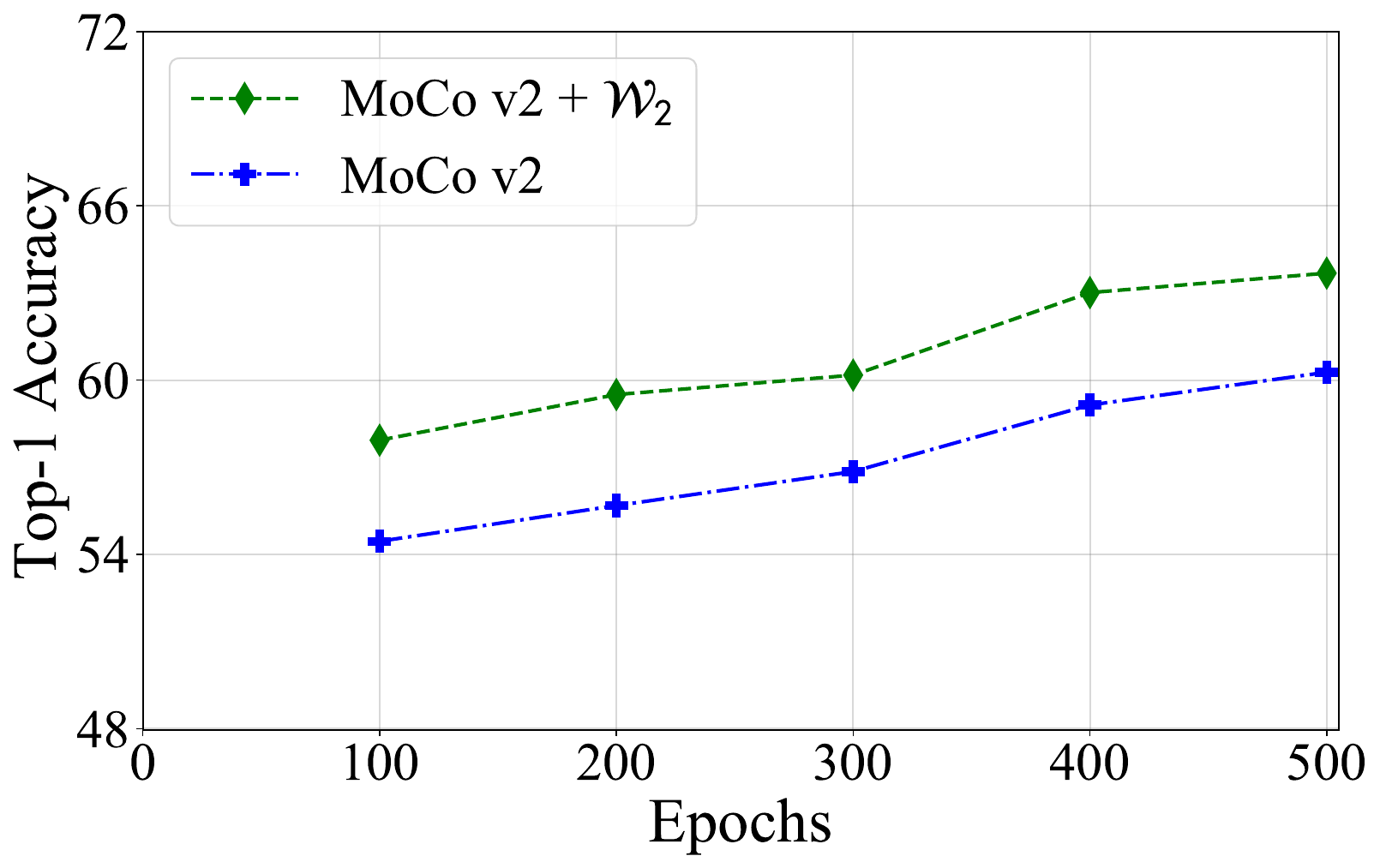}
	}
	\hspace{-0.2cm}
	\subfigure[BYOL on CIFAR-100]{
		\label{fig:byol on cifar-100}
		\includegraphics[width=0.3\textwidth]{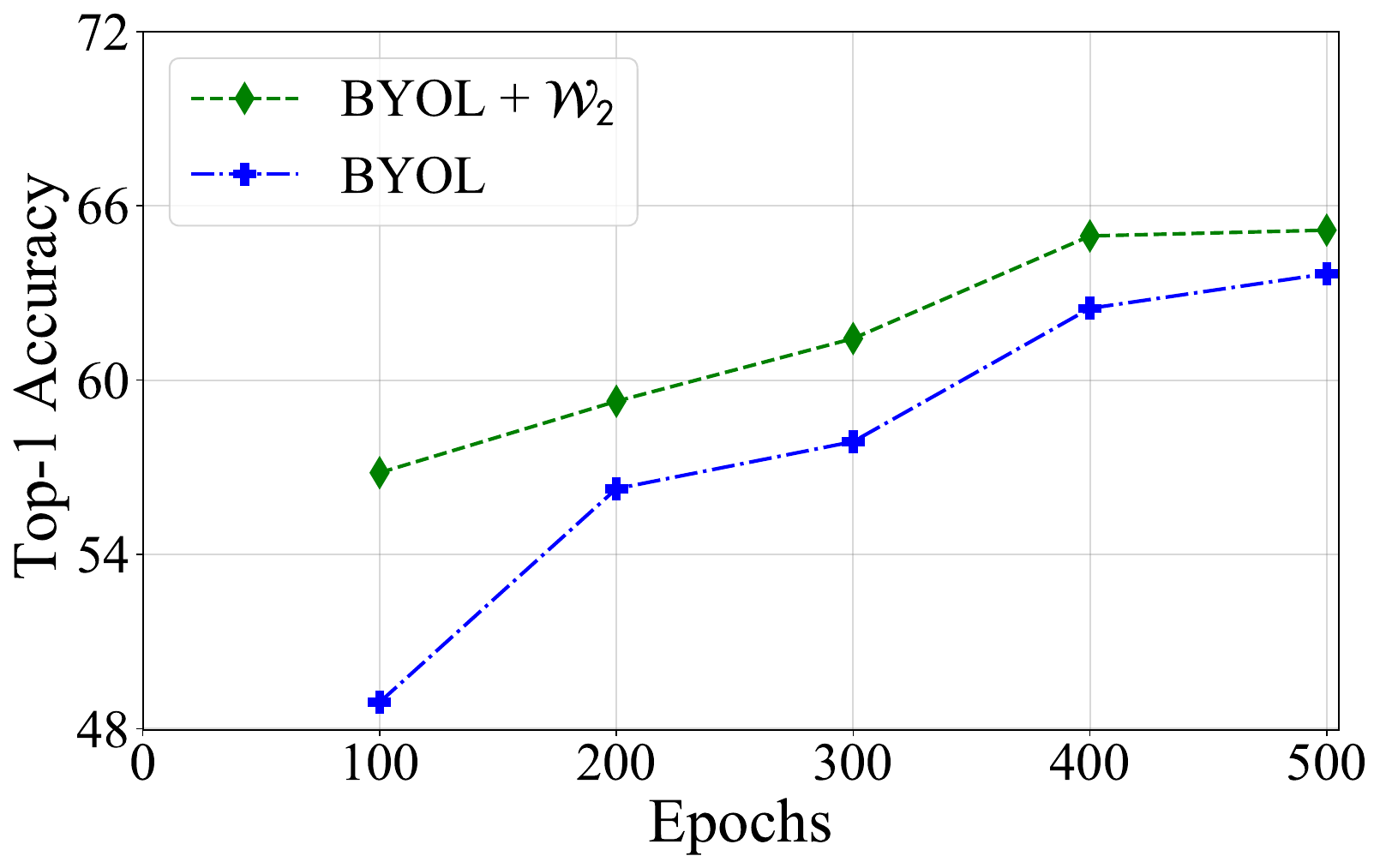}
	}
	\hspace{-0.2cm}
	\subfigure[BarlowTwins on CIFAR-100]{
		\label{fig:barlowtwins on cifar-100}
		\includegraphics[width=0.3\textwidth]{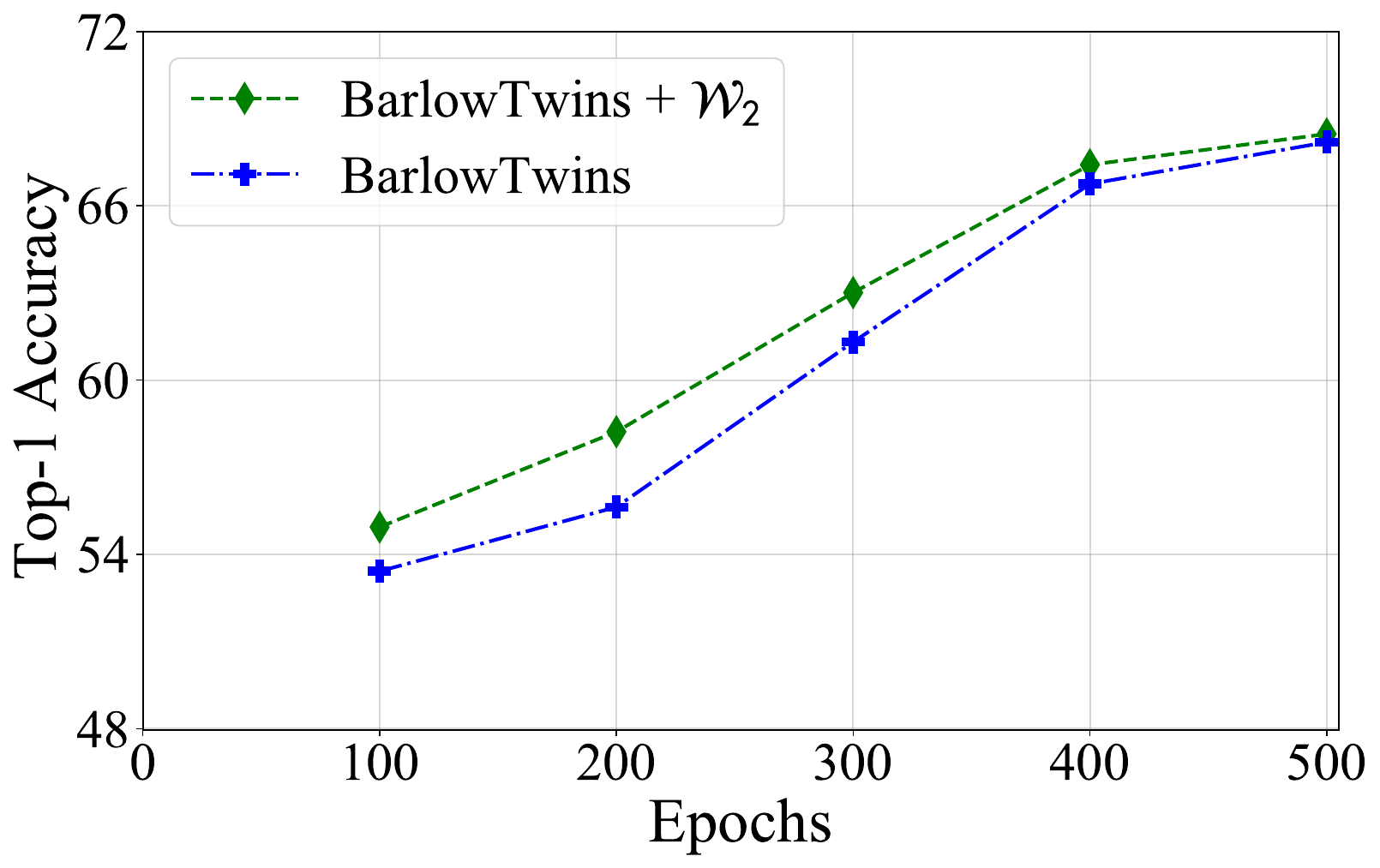}
	}
	\caption{{Convergence analysis for Top-1 accuracy during training.}}
	\label{fig:convergence}
        \vspace{-4mm}
\end{figure*}

\begin{figure*}[!h]
    \vspace{-4mm}
	\centering
	\subfigure[MoCo v2 on CIFAR-10]{ 
		\label{fig:moco v2 on cifar-10 (uniformity)}  
		\includegraphics[width=0.3\textwidth]{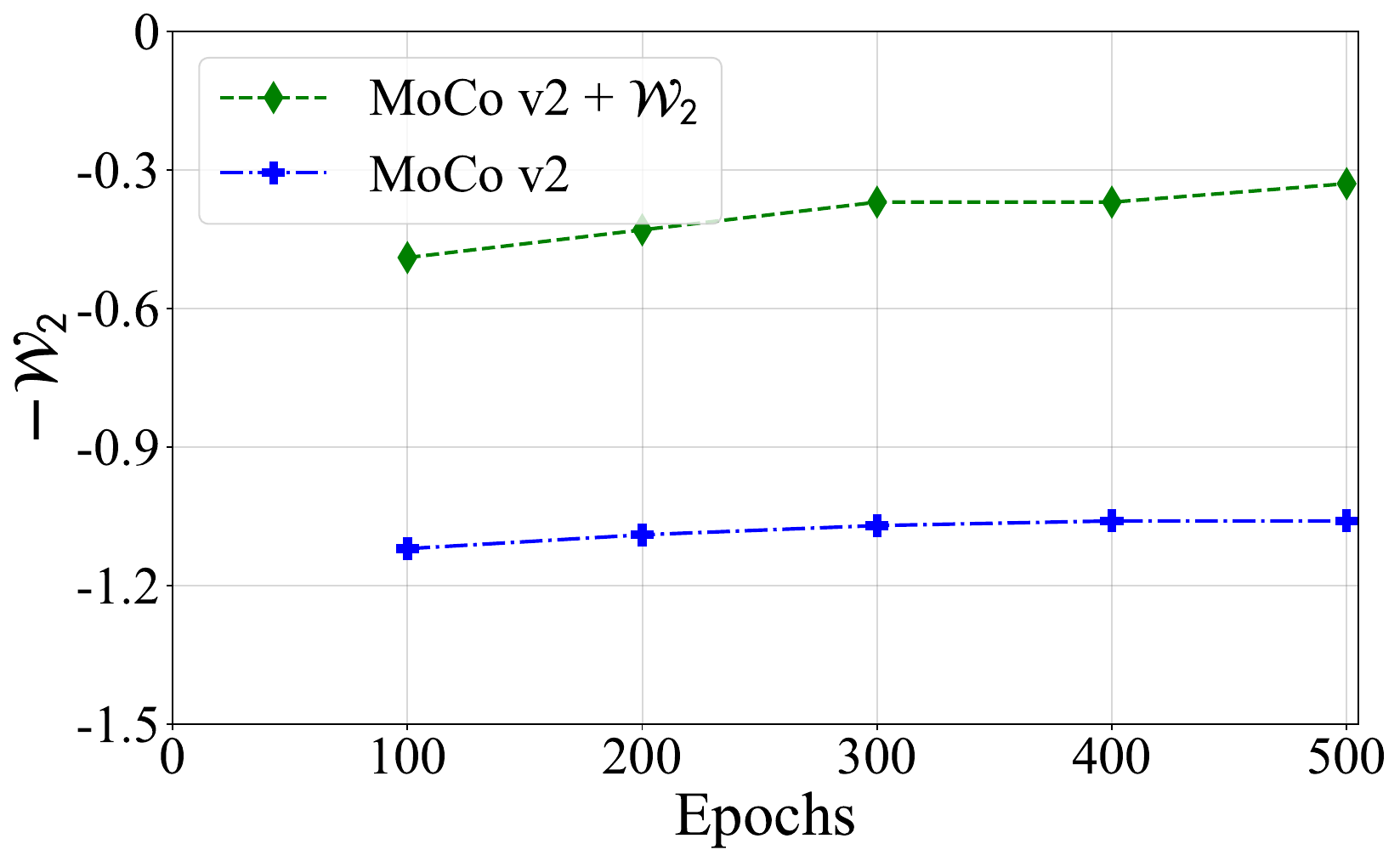}
	}
	\hspace{-0.2cm}
	\subfigure[BYOL on CIFAR-10]{
		\label{fig:byol on cifar-10 (uniformity)}
		\includegraphics[width=0.3\textwidth]{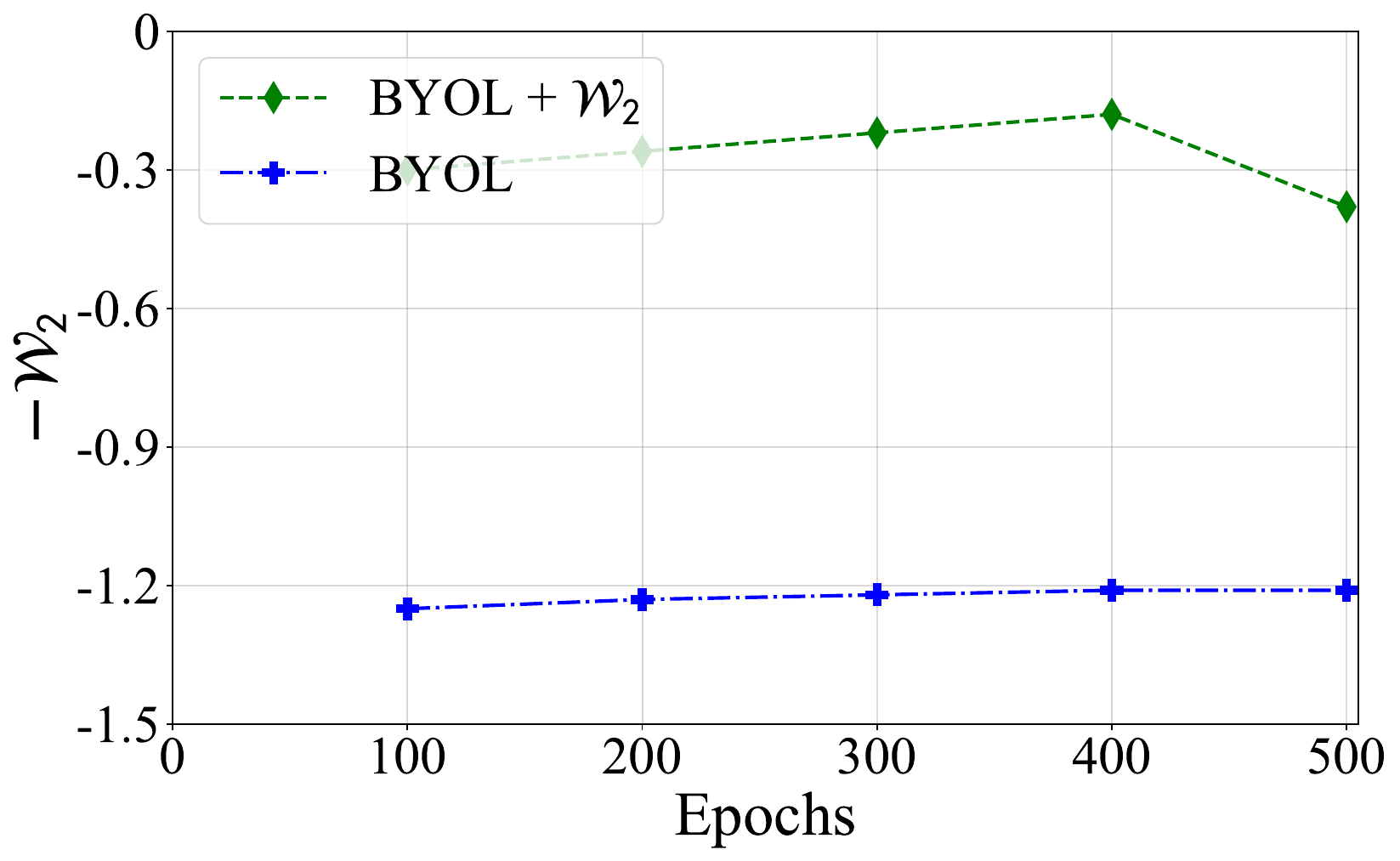}
	}
	\hspace{-0.2cm}
	\subfigure[BarlowTwins on CIFAR-10]{
		\label{fig:barlowtwins on cifar-10 (uniformity)}
		\includegraphics[width=0.3\textwidth]{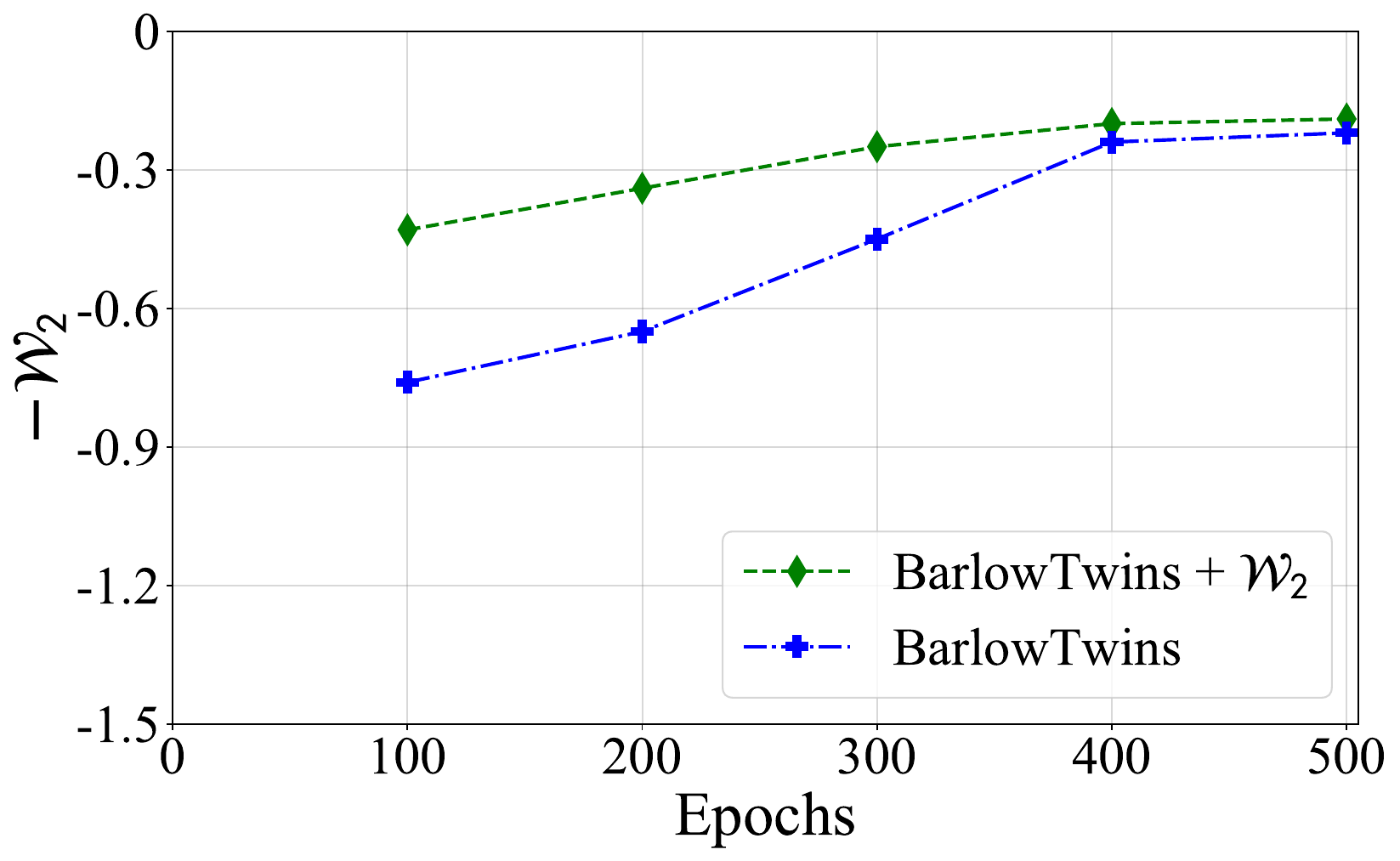}
	}
	\vspace{-4mm}
	\centering
	\subfigure[MoCo v2 on CIFAR-100]{ 
		\label{fig:moco v2 on cifar-100 (uniformity)}  
		\includegraphics[width=0.3\textwidth]{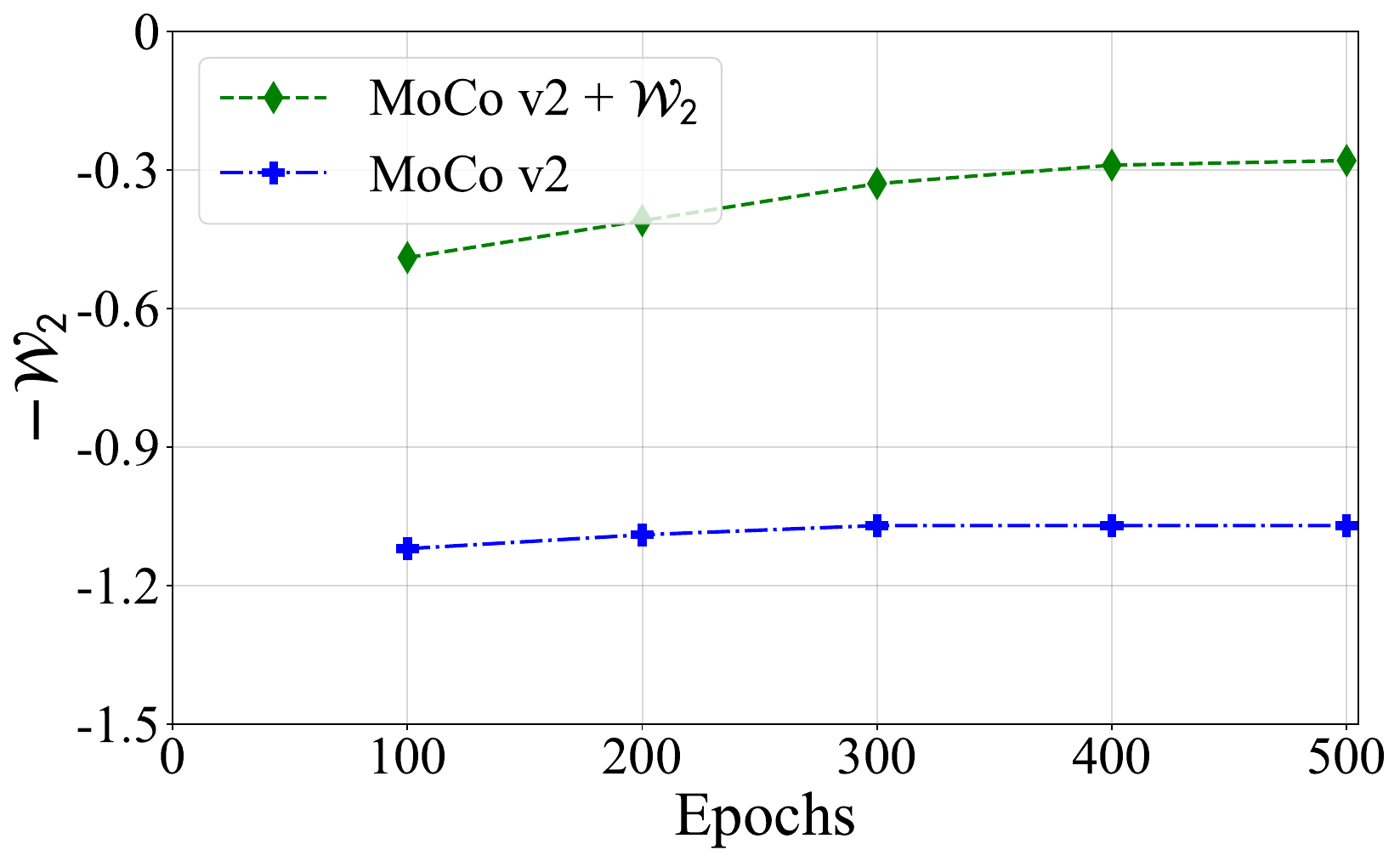}
	}
	\hspace{-0.2cm}
	\subfigure[BYOL on CIFAR-100]{
		\label{fig:byol on cifar-100 (uniformity)}
		\includegraphics[width=0.3\textwidth]{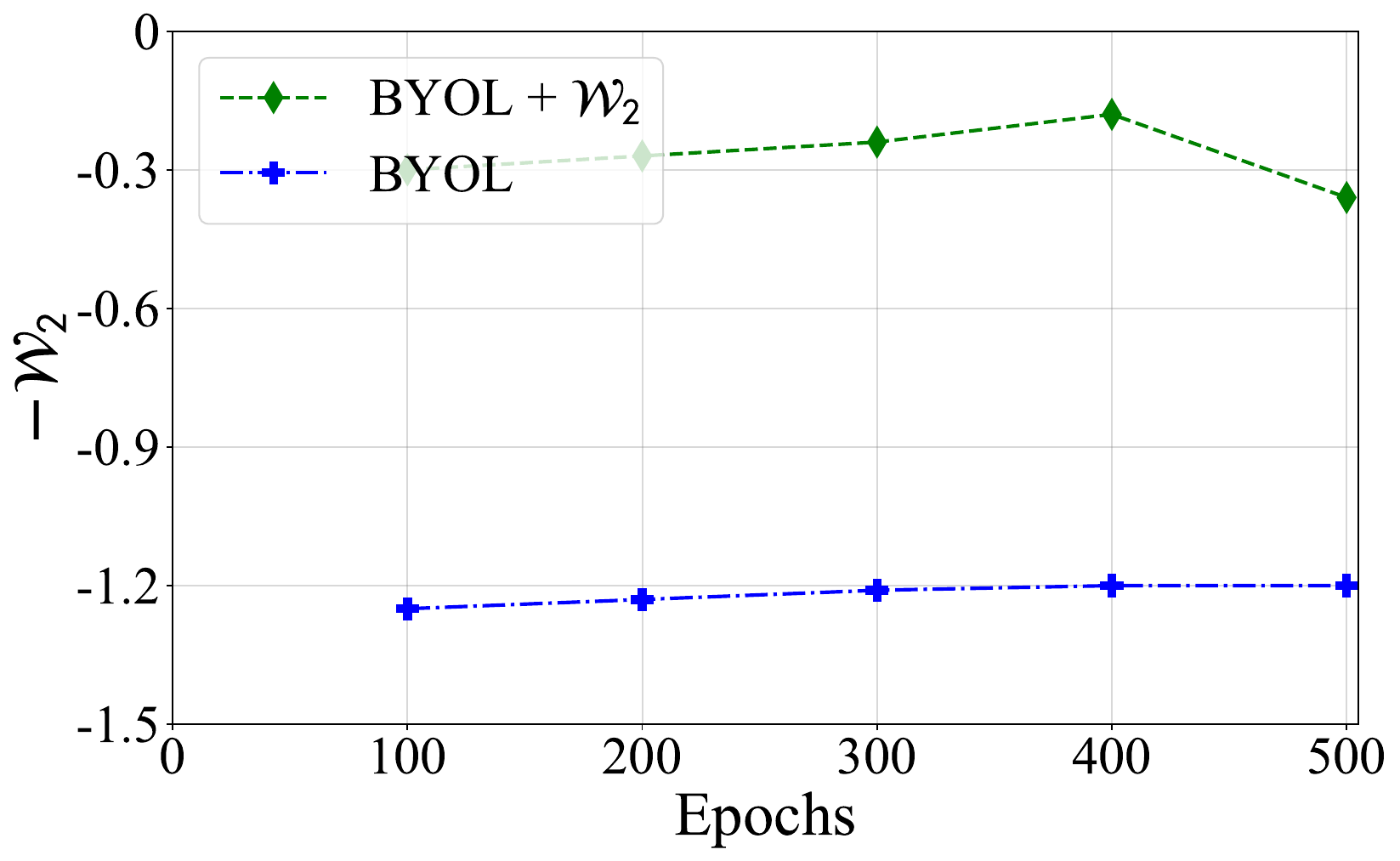}
	}
	\hspace{-0.2cm}
	\subfigure[BarlowTwins on CIFAR-100]{
		\label{fig:barlowtwins on cifar-100 (uniformity)}
		\includegraphics[width=0.3\textwidth]{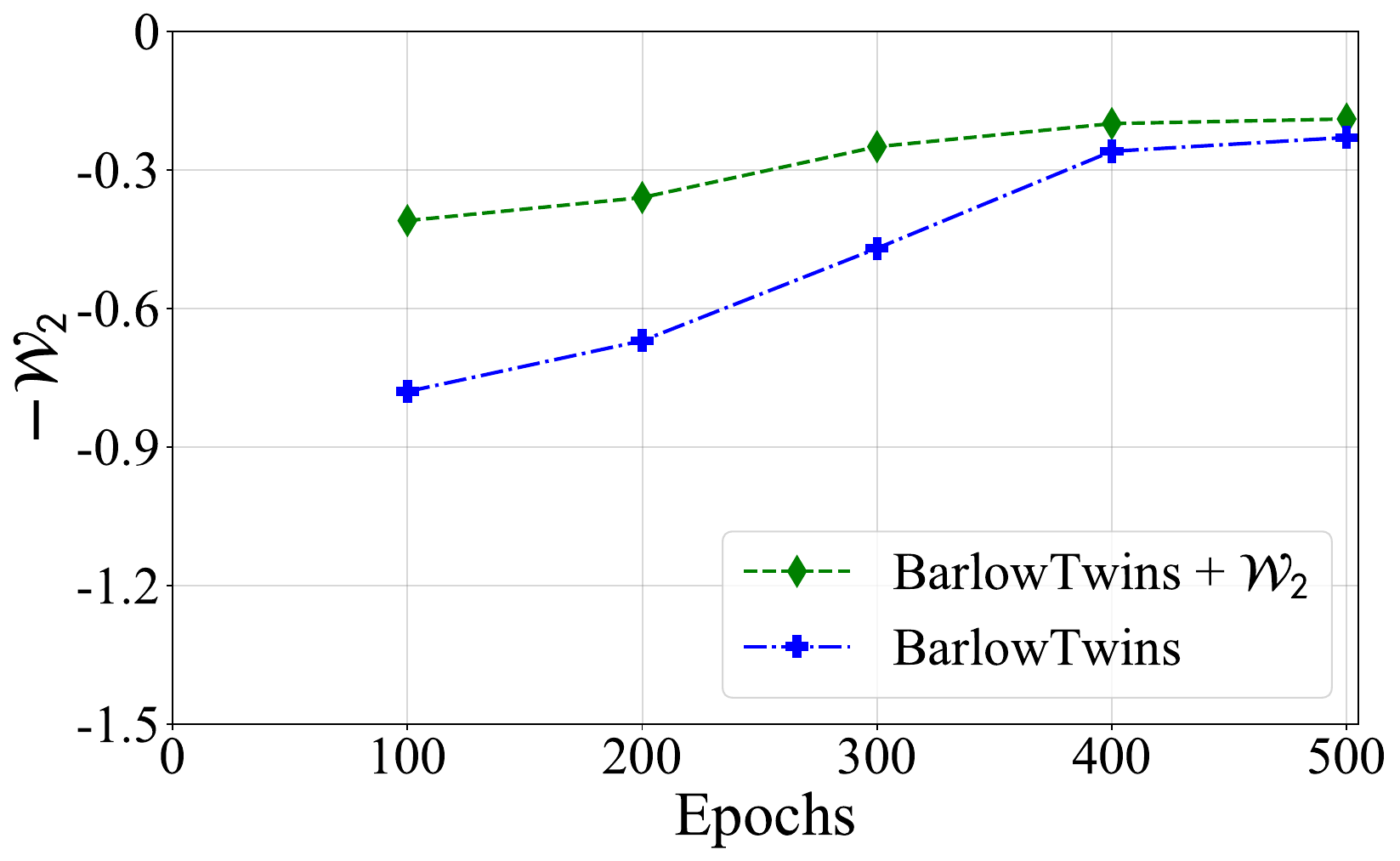}
	}
	\caption{{Visualizing  uniformity during training}}
	\label{fig:convergence on uniformity}
	\vspace{-4mm}
\end{figure*}

\begin{figure*}[!h]
    \vspace{-4mm}
	\centering
	\subfigure[MoCo v2 on CIFAR-10]{ 
		\label{fig:moco v2 on cifar-10 (alignment)}  
		\includegraphics[width=0.3\textwidth]{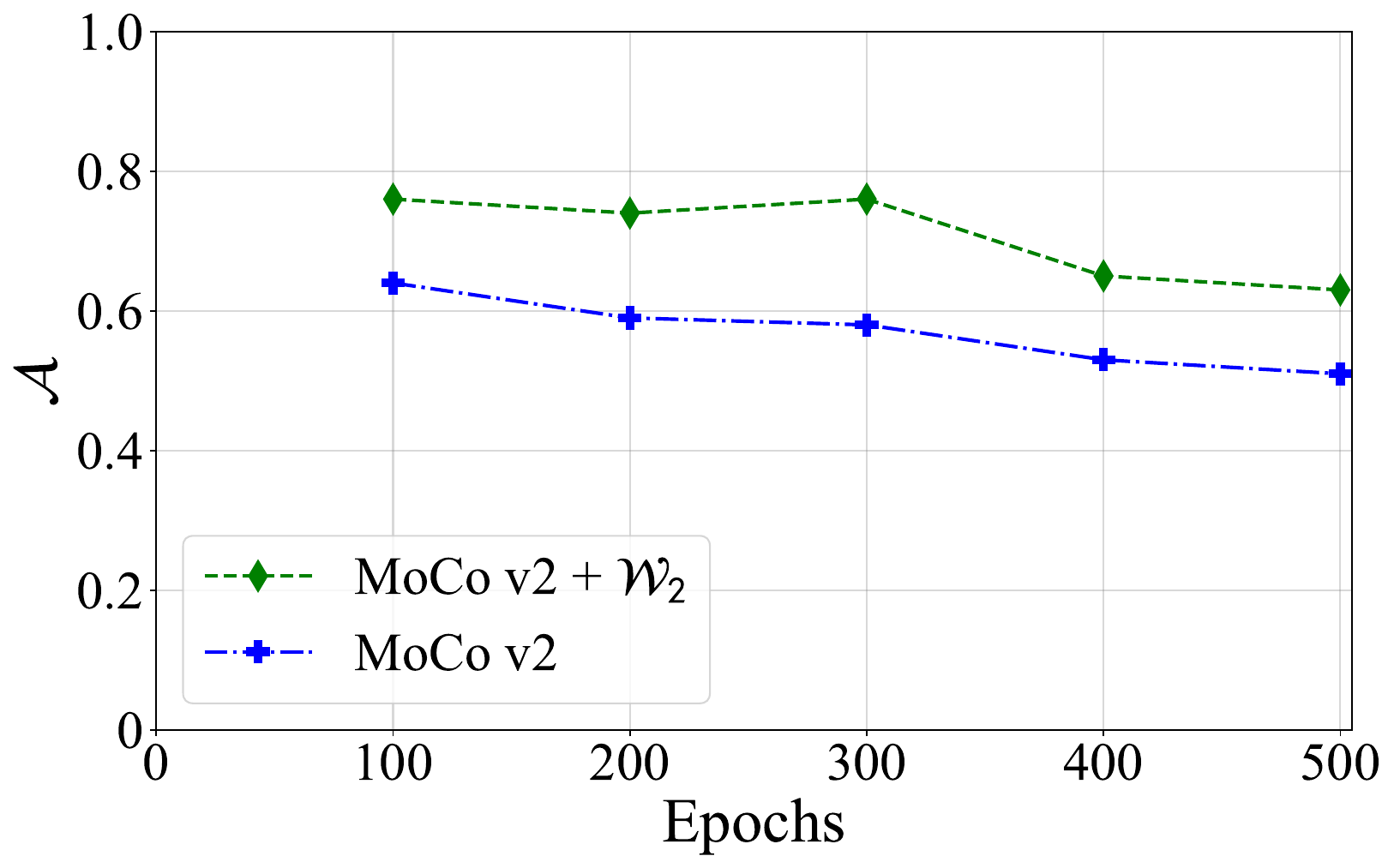}
	}
	\hspace{-0.2cm}
	\subfigure[BYOL on CIFAR-10]{
		\label{fig:byol on cifar-10 (alignment)}
		\includegraphics[width=0.3\textwidth]{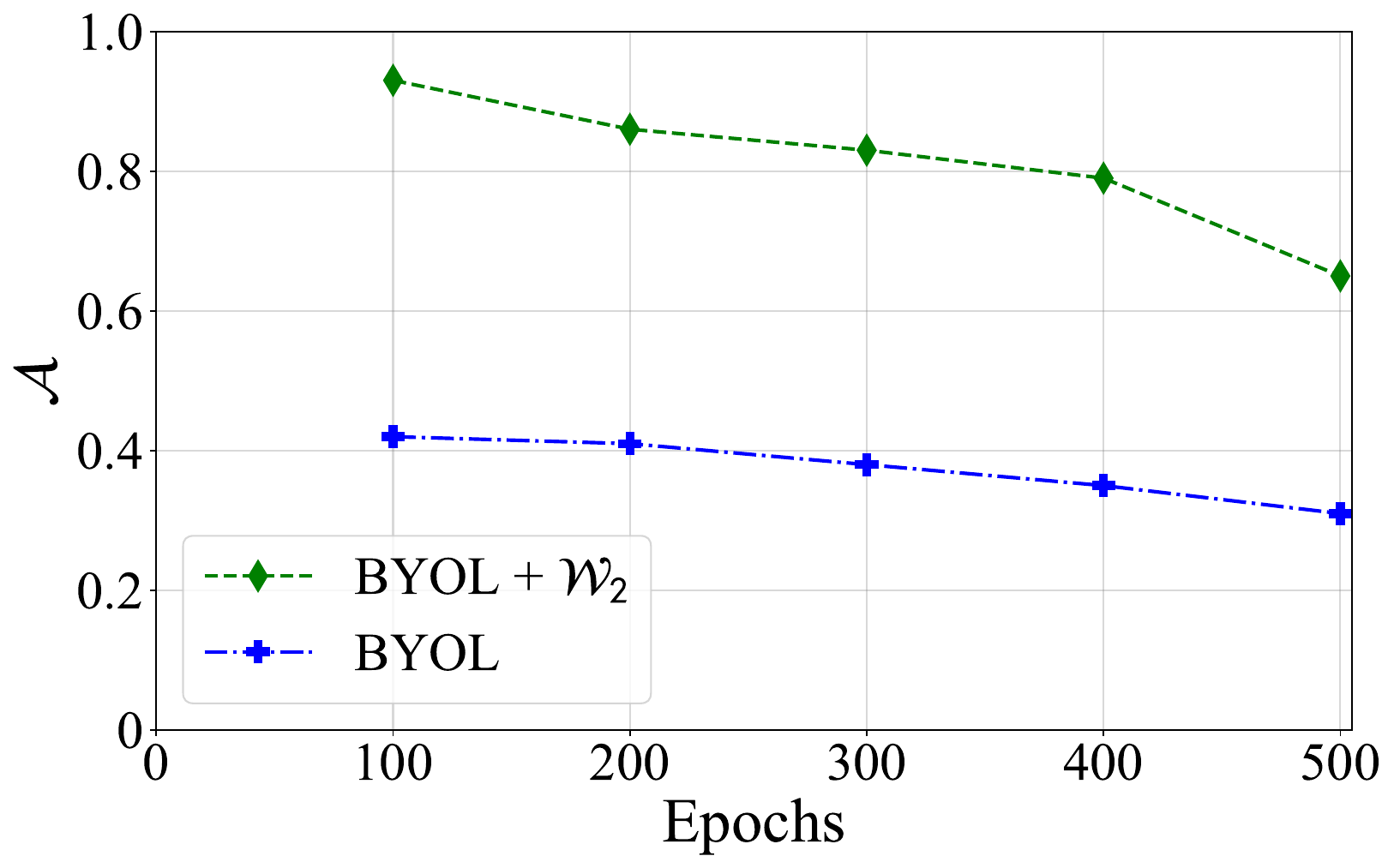}
	}
	\hspace{-0.2cm}
	\subfigure[BarlowTwins on CIFAR-10]{
		\label{fig:barlowtwins on cifar-10 (alignment)}
		\includegraphics[width=0.3\textwidth]{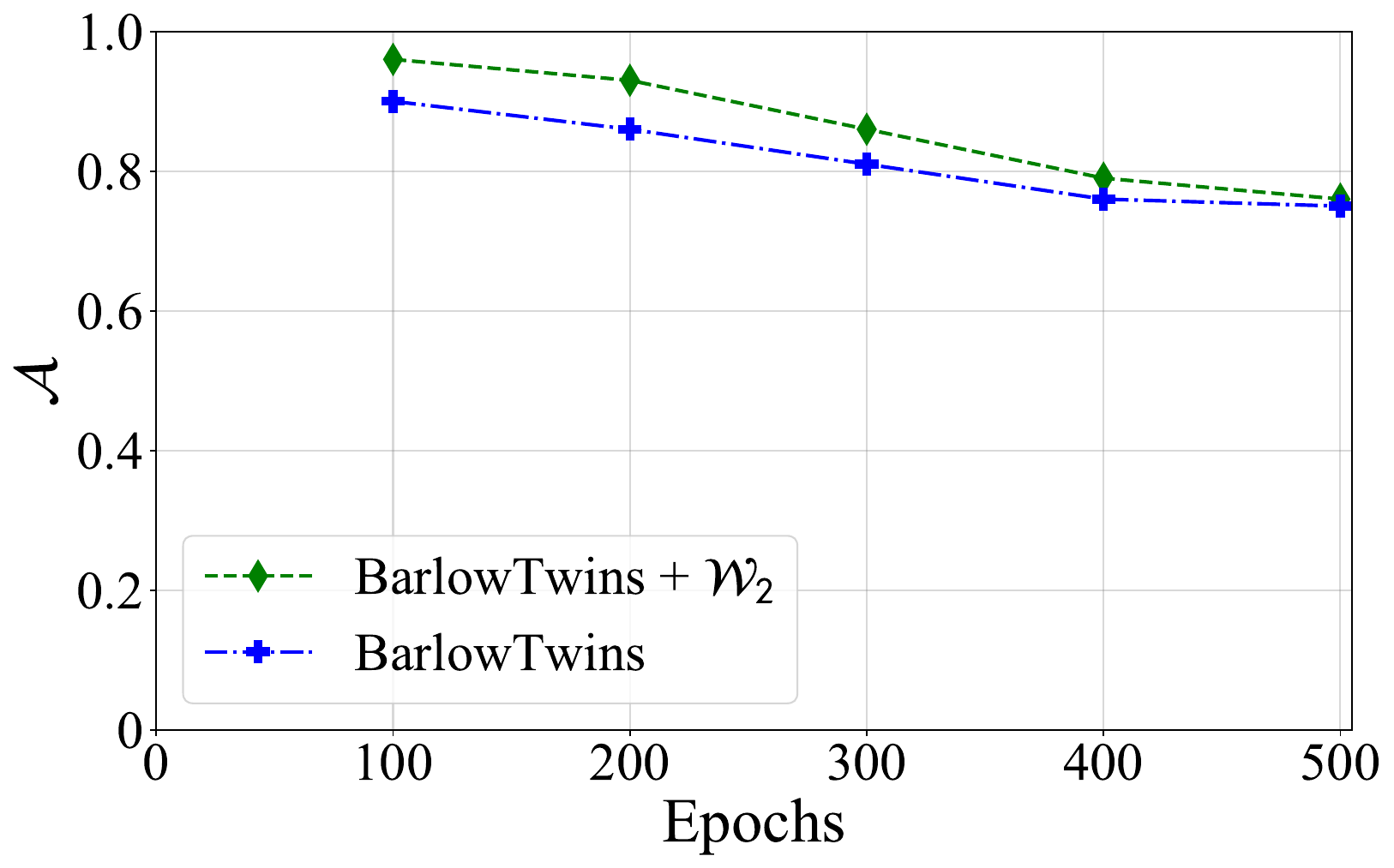}
	}
	\vspace{-4mm}
	\centering
	\subfigure[MoCo v2 on CIFAR-100]{ 
		\label{fig:moco v2 on cifar-100 (alignment)}  
		\includegraphics[width=0.3\textwidth]{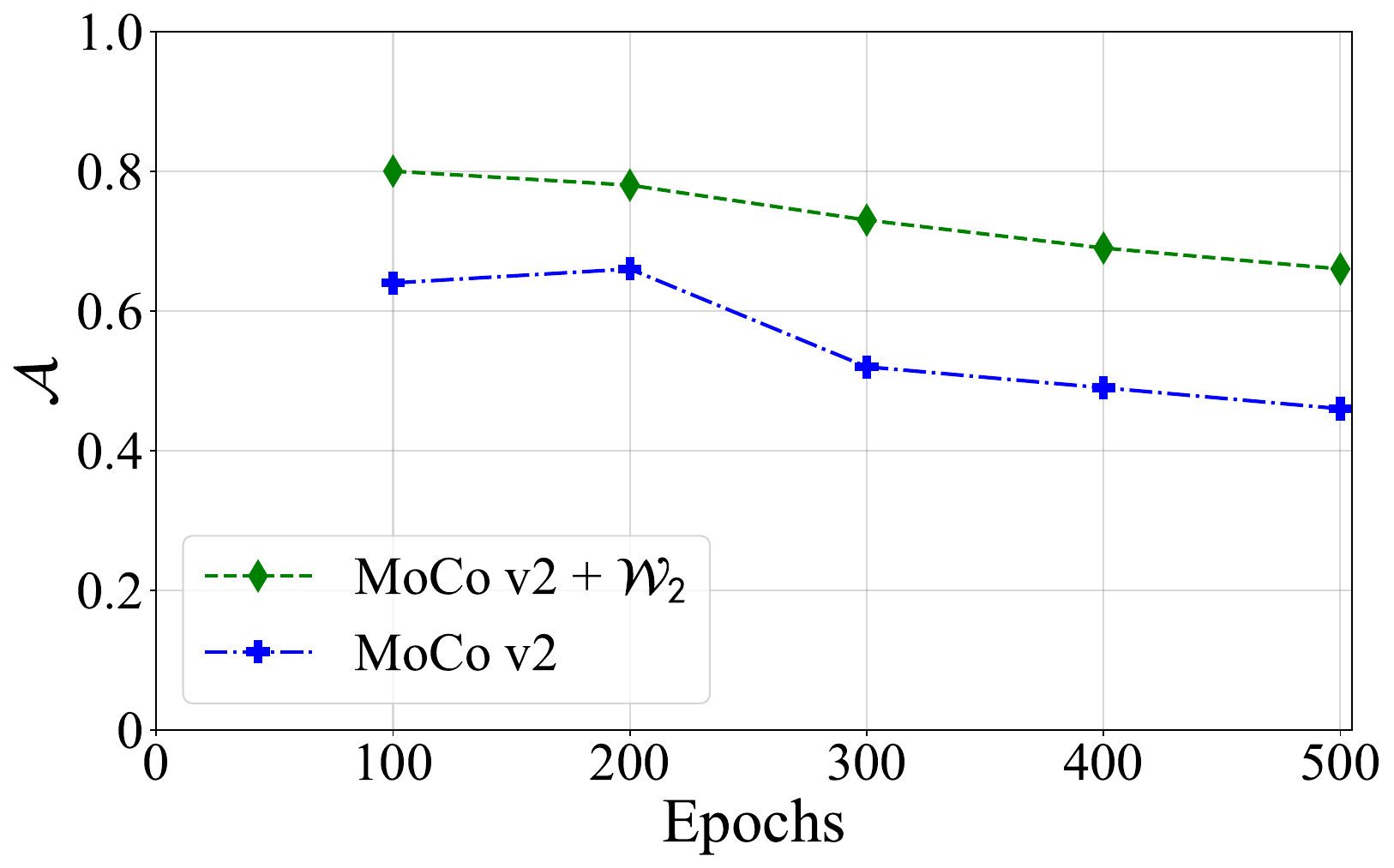}
	}
	\hspace{-0.2cm}
	\subfigure[BYOL on CIFAR-100]{
		\label{fig:byol on cifar-100 (alignment)}
		\includegraphics[width=0.3\textwidth]{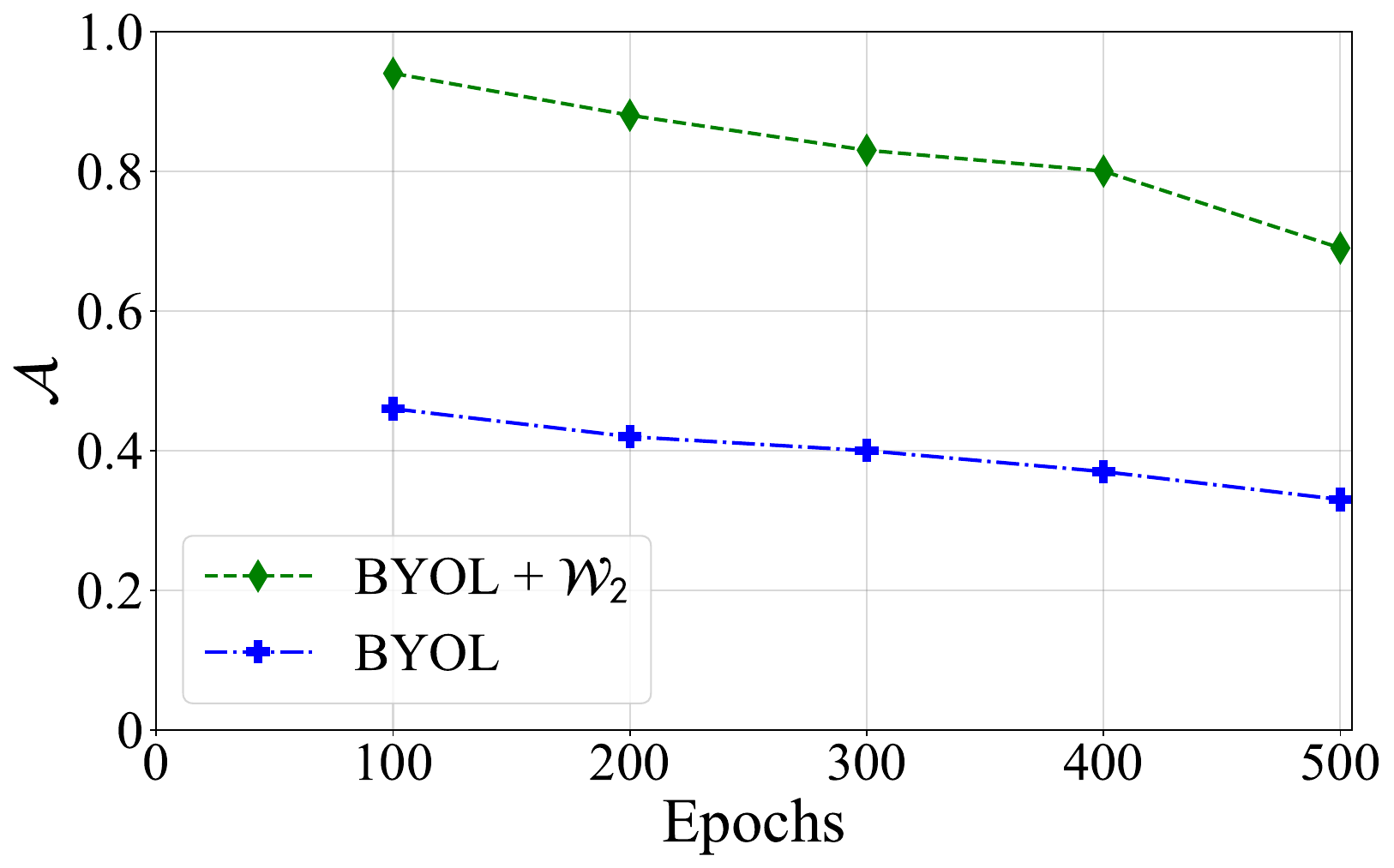}
	}
	\hspace{-0.2cm}
	\subfigure[BarlowTwins on CIFAR-100]{
		\label{fig:barlowtwins on cifar-100 (alignment)}
		\includegraphics[width=0.3\textwidth]{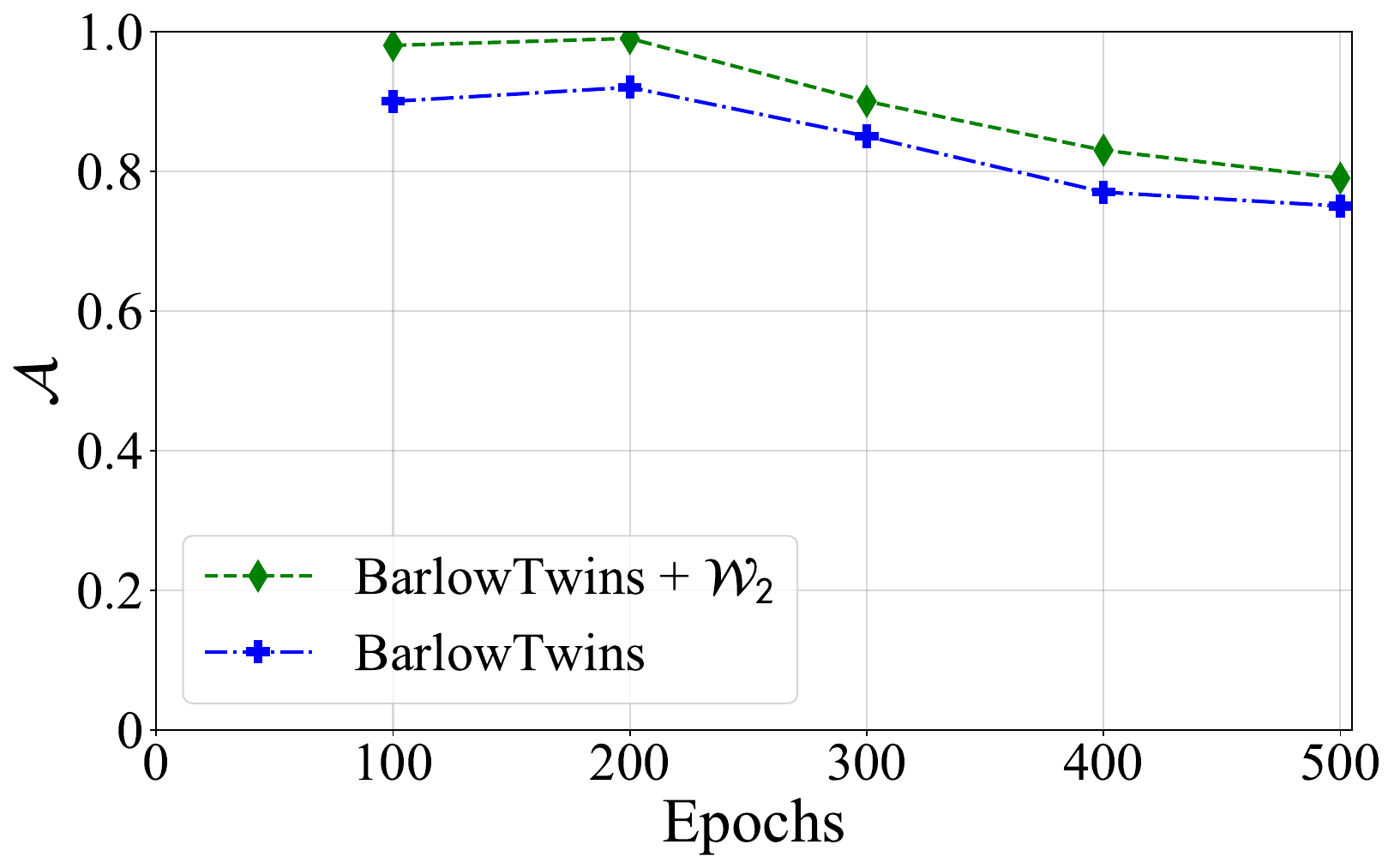}
	}
	\caption{{Visualizing alignment during training.}}
	\label{fig:convergence on alignment}
        \vspace{-4mm}
\end{figure*}

\end{document}